\documentclass[conference]{IEEEtran}



\usepackage{amsmath,amsthm,amsfonts}
\usepackage{amsmath}
\usepackage{algorithmic}
\usepackage[ruled, lined, vlined, linesnumbered]{algorithm2e}
\usepackage{array}
\usepackage{booktabs}
\usepackage{epstopdf}
\usepackage{rotating}
\usepackage{caption}
\usepackage{subcaption}
\usepackage{tabularx}
\usepackage{threeparttable}
\usepackage{multirow}
\usepackage{multicol}
\usepackage{nccmath}
\usepackage{xcolor}
\usepackage{xspace}
\usepackage{pifont}
\usepackage{graphicx}  
\usepackage{epstopdf}
\usepackage{graphicx}






\hyphenation{op-tical net-works semi-conduc-tor}

\newtheorem{theorem}{Theorem}
\newtheorem{lemma}{Lemma}

\newtheorem{definition}{Definition}
\newtheorem{example}{Example}

\newcommand{\aka}{\emph{a.k.a.}\xspace}
\newcommand{\etal}{\emph{et~al.}\xspace}
\newcommand{\eg}{\emph{e.g.},\xspace}
\newcommand{\ie}{\emph{i.e.},\xspace}
\newcommand{\etc}{etc.\xspace}

\newcommand\figref[1]{\figurename~\ref{#1}}

\newcommand\tabref[1]{Table~\ref{#1}}

\newcommand\secref[1]{Sec.~\ref{#1}}
\newcommand\equref[1]{Eq.~(\ref{#1})}

\newcommand\algref[1]{Alg.~\ref{#1}}
\newcommand\defref[1]{Def.~\ref{#1}}
\newcommand\expref[1]{{Example~\ref{#1}}}
\newcommand\lemref[1]{Lemma~\ref{#1}}
\newcommand\thmref[1]{Theorem~\ref{#1}}

\newcommand{\fakeparagraph}[1]{\vspace{1mm}\noindent\textbf{#1.}}

\definecolor{dc1}{HTML}{B4FFA4}
\definecolor{dc2}{HTML}{FBFB0C}
\definecolor{dc3}{HTML}{F57171}
\definecolor{dc4}{HTML}{F7F6F6}

\ifodd 0
\newcommand{\rev}[1]{{\color{blue}#1}} 

\else
\newcommand{\rev}[1]{#1}

\fi

\ifCLASSOPTIONcompsoc
  \usepackage[nocompress]{cite}
\else
  \usepackage{cite}
\fi

\begin{document}

\title{Efficient Data Valuation Approximation in Federated Learning: A Sampling-based Approach}


\author{
        Shuyue Wei$^{1}$, Yongxin Tong$^{1}$, Zimu Zhou$^{2}$, Tianran He$^{1}$, Yi Xu$^{1}$ \\
    
    
        $^{1}$ State Key Laboratory of Complex \& Critical Software Environment Lab, School of Computer Science and Engineering, \\
        Beijing Advanced Innovation Center for Future Blockchain and Privacy Computing, Beihang University, China \\
        $^{2}$ City University of Hong Kong, Hong Kong, China \\
        $^{1}$\{weishuyue, yxtong, hetianran, xuy\}@buaa.edu.cn, \quad 
         $^{2}$zimuzhou@cityu.edu.hk
}


    



\maketitle

\begin{abstract}\label{sec:abstract}
    Federated learning (FL) has emerged as a prominent distributed learning paradigm to utilize datasets across multiple data providers.
    In FL, cross-silo data providers often hesitate to share their high-quality dataset unless their data value can be fairly assessed.
    Shapley value (SV) has been advocated as the standard metric for data valuation in FL due to its desirable properties.
    However, the computational overhead of SV is prohibitive in practice, as it inherently requires training and evaluating an FL model across an exponential number of dataset combinations.
    Furthermore, existing solutions fail to achieve high accuracy and efficiency, making practical use of SV still out of reach, because they ignore choosing suitable computation scheme for approximation framework and overlook the property of utility function in FL.
    \rev{We first propose a unified stratified-sampling framework for two widely-used schemes.
    Then, we analyze and choose the more promising scheme under the FL linear regression assumption.
    After that,} we identify a phenomenon termed {key combinations}, where only limited dataset combinations have a high-impact on final data value.
    Building on these insights, we propose a practical approximation algorithm, \textsl{IPSS}, which strategically selects high-impact dataset combinations rather than evaluating all possible combinations, thus substantially reducing time cost with minor approximation error.
    Furthermore, we conduct extensive evaluations on the FL benchmark datasets to demonstrate that our proposed algorithm outperforms a series of representative baselines in terms of efficiency and effectiveness. 

\end{abstract}


\section{Introduction} \label{sec:intro}

In recent years, \textit{federated learning} (FL) has gained increasing attention in both academia and industry, as it provides a novel solution to utilize datasets across multiple data-rich entities without directly accessing raw data \cite{FL_survey_long, yang2019federated, CURS24_FL_Survey, TKDE23_FL_Survey, McMahanMRHA17}.
In FL, cross-silo data providers may be reluctant to share high-quality datasets unless the value of their datasets are fairly measured, ensuring they receive appropriate compensation \cite{Bigdata19_SV, ICDE22_SV_Wang, VLDB23_SVF}.
Therefore, data valuation is a fundamental problem in FL, as it is the prerequisite for motivating multiple data providers to contribute, thereby serving as a crucial component for the sustainability of the FL ecosystem.

The \textit{Shapley value} (SV), a classical concept for measuring the player's contribution in cooperation, has been considered as the standard data valuation metric for FL in prior work \cite{Bigdata19_SV, FLIP20_Wei, FLIP20_FSV, vldb19shapley, ICDE22_SV_Wang, TIST22_GTG, VLDB23_SVF, VLDB24_Shapley}, as it uniquely satisfies several basic fairness properties~\cite{myerson2013game} (\eg \textit{no-free-riders}, \textit{symmetric fairness} and \textit{linear additivity}).
However, the SV-based data valuation is widely-acknowledged computationally prohibitive due to its intrinsic combinatorial nature~\cite{Survey_SV_DB, Survey_SV_ML}, \ie we have to train and evaluate FL models across an exponential number of combinations of datasets, as the toy example in \figref{fig:motivation}(a).

\begin{figure}[t]
        \centering
        \includegraphics[width=\linewidth]{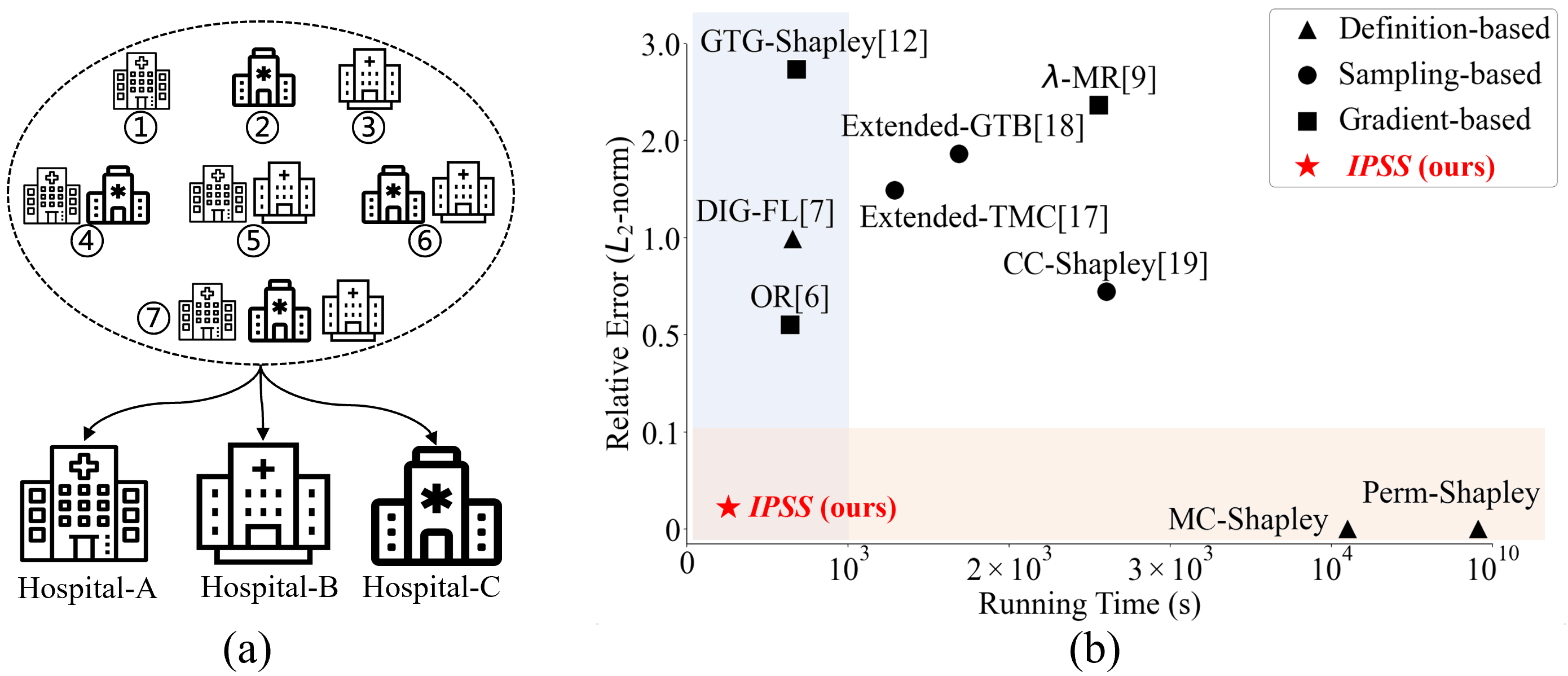}
        \vspace{-0.5em}
        \caption{
        {\small (a):Three hospitals collaborate to train the FL model and aim to identify each hospital's data value. 
        The SV-based data valuation requires training and evaluating FL models across all possible hospital combinations (\ding{172}$\sim$\ding{178}), \ie it needs to tackle seven FL processes.
        As client number increases, the number of required combinations grows exponentially. }
        {\small (b):Evaluations on the FL benchmark dataset FEMNIST with ten FL clients indicate that the existing solutions fail to achieve both high effectiveness and efficiency simultaneously.}
        }
        \label{fig:motivation}
        \vspace{-1.8em}
\end{figure}

    The SV-based data valuation has attracted extensive research interest from both the database and the data mining communities, where researchers prioritize efficiency as a central issue and devise a line of approximation algorithms~\cite{Bigdata19_SV, FLIP20_Wei, vldb19shapley,  FLIP20_FSV, TIST22_GTG, ICDE22_SV_Wang, VLDB23_SVF}.
    Existing solutions can primarily be divided into two categories.
    \textit{(i) The first category is the gradient-based approximation} ~\cite{Bigdata19_SV, FLIP20_Wei, TIST22_GTG}, which utilizes gradients in the training process to construct the FL models, that are required to be evaluated in data valuation, thereby avoiding extra FL training processes.
    Though these solutions provide notable computational efficiency, their lack of accuracy guarantees diminishes their practicality (as in \figref{fig:motivation}(b)).
    \textit{(ii) The second category is the sampling-based approximation}~\cite{icml19shapley, aistats19shapley, SIGMOD23_SV_Zhang}, which only trains and evaluates FL models under a few number of dataset combinations chosen from all possible combinations to estimate the data value.
    Recently, more studies~\cite{Survey_SV_ML, Survey_SV_DB, icml19shapley, aistats19shapley, SIGMOD23_SV_Zhang} have advocated for sampling-based approaches as they provide a flexible trade-off between accuracy and efficiency.
    However, prior sampling-based approximations still fail to achieve both high efficiency and accuracy simultaneously in FL, as shown in \figref{fig:motivation}(b), primarily due to the following two limitations. 

    \textit{- Limitation 1}: \textit{Ignoring selecting the suitable computation scheme of Shapley value}.
    There are two commonly used equivalent expressions of the Shapley value, including the marginal-contribution-based (\textsl{\small MC-SV}) and the complementary-contribution-based (\textsl{\small CC-SV}), each provides a computation scheme by its definition.
    Choosing a more suitable scheme for the approximation framework is often overlooked.
    As shown in \secref{sec:framework_comp_scheme}, taking the \textsl{\small MC-SV} for our sampling framework can yield a lower variance in the approximation.

    \textit{ - Limitation 2}: \textit{Ignoring utilizing intrinsic properties of utility function in federated learning}.
    {
        In SV-based data valuation within FL, we usually set the utility function to model accuracy, which is different from that in traditional game theory.
        For example, the utility function in \textit{weighted majority game} exhibits a binary jump, making it \#$\mathcal{P}$-hard~\cite{MOR94_SV} in this scenario.
        In contrast, the utility in FL (\eg model accuracy) typically exhibits a monotonic property as more clients join, providing it with unique features for data valuation.
    }

    To address these limitations, we propose an efficient and effective sampling-based approximation algorithm, \textit{Importance-Pruned Stratified Sampling (\textsl{IPSS})}. 
    Specifically, we first propose a unified stratified sampling framework, which can seamlessly integrate both the \textsl{\small MC-SV}-based and the \textsl{\small CC-SV}-based computation schemes.
    \rev{Then, we compare two computation schemes consistently in this framework under the assumption of FL linear regression model and then choose the \textsl{\small MC-SV} for further investigation.}
    We also identify a phenomenon in data valuation for FL through observations and empirical studies on utility function, referred to as \textit{key combinations}, which shows that different dataset combinations in FL have varying impacts on the final data value.
    Finally, we propose an approximation algorithm, \textsl{IPSS}, which prunes data combinations with minimal impacts from all possible combinations, significantly reducing time costs while maintaining accuracy.
    Main contributions of this work are summarized as follows.
    \begin{itemize}
        \item We propose a unified stratified sampling based framework that can integrate with both the \textsl{\small MC-SV}-based and the \textsl{\small CC-SV}-based computation schemes and then compare them to select the most promising one for further study.
        \item We identify a phenomenon called \textit{key combinations}, \ie data valuation in FL can be approximated accurately by utilizing only a small group of dataset combinations instead of all possible exponential combinations.
        \item We devise an efficient and effective algorithm, \textsl{IPSS}, tailored for the \textsl{\small MC-SV}-based data valuation in FL and then analyze its approximation error and time complexity.
        \item We conduct extensive experiments and compare our proposed \textsl{IPSS} algorithm with a series of baselines to validate its superiority in time cost and approximation error on both synthetic and benchmark datasets in FL.
\end{itemize}

In the rest of this paper, we first introduce the basic concepts of the data valuation in FL in \secref{sec:preliminary}.
Then, we propose a sampling-based framework and compare computation schemes in \secref{sec:framework}.
In \secref{sec:ps_alg}, we identify the key combinations phenomenon and introduce the proposed \textsl{IPSS} algorithm.
Finally, we present experimental evaluations in \secref{sec:exp}, review the related work in \secref{sec:related}, and conclude this work in \secref{sec:conclusion}.

\section{Problem Statement}\label{sec:preliminary}
In this section, we first present basic concepts of the \textit{federated learning} (FL) and the \textit{data valuation} problem in the context of FL.
Then, we introduce two equivalent computation schemes for the \textit{Shapley value} (SV) based data valuation.
\subsection{Preliminary and Problem Definition}

\begin{definition}[\textit{\textbf{Federated learning, FL}}]
    FL is a distributed learning paradigm that enables multiple data providers to utilize massive training data for the data-driven tasks without directly accessing raw data.\cite{yang2019federated, FL_survey_long, TKDE23_FL_Survey, CURS24_FL_Survey}.
    In FL, there are $n$ data owners (a.k.a. FL clients), each with a dataset $\mathcal{D}_i$, and a coordinator (a.k.a. FL server) and they aim to jointly train a learning model $M_{N}(\mathcal{A})$ across datasets from all clients through a FL algorithm $\mathcal{A}$, where $N=\{1,\dots, n\}$. 
\end{definition}

We take the most widely-used FL algorithm, \textit{FedAvg}~\cite{McMahanMRHA17}, as an example to illustrate the main process of FL.
A FL algorithm $\mathcal{A}$ operates iteratively at the FL server and clients as follows.
\textit{(i) Acts at server}:
In the first iteration, the FL server initializes and distributes the global model to all clients.
Otherwise, the FL server obtains a new global model by aggregating all local models in previous iteration from the clients.
\textit{(ii) Acts at clients}:
Take the client $i$ as an example. 
The client $i$ trains the received model on its local dataset $\mathcal{D}_{i}$ and then uploads an updated local model to the FL server.
The FL algorithm executes above two steps alternately until the required converge criterion or training round is reached.

    \begin{definition} [\textit{\textbf{Data valuation for FL}}] \label{def:dv4fl}
    Given $n$ datasets ${\mathcal{D}_{N}=\{\mathcal{D}_{1},\dots,\mathcal{D}_{n}\}}$ and a FL algorithm ${\mathcal{A}}$, the federation trains model ${M_{S}(\mathcal{A})}$ (or simply ${M_{S}}$) under a subset of clients $S\subseteq N$ and evaluates its utility as ${U(M_{S})}$ on the test dataset ${\mathcal{T}}$, where utility function ${U(\cdot)}$ is defined as model performance (e.g., accuracy).    
    Then, data valuation problem aims to quantify contribution of each dataset $\mathcal{D}_{i}$ as ${\phi(\mathcal{A},\mathcal{D}_{N},\mathcal{T}, \mathcal{D}_i)}$ (${\phi_{i}}$ for short), satisfying the following three basic properties,    
    \vspace{-0.5em}
    \end{definition}
    \begin{itemize}
    \item  \noindent\textit{(i) null-player (or no-free-riders)}: 
    If a dataset $\medop{\mathcal{D}_j}$ is irrelevant to FL model $\medop{M_{S}(\mathcal{A})}$ on test dataset $\medop{\mathcal{T}}$ for any dataset combination $\medop{\mathcal{D}_{S}}$, \rev{$\medop{\phi_{j}}$} should be zero.
    Formally,
    {
        \setlength{\abovedisplayskip}{5pt}
        \setlength{\belowdisplayskip}{5pt}
        \begin{equation}
           \medop
           {
           \forall S \subseteq N, U(M_S) = U(M_{S\cup \{j\}}) \Rightarrow \phi_j=0,
           }
        \end{equation}
    }
    \item   \textit{(ii) symmetric-fairness:}
    If two datasets $\medop{\mathcal{D}_i}$ and $\medop{\mathcal{D}_j}$ have the same effect on FL model $\medop{M_{S}(\mathcal{A})}$ on the test dataset $\medop{\mathcal{T}}$, their value in FL should be the same as well.
    Formally,
     {
        \setlength{\abovedisplayskip}{5pt}
        \setlength{\belowdisplayskip}{5pt}
        \begin{equation}
            \medop
            {
            \forall S\subseteq N\backslash\{i,j\}, U(M_{S\cup\{i\}})=U(M_{S\cup\{j\}}) \Rightarrow \phi_i=\phi_j,
            }
        \end{equation}
    }

    \item \textit{(iii) linear-additivity:} The data value for FL is linear with respect to two disjoint test dataset $\medop{\mathcal{T}_1}$ and $\medop{\mathcal{T}_2}$.    
    Formally, 
     {
        \setlength{\abovedisplayskip}{5pt}
        \setlength{\belowdisplayskip}{5pt}
        \begin{equation}
            \medop
            {
            \mathcal{T}_1 \cap \mathcal{T}_2 = \emptyset \Rightarrow \forall i\in N,  \phi_{i}(\mathcal{T}_1\cup{T}_2) = \phi_{i}(\mathcal{T}_1)+\phi_{i}(\mathcal{T}_2),
            }
        \end{equation}
    }
    where $\medop{\phi_{i}(\mathcal{T}_1)$, $\phi_{i}(\mathcal{T}_2)}$ and $\medop{\phi_{i}(\mathcal{T}_1\cup \mathcal{T}_2)}$ are assigned value to the dataset $\medop{\mathcal{D}_{i}}$ using same datasets $\medop{\mathcal{D}_{N}}$ and algorithm $\medop{\mathcal{A}}$.  
    \end{itemize}

    \fakeparagraph{Remarks}
    Above three properties are all essential in FL.
    Firstly, the \textit{null player}  is instrumental in identifying free riders who do not contribute to the FL model.
    The \textit{symmetric fairness} provides a fundamental fairness in FL, \ie if two datasets are interchangeable, they should have the same data value.
    Finally, the \textit{linear additivity} ensures that introducing new test data does not alter existing data value, \ie original data value remains reusable, simplifying the integration of new test data. 

    
\subsection{The Shapley Value based Data Valuation Schemes} \label{sec:preliminary_schemes}

    If we consider each FL client as a player and model performance as utility function in a collaborative game, then the \textit{Shapley value} (SV), a classical concept to fairly measure the player's contribution, naturally inherits its properties and ensures three desirable properties in \defref{def:dv4fl}\cite{Bigdata19_SV}.
    Therefore, the \textit{Shapley value} (SV) has been widely adopted as a standard data valuation metric in FL~\cite{VLDB24_Shapley, aistats19shapley, Bigdata19_SV, FLIP20_Wei, vldb19shapley, ICDE22_SV_Wang, VLDB23_SVF, Survey_SV_DB, Survey_SV_ML, TIST22_GTG, icml19shapley}.
    Furthermore, there are two commonly used equivalent SV expression, the marginal contribution based (\textsl{\small MC-SV}) and the complementary contribution based (\textsl{\small CC-SV}).
    Each provides a computation scheme for data valuation by its definition.

\begin{definition}[\textit{\textbf{\textsl{\small MC-SV} based computation scheme}}]
    Given $n$ datasets $\mathcal{D}_{N}=\{\mathcal{D}_{1}, \dots, \mathcal{D}_{n}\}$, a learning algorithm $\mathcal{A}$, test dataset $\mathcal{T}$ and the utility function $U(\cdot)$ in FL, \textsl{\small MC-SV} computes the data value for each dataset ${\phi(\mathcal{A}, \mathcal{T}, \mathcal{D}_{N}, \mathcal{D}_{i})}$ as follows,
    \begin{equation} \label{exp:DSV}
        \medop
        {
        \phi(\mathcal{A}, \mathcal{D}_{N}, \mathcal{T}, \mathcal{D}_i)=\sum_{S\subseteq N\backslash\{i\}}\frac{U(M_{S\cup\{i\}})-U(M_S)}{n\cdot\tbinom{n-1}{|S|}},
        }
    \end{equation}
    where $M_{S}$ denotes the FL model trained on dataset combination $\cup_{i\in S} \mathcal{D}_{i}$ and $|S|$ represents the number of datasets involved in $S$.
    The term $\tbinom{n-1}{|S|}=\frac{(n-1)!}{|S|!(n-1-|S|)!}$ is the combinatorial number.
    This computation scheme is referred to as the marginal contribution based SV (\textsl{\small MC-SV} for short), because it is based on the marginal contribution of each FL client ${i}$.
\end{definition}

    \begin{definition}[\textit{\textbf{\textsl{\small CC-SV} based computation scheme}}]
        Given datasets $\mathcal{D}_{N}$, learning algorithm $\mathcal{A}$, test datasets $\mathcal{T}$ and the utility function ${U}(\cdot)$ in FL, \textsl{\small CC-SV} computes the dataset $\mathcal{D}_{i}$'s data value ${\phi(\mathcal{A}, \mathcal{T}, \mathcal{D}_{N}, \mathcal{D}_{i})}$ in FL as follows,
        \begin{equation} \label{shapley_form:cc_shapley}
            \medop
            {
                \phi(\mathcal{A}, \mathcal{T}, \mathcal{D}_{N}, \mathcal{D}_{i}) = \sum_{S\subseteq N\backslash\{i\}}\frac{U(M_{S\cup\{i\}})-U(M_{N\backslash (S\cup\{i\})})}{n\cdot\tbinom{n-1}{|S|}},
            }
        \end{equation}
        where $\medop{U(M_{S\cup\{i\}})-U(M_{N\backslash (S\cup\{i\})})}$ is called the complementary contribution{\small ~\cite{SIGMOD23_SV_Zhang}} of clients $\medop{S\cup\{i\}}$ and we use \textsl{\small CC-SV} as the abbreviation for this computation scheme of Shapley value.
    \end{definition}
    \noindent\tabref{tlb:notions} summarizes the major symbols throughout this work.

\begin{table}[h]
\centering
{
    \setlength{\tabcolsep}{6pt}
    
    \caption{SV-based data valuation for FL with three clients.}
    \resizebox{0.75\linewidth}{!}{
        \centering\resizebox{\linewidth}{!}{
        \begin{tabular}{ccccccccc}
            \toprule
            $S$ & \( \emptyset \) & \( \{1\} \) & \( \{2\} \) & \( \{3\} \) & \( \{1, 2\} \) & \( \{1, 3\} \) & \( \{2, 3\} \) & \( \{1, 2, 3\} \) \\ \midrule
            $U(M_{S})$ & 0.10 & 0.50 & 0.70 & 0.60 & 0.80 & 0.90 & 0.90 & 0.96 \\ \bottomrule
        \end{tabular}
        }
    }
    \label{tlb:sv_example}
}
\vspace{-1em}
\end{table}

\begin{table}[htb]
    \centering
    \setlength{\tabcolsep}{5pt}
    \caption{Summary of the major symbol notions.}
    \label{tlb:notions}
    \resizebox{0.85\linewidth}{!}{
    \begin{tabular}{cl}
    \toprule
    \textbf{Notations}          & \textbf{\qquad\qquad\qquad\qquad Descriptions}         \\
    \toprule
    $N, S$  & the set of all FL clients and a subset of all clients \\ 
    $\phi_{i}, \hat{\phi}_{i}$ &  FL client $i$'s data value and its approximated value  \\
    $\mathcal{D}_{S}, \mathcal{D}_{i}$ & datasets of client combination $S$ and of FL client $i$ \\
    $\mathcal{A}, \mathcal{T}$ & the training algorithm and the test dataset in FL \\
    $M_{S}$  & FL model trained over datasets held by clients $S$ \\
    $U(\cdot)$  & the utility function in SV-based data valuation \\
    \textsl{\footnotesize MC-SV}  &the SV scheme based on marginal contribution \\
    \textsl{\footnotesize CC-SV}  &the SV scheme based on complementary contribution \\
    $\mathcal{S}_{k}$ & dataset combinations with datasets of $k$ clients \\
    $|S|$ & the number of FL clients involved in combination $S$\\
    $\gamma$ &  total sampling rounds in approximation algorithm \\
    $\tau$ & the time cost for training and testing a FL model \\
    \bottomrule
    \end{tabular}
    }
    \vspace{-1.5em}
\end{table}


\begin{example}  \label{ex:mcsv}
    Considering a FL scenario with three clients $\medop{N=\{1, 2, 3\}}$.
    The utilities of FL models for each possible client combination are detailed in \tabref{tlb:sv_example}.
    We take FL client $1$ as an example whose data value is denoted as $\phi_1$.
    In this example, we employ the \textsl{\small MC-SV}-based computation scheme.
    The data value $\phi_{1}$ is determined by averaging the marginal contribution of FL client $1$ when added to all combinations without it.
    \begin{itemize}
        \item  For the empty combination $\medop{\emptyset}$, the marginal contribution of adding the FL client $\medop{1}$ is $\medop{U(\{1\})-U(\emptyset) = 0.40}$.
        \item For combinations including one other FL client, the marginal contributions of adding the client $\medop{1}$ are $\medop{U(\{1,2\})-U(\{2\}) = 0.10}$ and $\medop{U(\{1,3\})-U(\{3\}) = 0.30}$.
        \item  For the combination with other two FL clients, the marginal contribution is $\medop{U(\{1,2,3\})-U(\{2,3\}) = 0.06}$.
    \end{itemize}
    Finally, these contributions are averaged to compute $\phi_{1}$ as
    $\medop{({0.40}/{1}+{(0.10+0.30)}/{2}+{0.06}/{1})/3=0.22}$.
    Similarly, data value of clients $\medop{2}$ and $\medop{3}$ are $\medop{\phi_{2}\approx 0.32}$ and $\medop{\phi_{3}= 0.32}$.
\end{example}

\subsection{Approximations for SV-based Data Valuation in FL}

Though Shapley value has been widely adopted as a standard metric \cite{VLDB24_Shapley, Bigdata19_SV, FLIP20_Wei, vldb19shapley, ICDE22_SV_Wang, VLDB23_SVF, FLIP20_FSV, TIST22_GTG}, it is acknowledged that \textit{the high computational overhead  of Shapley value  prohibits its practical use, as it requires evaluating for all possible dataset combinations.}
Specifically, using both \textsl{\small MC-SV} and \textsl{\small CC-SV} based schemes needs to train and assess ${\mathcal{O}(2^n)}$ FL models, which is infeasible in practice.
Thus, it is imperative to devise practical approximation methods for the SV-based data valuation in FL that meet the following two critical two requirements.
\begin{itemize}
    \item \textit{R1: It is necessary to efficiently estimate SV-based data value for FL dataset.} 
    In commercial applications, FL clients may often prioritize their computational resources for model training and deployment.
    Excessive time cost spent on data valuation in FL should be avoided to ensure the economic interests in real-world scenarios.
    
    \item \textit{R2: It is crucial to approximate SV-based data value for FL clients with tolerable errors.}
    The Shapley value is favored for its desirable fairness properties (\eg \textit{no free riders and symmetric fairness}).
    However, substantial approximation errors can undermine these fairness properties, jeopardizing its applicability for data providers.
\end{itemize}

\fakeparagraph{Roadmap for Approximations}
    To address the two requirements (\textit{efficiency} and \textit{effectiveness}), existing literature ~\cite{Bigdata19_SV, FLIP20_Wei, vldb19shapley, FLIP20_FSV, TIST22_GTG, ICDE22_SV_Wang, VLDB23_SVF} primarily explore two types of approximation methods: \textit{gradient-based approximation} and \textit{sampling-based approximation}.
    \textit{(i)} The \textit{gradient-based} approaches~\cite{Bigdata19_SV, FLIP20_Wei,TIST22_GTG} utilize the gradients during the FL training process to construct the FL models under various dataset combinations, which avoids training FL models from scratch, thereby reducing the computational time significantly.
    However, these gradient-based methods usually lack theoretical underpinnings, which can result in higher approximation errors.
    \textit{(ii)} The \textit{sampling-based} solutions~\cite{vldb19shapley, icml19shapley, AISTATS23_DV} strategically select subsets from entire potential combinations, providing a flexible trade-off between accuracy and efficiency, making it more suited to meet the efficiency and effectiveness requirements.
    Therefore, this paper advocates for the \textit{sampling-based} approximations and we propose a novel sampling-based approach for practical SV-based data valuation in FL, which is detailed in the following sections (in \secref{sec:framework} and \secref{sec:improve}).

\section{Stratified Sampling based Framework} \label{sec:framework}
    In this section, we first introduce a unified stratified sampling framework that integrates both computation schemes outlined in \secref{sec:preliminary_schemes}.
    Then, we analyze and choose the \textsl{\small MC-SV}-based computation scheme as the more appropriate choice for the proposed stratified sampling-based framework.
        
\subsection{The Unified Sampling Framework for SV-based Schemes}
    By definition, both the \textsl{\small MC-SV} and \textsl{\small CC-SV} possess an inherent hierarchical structure based on the size of dataset combinations (as shown in \figref{fig:example_comb_shapley}), where each FL client’s data value is calculated by the average marginal (or complementary) contributions across combinations of various sizes. 
    Thus, it is natural to treat dataset combinations of the same size as strata and employ the stratified sampling for approximation. 
    To this end, we devise a unified stratified sampling framework to support both \textsl{\small MC-SV}-based and \textsl{\small CC-SV}-based computation schemes, which is illustrated in \algref{alg:mc4sv}.
    Then we can compare these two schemes in a consistent framework.
    
    \fakeparagraph{Main Idea}  
    Let $\mathcal{S}_{k}$ denotes all dataset combinations with datasets from $k$ clients and we can use the {Monte Carlo} method to approximate a stratified-SV $\hat{\phi}_{i,k}$ for dataset combination $\mathcal{S}_{k}$ with $k$ datasets (\ie each stratum) and the estimated SV $\hat{\phi}_{i}$ is the average across all strata.
    Specifically, \algref{alg:mc4sv} takes as input sampling rounds $m_{k}$ for the $k_{\text{th}}$ stratification ($\gamma = \sum_{k=1}^{n} m_{k}$), along with $n$ datasets $\mathcal{D}_{1}, \dots, \mathcal{D}_{n}$ and a utility function to measure the performance of trained FL model.
    In lines 1-8, the framework first randomly samples dataset combinations used for each FL client and tests the utility of FL models for each stratum.
    Then, the framework calculates the stratified-SV for each combination size and it can be easily integrated with both the \textsl{\small MC-SV}-based and the \textsl{\small CC-SV}-based data valuation computation schemes.
    For \textsl{\small MC-SV}, two combinations $S$ and $\overline{S}$ are paired when $\overline{S}={S}\backslash\{i\}$, whereas in \textsl{\small CC-SV}, $S$ are paired with $\overline{S}=N\backslash S$.
    Finally, the framework approximates and returns the SV by averaging marginal (or complementary) contributions within each stratum. 
    
        \begin{figure}[tb]
        \centering
        \includegraphics[width=0.75\linewidth]{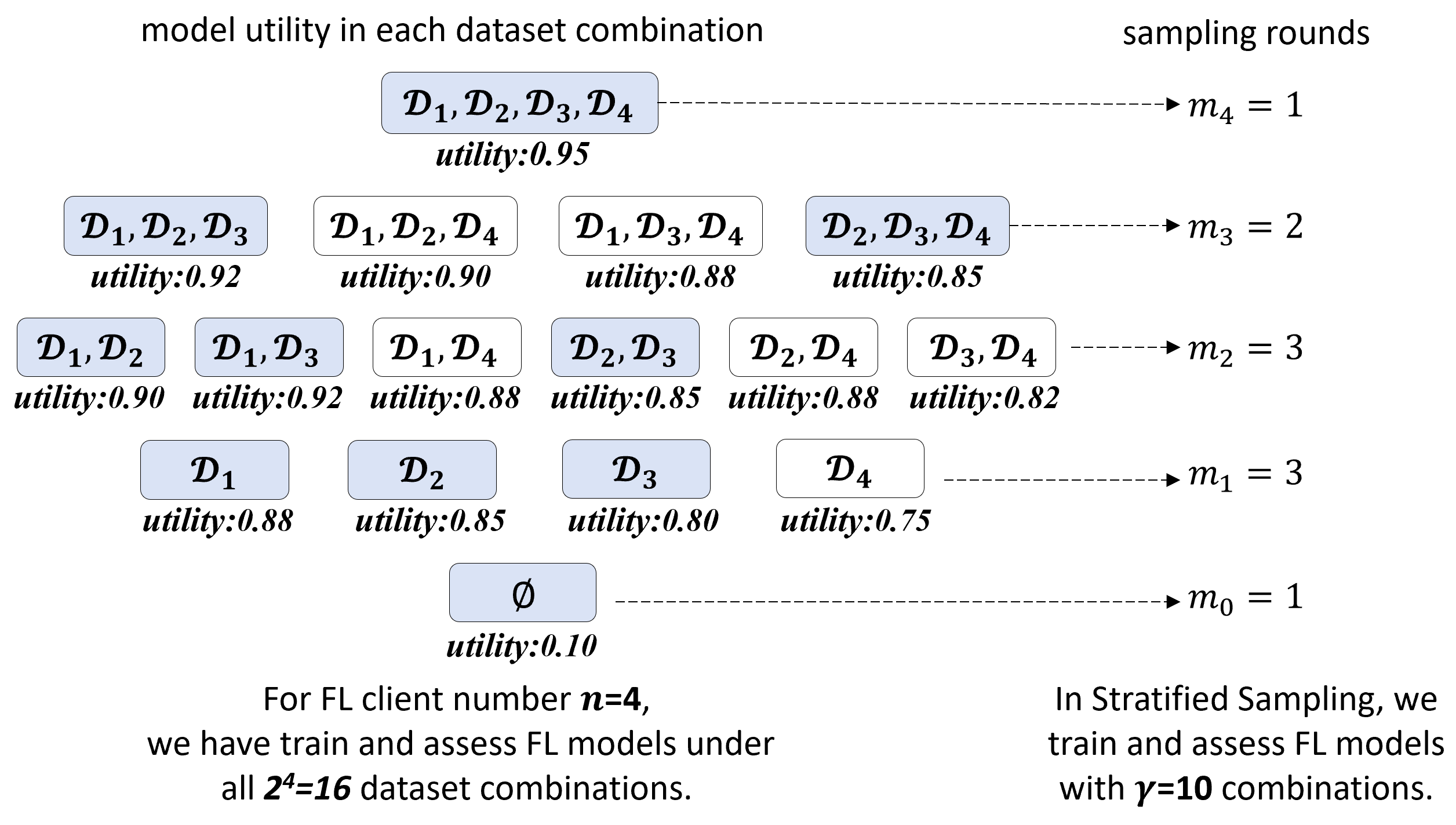}
        \caption{
            Example for the unified stratified sampling framework:
            {
               \small 
               Both \textsl{\small MC-SV} and \textsl{\small CC-SV} rely on this hierarchical structure, which is naturally suitable for stratified sampling.              
                There are four FL clients and model utility is below each dataset combination.
                For instance, the utility of FL model under dataset combination $\medop{\{\mathcal{D}_1, \mathcal{D}_3\}}$ is $\medop{0.92}$.
            }
        }
        \label{fig:example_comb_shapley}
    \end{figure}
    
    \begin{algorithm}[htb]
        \caption{Stratified Sampling Approximation}
        \label{alg:mc4sv}
        \SetAlgoLined
        \KwIn{The $n$ data providers with datasets $\mathcal{D}_{N}$, a test dataset $\mathcal{T}$ and a utility function $U(\cdot)$.
        Sampling rounds for each stratum $m_{k}$ ($\gamma=\sum_{k}m_{k}$).} 
        \KwOut{Data value of all datasets $\hat{\phi_1}, \cdots, \hat{\phi_n}$.}
        
        \For{$k \leftarrow 1 \ \mathbf{\mathrm{to}}\  n $}{
                Initialize $\mathcal{S}_{k}$ as all combinations with $k$ datasets\;
                $\textit{\textbf{S}}_{k} \leftarrow \{S_{k,1}, \dots, S_{k,m_{k}}\}$ w.r.t. $S_{k, \cdot}  \sim  \mathcal{S}_{k}$\;
                $\textit{\textbf{S}}_{k,i} \leftarrow \{S\vert S\in \mathcal{S}_{k} \text{ and } i\in S\}$\;
                
                \For{$S \in \textbf{S}_{k}$}{
                    Train and evaluate FL model $M_{S}$ and then we can obtain the model's utility $U(M_{S})$ on $\mathcal{T}$\;
                    
                }
        }
        \For{$i \leftarrow  1 \ \mathbf{\mathrm{to}}\  n$}{
            \For{$S \in \textbf{S}_{k,i}$}{
                \If{ the paired combination $\overline{S}$ is sampled}{
                    $m_{i,k} \leftarrow m_{i,k} + 1$\;
                    $\hat{\phi}_{i,k}\leftarrow \medop{\hat{\phi}_{i,k} + U(M_{S})-U(M_{\overline{S}})}$\;
                }
            }
        }
                $\hat{\phi}_{i} \leftarrow \frac{1}{n}\sum_{k=1}^{n}\frac{\hat{\phi}_{i,k}}{m_{i,k}}\   \medop{(\text{where}\ i=1,2,\dots,n)}$\;
        \KwRet $\hat{\phi}_1, \cdots, \hat{\phi}_n$
    \end{algorithm}
    
    \begin{example} \label{ex:example_ss_framework}
    We illustrate the stratified sampling framework by the example in \figref{fig:example_comb_shapley}.
    We set the total sampling round $\medop{\gamma=10}$ and sampled dataset combinations are marked in light blue.

    \noindent \textit{\underline{Case 1 (using \textsl{\small MC-SV}):}}
    Set the adopted computation scheme in \algref{alg:mc4sv} to \textsl{\small MC-SV}, we calculate the average marginal contribution for each stratum in this case.
    We take the FL client $1$'s data value as an instance which is $\medop{0.26}$ by its definition.
    For dataset combination $\medop{\mathcal{S}_{1}}$, we calculate marginal contribution of FL client $1$ as $\medop{\hat{\phi}_{1,1}=U(M_{\{1\}})-U(M_{\emptyset})=0.78}$.
    For combinations with two datasets $\medop{\mathcal{S}_{2}}$, we can calculate the average marginal contribution in this stratum as $\medop{\hat{\phi}_{1,2}=(U(M_{\{1,2\}})-U(M_{\{2\}})+U(M_{\{1,3\}})-U(M_{\{3\}}))/2=0.085}$. 
    Similarly, for $\medop{\mathcal{S}_{3}}$ and $\medop{\mathcal{S}_{4}}$, we have $\medop{\hat{\phi}_{1,3}=0.07}$ and $\medop{\hat{\phi}_{1,4}=0.10}$.
    Finally, the data value of FL client $\medop{1}$ can be approximated by $\medop{\hat\phi_1=(0.78+0.085+0.07+0.10)/4\approx 0.2588}$.

    \noindent \textit{\underline{Case 2 (using \textsl{\small CC-SV}):}}
    We take the average complementary contribution for each stratum in this case.
    For dataset combination with one dataset $\medop{\mathcal{S}_1}$, the complementary contribution of FL client $\medop{1}$ is $\medop{\hat\phi_{1,1}=U(M_{\{1\}})-U(M_{\{2,3,4\}})=0.03}$.
    For dataset combinations with two datasets $\medop{\mathcal{S}_2}$, the complementary contribution is calculated as $\medop{\hat\phi_{1,2}=0}$, since no paired combination $\medop{N\backslash S_2 \ (S_{2}\in \mathcal{S}_{2})}$ is sampled.
    Similarly, we have $\medop{\hat\phi_{1,3}}=0$ and $\medop{\hat\phi_{1,4}=0.85}$ for $\medop{\mathcal{S}_3}$ and $\medop{\mathcal{S}_4}$.
    Finally, the FL client $1$'s data value is approximated by $\medop{\hat\phi_1=(0.03+0+0+0.85)/4=0.22}$.

        
    \end{example}
\subsection{Choosing Computation Scheme for Sampling Framework} \label{sec:framework_comp_scheme}


    As mentioned above, the \algref{alg:mc4sv} supports both the \textsl{\small MC-SV}-based and the \textsl{\small CC-SV}-based computation schemes, allowing us to analyze and compare them within a consistent sampling based framework.
    Since \textsl{\small MC-SV}-based and \textsl{\small CC-SV}-based schemes have the same time complexity of $\mathcal{O}(2^n \tau)$ based on the definition, where $\tau$ denotes time cost of training and assessing a FL model.
    We further compare their performance in expectation and variance using a consistent sampling strategy.
   

    \begin{theorem}
        The \algref{alg:mc4sv} provides an unbiased estimation of SV in expectation when using both the \textsl{\small MC-SV} or the \textsl{\small CC-SV}.  
    \end{theorem}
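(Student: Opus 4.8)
The plan is to reduce unbiasedness of the full estimator to a single per-stratum statement and then settle that statement with a symmetry argument. First I would recast both closed forms in stratified notation. Grouping the sum in \equref{exp:DSV} (and likewise in \equref{shapley_form:cc_shapley}) according to the size $k=|S\cup\{i\}|$ of the combination containing $i$, each Shapley value decomposes as $\phi_i=\frac1n\sum_{k=1}^{n}\phi_{i,k}$, where $\phi_{i,k}$ is the average contribution over the stratum $\mathcal{C}_{i,k}=\{T\subseteq N:i\in T,\ |T|=k\}$, namely $\phi_{i,k}=\frac{1}{|\mathcal{C}_{i,k}|}\sum_{T\in\mathcal{C}_{i,k}}g(T)$ with $|\mathcal{C}_{i,k}|=\binom{n-1}{k-1}$. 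Here $g(T)=U(M_T)-U(M_{\overline T})$ and the two schemes differ only in the pairing rule, $\overline T=T\setminus\{i\}$ for \textsl{\small MC-SV} and $\overline T=N\setminus T$ for \textsl{\small CC-SV}. Since the algorithm returns $\hat\phi_i=\frac1n\sum_{k}\hat\phi_{i,k}/m_{i,k}$, linearity of expectation reduces the theorem to showing that each per-stratum estimator $\hat\phi_{i,k}/m_{i,k}$ is unbiased for $\phi_{i,k}$, uniformly in the choice of pairing.

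The main obstacle is that $\hat\phi_{i,k}/m_{i,k}$ is a ratio of two random quantities: the count $m_{i,k}$ records only those sampled $T\in\mathcal{C}_{i,k}$ whose partner $\overline T$ happened to be sampled as well, so one must rule out the possibility that this ``valid-pair'' filtering biases the empirical average toward particular combinations. I would avoid a direct moment computation and instead exploit the invariance of the sampling law under the group $\mathrm{Sym}(N\setminus\{i\})$. Writing $V\subseteq\mathcal{C}_{i,k}$ for the random set of valid combinations, so that $\hat\phi_{i,k}/m_{i,k}=\frac{1}{|V|}\sum_{T\in V}g(T)$, I would observe that any permutation $\sigma$ fixing $i$ permutes each stratum, leaves the uniform sampling law unchanged, and is equivariant for both pairing maps (it commutes with $T\mapsto T\setminus\{i\}$ and with $T\mapsto N\setminus T$). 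Consequently the distribution of $V$ is $\mathrm{Sym}(N\setminus\{i\})$-invariant.

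Because this group acts transitively on $\mathcal{C}_{i,k}$, the quantity $c:=\mathbb{E}\!\left[\mathbf{1}[T\in V]/|V|\right]$ (with the term read as $0$ when $V=\emptyset$) is the same for every $T\in\mathcal{C}_{i,k}$; summing the identity $\sum_{T\in\mathcal{C}_{i,k}}\mathbf{1}[T\in V]/|V|=\mathbf{1}[|V|\ge1]$ then gives $|\mathcal{C}_{i,k}|\,c=\Pr[|V|\ge1]$. Substituting,
\begin{equation*}
\mathbb{E}\!\left[\mathbf{1}[|V|\ge1]\,\frac{1}{|V|}\sum_{T\in V}g(T)\right]=\sum_{T\in\mathcal{C}_{i,k}}g(T)\,c=\Pr[|V|\ge1]\cdot\phi_{i,k},
\end{equation*}
so conditioned on the event $\{|V|\ge1\}$ the per-stratum estimator has expectation exactly $\phi_{i,k}$. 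I would close the argument by assembling the strata: under the standing assumption (which the sampling schedule enforces) that every stratum yields at least one valid pair, the tower property and linearity give $\mathbb{E}[\hat\phi_i]=\frac1n\sum_{k}\phi_{i,k}=\phi_i$ for both \textsl{\small MC-SV} and \textsl{\small CC-SV}. The only leftover bookkeeping concerns the boundary strata $\mathcal{C}_{i,1}=\{\{i\}\}$ and $\mathcal{C}_{i,n}=\{N\}$, which are singletons so that the stratum average is trivial and their partners ($\emptyset$ and $N$) are always available.
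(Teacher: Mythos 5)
Your proof is correct, and it takes a genuinely different --- and more careful --- route than the paper's. The paper's proof is a direct moment computation: it writes $\mathbb{E}[\hat{\phi}_{i,k}/m_{i,k}]$, pulls the count $m_{i,k}$ out of the expectation as though it were deterministic, and treats each retained pair as a uniform draw from the stratum, so that linearity immediately yields the stratum average and, after averaging over $k$, the right-hand sides of \equref{exp:DSV} and \equref{shapley_form:cc_shapley}. That computation silently assumes away exactly the obstacle you isolate: $m_{i,k}$ is random, and the valid-pair filtering (keeping only sampled $S$ whose partner $\overline{S}$ was also sampled) could in principle correlate with which combinations are retained. Your $\mathrm{Sym}(N\setminus\{i\})$-invariance argument resolves this rather than assuming it: equivariance of both pairing maps plus transitivity of the action on each stratum forces $\mathbb{E}\bigl[\mathbf{1}[T\in V]/|V|\bigr]$ to be constant over the stratum, which is precisely the statement that the filtering is selection-neutral, and it yields unbiasedness conditional on $\{|V|\ge 1\}$; since the event that every stratum contains a valid pair is itself symmetric under permutations fixing $i$, the same argument applies to the conditioned law, so your assembly step is sound. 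The conditioning is a real price, but the paper pays it too without acknowledgment: \algref{alg:mc4sv} effectively contributes $0$ for a stratum with no sampled partner (see Case 2 of \expref{ex:example_ss_framework}), so the unconditional estimator is in general biased, and the theorem really holds only under your standing assumption or a sampling schedule enforcing paired samples --- your version is the sharper, honest statement. One small slip in your closing bookkeeping: the partners of the boundary strata are not ``$\emptyset$ and $N$''; under \textsl{\small MC-SV} the partner of $N$ is $N\setminus\{i\}$, and under \textsl{\small CC-SV} the partner of $\{i\}$ is $N\setminus\{i\}$, neither of which is automatically available (only $U(M_\emptyset)$ is, as the utility of the initial model). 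This is harmless, though, because your general symmetry argument already covers singleton strata with no special treatment.
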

    \begin{proof} 
        We first analyze the expectation of stratified-SV $\hat{\phi}_{i,k}$,
        \begin{equation} 
            \begin{aligned}
             &\medop{
                \mathbb{E}[\frac{\hat{\phi}_{i,k}}{m_{i,k}}] = \frac{\mathbb{E}_{S\sim \mathcal{S}_{k,i}}[\hat{\phi}_{i,k}]}{m_{i,k}} = \frac{\mathbb{E}_{S\sim \mathcal{S}_{k,i}}
                [\sum_{t=1}^{m_{i,k}}{U(M_{S})-U(M_{\overline{S}})}]}{m_{i,k}}
             }\\
             &\medop{
                = \mathbb{E}_{S\sim \mathcal{S}_{k,i}}
                [U(M_{S})-U(M_{\overline{S}})] = \sum_{S\subseteq N\backslash\{i\}}\frac{U(M_{S\cup\{i\}})-U(M_{\overline{S\cup\{i\}}})}{\tbinom{n-1}{|S|}}
            } 
            \end{aligned}
        \end{equation}
        Then, the expectation of the SV based on \algref{alg:mc4sv} is,
        \begin{equation} \label{eq:SV_E}
            \begin{aligned}
                \medop{
                    \mathbb{E}[\hat{\phi_{i}}] = \mathbb{E}[\frac{1}{n}\cdot\sum_{k=1}^{n}{\hat{\phi}_{i,k}}]
                }
                \medop{
                    =\frac{1}{n}\sum_{S\subseteq N\backslash\{i\}}\frac{U(M_{S\cup\{i\}})-U(M_{\overline{S\cup\{i\}}})}{\tbinom{n-1}{|S|}}
                }
            \end{aligned}
        \end{equation}
        Thus, by the definition of \textsl{\small MC-SV} and \textsl{\small CC-SV}, \equref{eq:SV_E} equals the FL client $i$'s data value $\phi_{i}$, completing our proof.
    \end{proof}

    
    \begin{theorem} \label{thm:CC_Variance}
        Assume that the data from all providers are all drawn from the same distribution and let $|\mathcal{D}_i|$ denote the size of dataset held by the FL client $i$. 
        Then for any sampling strategy for \textsl{\small CC-SV} based scheme, using the \textsl{\small MC-SV} based scheme can yield a lower variance in \algref{alg:mc4sv} in FL linear regression.
    \end{theorem}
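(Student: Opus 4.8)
The plan is to reduce the claim to a per-stratum comparison of the variance of the averaged contribution, and then to exploit the structure of the linear-regression utility to carry out that comparison. Since both estimators are unbiased for the same $\phi_i$ (by the preceding unbiasedness theorem) and the strata in \algref{alg:mc4sv} are sampled independently, I would first write the total variance as a sum over strata, $\mathrm{Var}[\hat\phi_i]=\frac{1}{n^2}\sum_{k}\mathrm{Var}[\hat\phi_{i,k}/m_{i,k}]$, and note that each stratum contributes, up to the number of effective sampled pairs, the variance of a single contribution term drawn uniformly from that stratum. Hence it suffices to show, for every stratum size $k$, that
\[
\mathrm{Var}_{T}\!\bigl[U(M_T)-U(M_{T\setminus\{i\}})\bigr]\ \le\ \mathrm{Var}_{T}\!\bigl[U(M_T)-U(M_{N\setminus T})\bigr],
\]
where $T$ ranges uniformly over the size-$k$ subsets containing $i$; the left side is the marginal (MC-SV) contribution and the right side the complementary (CC-SV) contribution.

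Next I would use the homogeneity assumption together with the FL linear-regression model to collapse the combinatorial randomness to a single scalar. Because all datasets are drawn from the same distribution, the expected test utility of $M_S$ depends on $S$ only through the total training sample size $m_S=\sum_{j\in S}|\mathcal{D}_j|$; using the standard excess-risk expression for least squares one writes $U(M_S)=g(m_S)$ with $g$ increasing and concave (diminishing returns as samples are added). Writing $Y=\sum_{j\in T\setminus\{i\}}|\mathcal{D}_j|$ for the random total size of the other $k-1$ chosen clients, the marginal contribution becomes $A(Y)=g(|\mathcal{D}_i|+Y)-g(Y)$ and the complementary contribution becomes $B(Y)=g(|\mathcal{D}_i|+Y)-g(M-|\mathcal{D}_i|-Y)$, where $M=\sum_j|\mathcal{D}_j|$. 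Both are now functions of the single random variable $Y$, so the two per-stratum variances can be compared directly.

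Then I would compare the sensitivity of $A$ and $B$ to $Y$. The key structural fact is that $A$ is a first difference of $g$ over an interval of length $|\mathcal{D}_i|$, so its derivative $A'(Y)=g'(|\mathcal{D}_i|+Y)-g'(Y)$ is a second difference, of order $|\mathcal{D}_i|\cdot\sup|g''|$ and small by concavity; by contrast $B'(Y)=g'(|\mathcal{D}_i|+Y)+g'(M-|\mathcal{D}_i|-Y)$ is a sum of first derivatives, of the much larger order of $g'$ itself. Propagating the randomness of $Y$ through a first-order expansion gives $\mathrm{Var}[A(Y)]\approx A'(\bar Y)^2\,\mathrm{Var}[Y]$ and $\mathrm{Var}[B(Y)]\approx B'(\bar Y)^2\,\mathrm{Var}[Y]$, so $|A'|\le|B'|$ forces the marginal contribution to have the smaller per-stratum variance, and summing over strata yields the theorem.

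The main obstacle I anticipate is making the last step rigorous beyond the linearized regime: the pointwise bound one would like, $|A(Y)|\le|B(Y)|$, actually fails for the middle strata, where $m_T\approx M/2$ makes $B$ nearly zero in mean while $A$ stays positive, so a naive second-moment comparison does not close the argument. The honest route is therefore either (i) to restrict to the linearization regime and control the Taylor remainder through a uniform bound on $|g''|$ coming from the least-squares risk formula, or (ii) to replace the delta-method step with the exact identity $\mathrm{Var}[B]-\mathrm{Var}[A]=\mathrm{Var}[C]+2\,\mathrm{Cov}[A,C]$ with $C(Y)=g(Y)-g(M-|\mathcal{D}_i|-Y)$ and show that the concavity of $g$ keeps $|\mathrm{Cov}[A,C]|$ dominated by $\tfrac12\mathrm{Var}[C]$. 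Establishing the reduction $U(M_S)=g(m_S)$ cleanly, and separating the combination-sampling variance from the intrinsic training noise in $U$, are the two supporting technical points I would settle first.
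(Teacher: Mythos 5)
Your proposal locates the randomness in the wrong place, and this is why it cannot close. The paper's proof holds the sampled combinations fixed (the theorem compares the two schemes under the \emph{same} sampling strategy) and takes the variance over the intrinsic noise in the data: writing the utility as a sum of per-sample errors, $U(M_S) = -\sum_{j\in\mathcal{D}_S} e_j$ with noise variance $\sigma^2$, the \textsl{MC-SV} pair $(S\cup\{i\},\,S)$ shares the samples of $\mathcal{D}_S$, so those error terms cancel in the difference and each sampled marginal contribution has variance exactly $|\mathcal{D}_i|^2\sigma^2$; the \textsl{CC-SV} pair $(S\cup\{i\},\,N\setminus(S\cup\{i\}))$ is built from \emph{disjoint} data, so the noises add and each sampled complementary contribution has variance $\bigl((|\mathcal{D}_S|+|\mathcal{D}_i|)^2 + (|\mathcal{D}_N|-|\mathcal{D}_S|-|\mathcal{D}_i|)^2\bigr)\sigma^2$. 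Subtracting gives an exact, term-by-term surplus of at least $|\mathcal{D}_S|^2\sigma^2 > 0$, with no linearization, no concavity, and no delta method. Your setup instead treats $U(M_S)=g(m_S)$ as deterministic and puts all randomness in the combinatorial choice of $T$, which erases the overlap-versus-disjointness cancellation mechanism that actually drives the result in \algref{alg:mc4sv}.

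Within your own framework the argument also has a concrete gap, beyond the caveats you list. The claimed sensitivity bound $|A'|\le|B'|$ is false in general for the least-squares risk $g(m) = -\mu_e|x|/(m-|x|-1)$: when $Y$ is close to $|x|+1$ (small coalitions), $g'(Y)$ blows up while $g'(|\mathcal{D}_i|+Y)$ and $g'(M-|\mathcal{D}_i|-Y)$ stay moderate, so $|A'(Y)| = g'(Y)-g'(|\mathcal{D}_i|+Y)$ can exceed $|B'(Y)| = g'(|\mathcal{D}_i|+Y)+g'(M-|\mathcal{D}_i|-Y)$ by orders of magnitude; your heuristic that $A'$ is ``of order $|\mathcal{D}_i|\cdot\sup|g''|$ and small by concavity'' fails because concavity does not make $\sup|g''|$ small on exactly those small strata, which are the high-impact ones identified in \secref{sec:improve_KCP}. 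So the leading term of the delta method already points the wrong way in part of the range, not just the remainder. The two repairs you sketch (restricting to a linearized regime, or the identity $\mathrm{Var}[B]-\mathrm{Var}[A]=\mathrm{Var}[C]+2\,\mathrm{Cov}[A,C]$ with a covariance-domination step) are left unproven, and the second still requires $2|\mathrm{Cov}[A,C]|\le \mathrm{Var}[C]$, which is precisely the hard case. As written, the proposal is a plan with an unclosed central inequality; the direct route is to model the noise in $U$ itself and exploit that \textsl{MC-SV} differences are supported only on $\mathcal{D}_i$ while \textsl{CC-SV} differences are supported on all of $\mathcal{D}_N$.
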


    \begin{proof}
        Based on the theoretical analysis in \cite{Book15_LR_Model}, the variance of error in a linear regression model applied to a dataset $\mathcal{D}$ with $t$ training samples, can be described as follows, 
        \begin{equation}
        \begin{aligned}            
           \medop
           {\mathbb{V}ar[U(M_{\mathcal{D}})] = \mathbb{V}ar[\sum_{j=1}^{t}e_j]=\sum_{j=1}^{t}\mathbb{V}ar[\vert \hat{f}(x_j)-y_j\vert] =  t^2\sigma^2}
        \end{aligned}
        \end{equation}
    where $e_j=\vert \hat{f}(x_j)-y_j\vert$ is the mean absolute error on each training sample $(x_j, y_j)$ and $\sigma^2$ is the variance of intrinsic random noise in the dataset.
    If we take negative mean average error as utility and randomly sample a dataset combination $S$ from $\medop{N\backslash\{i\}}$ to approximate \textsl{\small MC-SV}, its variance $\medop{\mathbb{V}ar[\hat{\phi}_{i}^{MC}]}$ is,
    \begin{equation} \label{eq:sv_var_comb}
    \begin{aligned}
        &\medop{\mathbb{V}ar[\hat{\phi}_{i}^{MC}]
        = 
        \mathbb{V}ar[\frac{1}{n}\sum_{k=1}^{n}\sum_{S\sim (N\backslash\{i\})}\frac{U(M_{S\cup\{i\}})-U(M_{S})}{ m_{i,k}}]
        }   \\
        &\medop{
        =\sum_{k=1}^{n} \sum_{S} \frac{\mathbb{V}ar[U(M_{S\cup\{i\}})-U(M_{S})]}{n^2\cdot m_{i,k}^2}} \\
        &\medop{=\sum_{k=1}^{n} \sum_{S}\frac{1}{n^2 m_{i,k}^2}\cdot \mathbb{V}ar[-\sum_{j\in \mathcal{D}_{S\cup\{i\}}} e_j + \sum_{j\in \mathcal{D}_{S}} e_j]} \\
        &\medop{=\sum_{k=1}^{n} \sum_{S}\frac{1}{{n^2  m_{i,k}^2}}\cdot \mathbb{V}ar[\sum_{j\in \mathcal{D}_{i}} e_j]=\sum_{k=1}^{n}\sum_{S}\frac{1}{n^2 m_{i,k}^2}  |\mathcal{D}_i|^2\sigma^2}
    \end{aligned}
    \end{equation}
    Similarly, the variance of the \textsl{\small CC-SV} can be calculated as,
    \begin{equation}  \label{eq:sv_var_cc}
    \begin{aligned}
        &\medop{
        \mathbb{V}ar[\hat{\phi}_{i}^{CC}] = \mathbb{V}ar[\frac{1}{n}\sum_{k=1}^{n}\sum_{S\sim (N\backslash\{i\})}\frac{U(M_{{S}\cup \{i\}})-U(M_{N\backslash (S\cup\{i\})})}{m_{i,k}}]} \\
        &\medop{=\sum_{k=1}^{n}\sum_{S} \frac{1}{n^2  m_{i,k}^2} \cdot (\mathbb{V}ar[\sum_{j\in \mathcal{D}_{S\cup\{i\}} }e_j]+\mathbb{V}ar[\sum_{j\in \mathcal{D}_{N\backslash(S\cup\{i\})}}e_j])} \\
        &\medop{=\sum_{k=1}^{n}\sum_{S} \frac{1}{n^2  m_{i,k}^2} ((|\mathcal{D}_S|+|D_i|)^2+(|\mathcal{D}_N|-|\mathcal{D}_S|-|\mathcal{D}_i|)^2)\sigma^2}
    \end{aligned}
    \end{equation}
    If they take the same sampling strategy in approximation, we can compare their variances by \eqref{eq:sv_var_cc} - \eqref{eq:sv_var_comb} as follows,
    \begin{equation}
        \medop{
        \mathbb{V}ar[\hat{\phi}_{i}^{CC}] - \mathbb{V}ar[\hat{\phi}_{i}^{MC}] \ge \sum_{k=1}^{n}\sum_{S} \frac{1}{n^2  m_{i,k}^2} |\mathcal{D}_S|^2\sigma^2 > 0 
        }
    \end{equation}
    Therefore, the variance of \textsl{\small MC-SV} is lower than that of \textsl{\small CC-SV} when using \algref{alg:mc4sv} and we finish the proof of \thmref{thm:CC_Variance}.
    \end{proof}
    
    \fakeparagraph{Takeaways}
    Based on above theoretical analysis, we have two main results: \textit{(i) when implemented within \algref{alg:mc4sv}, \textsl{\small MC-SV}-based and \textsl{\small CC-SV}-based computation schemes can both provide unbiased estimations for data valuation.} 
    \textit{(ii) for each stratified sampling strategy of \textsl{\small CC-SV}, there exists a corresponding strategy of \textsl{\small MC-SV} that yields lower variance in the context of FL linear model.}
    It is important to note that \algref{alg:mc4sv} operates as a stratified sampling framework without imposing specific assumptions on the number of sampling rounds $m_i$ for each stratum.
    Thus, our analysis provides broadly applicable evidence when comparing the two schemes and justifying our choice of the \textsl{\small MC-SV} based computation scheme.

\section{Importance-Pruned Stratified Sampling} \label{sec:improve}
\label{sec:ps_alg}
    In this section, we explore the \textsl{\small MC-SV}-scheme within our stratified sampling framework to devise a practical approximation algorithm.
    Initially, we further observe the \textsl{\small MC-SV}-based computation scheme and identify a phenomenon, called \textit{key combinations}, \ie \textit{it is sufficient to focus on only a selected subset of all $2^{n}$ dataset combinations to approximate the SV in FL.}
    Armed with this knowledge, we can selectively prune the less significant dataset combinations to optimize efficiency and accuracy for the \textsl{\small MC-SV}-based data valuation in FL.

    \begin{figure}[thb]
        \centering
        \includegraphics[width=0.65\linewidth]{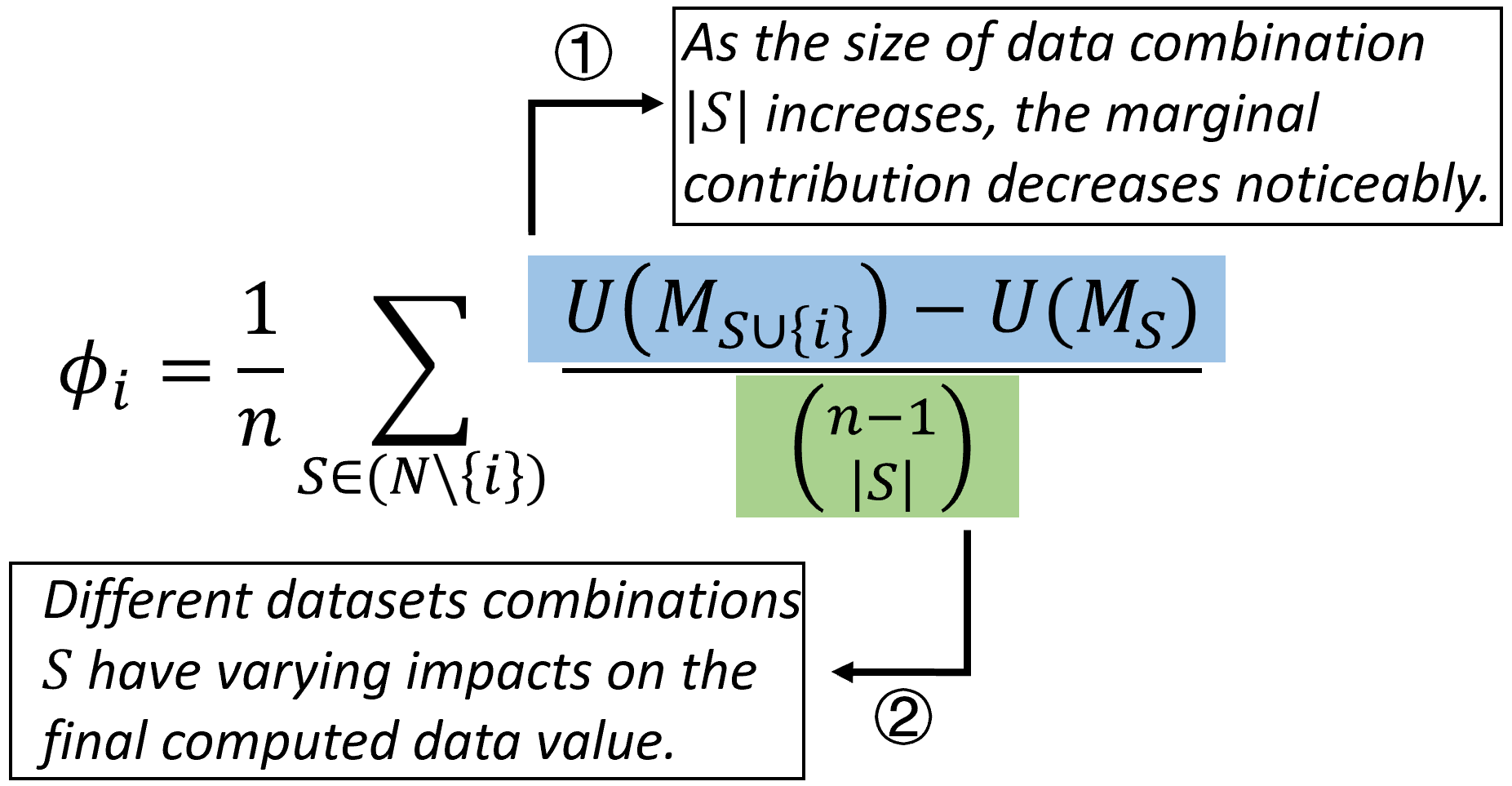}
        \caption{Observations when using the \textsl{\small MC-SV}-based scheme.}
        \label{fig:intuition_comb_shapley}
        \vspace{-1.5em}
    \end{figure}
    
\subsection{Identifying the \textit{Key Combinations} Phenomenon} \label{sec:improve_KCP}


    \fakeparagraph{Observations} 
    It is essential to utilize the inherent properties of the \textsl{\small MC-SV} for effective and efficient data valuation in FL.
    To this end, we recall and examine the \textsl{\small MC-SV}-based computation scheme as depicted in \figref{fig:intuition_comb_shapley}.
    This analysis reveals that different dataset combinations $S$ have varying impacts on the data value $\phi_{i}$ primarily in two aspects:
    \begin{itemize}
        \item \textit{(i)} The marginal utility of FL model decreases with the addition of more datasets.
        Once there are already sufficient datasets for training in FL, adding a new dataset can only improves limited utility, \ie when $\mathcal{D}_{S}$ are large, the marginal utility $\medop{U(M_{S\cup\{i\}})-U(M_{S})}$ is usually small.        
        \item \textit{(ii)} For certain dataset combination $S$, if its size $|S|$ is close to $\medop{(n-1)/{2}}$, its impact on the final result tends to be limited as well, because its coefficient $\medop{1/\tbinom{n-1}{|S|}}$ in \textsl{\small MC-SV} is much smaller compared to others. 
    \end{itemize}
    The above observations \textit{(i)} and \textit{(ii)} indicate that the impact on the estimated \textsl{\small MC-SV} diminishes when the size of the dataset subset $|S|$ approaches or exceeds $(n-1)/2$.
    This leads us to conjecture that \textit{the contributions of datasets in FL are predominantly influenced by a select few combinations of datasets $\medop{S \subseteq N}$, particularly those involving smaller FL clients}.
    To test and validate these observations, we further develop a simple algorithm called $K$-Greedy (in \algref{alg:kgreedy}), which adopts the \textsl{\small MC-SV}-based scheme.
    \algref{alg:kgreedy} only focuses on combinations with no more than $K$ datasets, intentionally disregarding impacts of combinations with more FL clients.
    
    \fakeparagraph{Empirical Setups}
    To validate our conjecture above, we embark on an empirical investigation of the $K$-Greedy algorithm.
    Our experiments take the FL benchmark dataset FEMNIST \cite{arXiv18_LEAF} and employ the widely-used convolutional neural network as the FL model.
    Without loss of generality, we partition the FEMNIST dataset into 10 distinct data providers (\ie FL clients), each holds digits contributed by different writers.
    
    \begin{figure}[htb]
        \centering
        \includegraphics[width=0.7\linewidth]{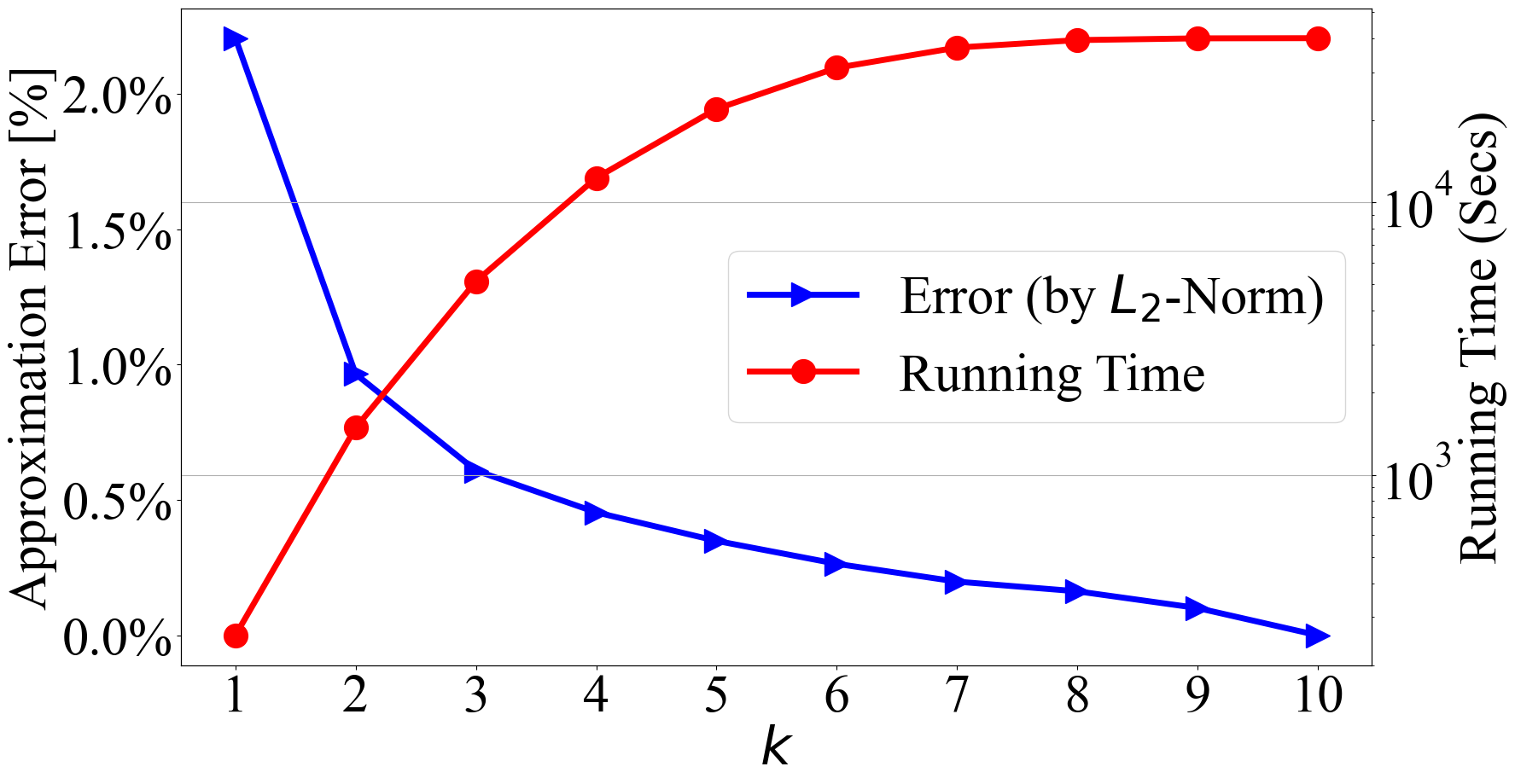}
        \caption{Results under combinations with size no more than $K$.}
        \label{fig:kg_shap}
        \vspace{-0.5em}
    \end{figure}
    
    \begin{algorithm}[thb]
        \caption{$K$-Greedy}
        \label{alg:kgreedy}
        \SetAlgoLined
        \KwIn{The $n$ datasets $\mathcal{D}_{N}$, a test dataset $\mathcal{T}$, a utility function $U(\cdot)$ in FL, and a constant number $K$.} 
        \KwOut{The estimated data value $\hat{\phi}_1, \dots, \hat{\phi}_n$}
        // Evaluate the utility of combination of datasets. \\
        \For{$S \subseteq N  \ \mathbf{\mathrm{and}}\   |S| \leq K$}{
                Train the model $M$ and evaluate its utility $U(M_{S})$\;
        }
        // Approximate the \textsl{MC-SV} for each data providers. \\
        \For {$i \in N$}{
        $\hat{\phi}_{i} \leftarrow \sum_{S\subseteq (N\backslash\{i\}), |S| < K}\frac{U(M_{S\cup\{i\}})-U(M_{S})}{n\cdot\tbinom{n}{|S|}}$ 
        \;
        }
        \KwRet $\hat{\phi}_1, \cdots, \hat{\phi}_n$
    \end{algorithm}
    
     \begin{algorithm}[htb]
        \caption{Importance-Pruned Stratified Sampling}
        \label{alg:lightSampling}
        \SetAlgoLined
        \KwIn{The $n$ datasets $\{D_1,\cdots,D_n\}$, a test dataset $\mathcal{T}$, a utility function $U(\cdot)$ and sampling rounds $\gamma$} 
        \KwOut{Data value for $n$ datasets $\hat{\phi}_1, \cdots, \hat{\phi}_n$}
        $k^* \leftarrow \max \{ k\in \mathbb{N} | \sum_{j=0}^{k} \tbinom{n}{j} \le \gamma $\}\;
        \For{$ k \leq k^*$}{
            Initialize $\mathcal{S}_{k}$ with all combinations with $k$ datasets\;

            \For{$S \in \mathcal{S}_{k}$}{
                    Train and evaluate the FL model $M_{S}$ on dataset combination $\mathcal{D}_{S}$ and test dataset $\mathcal{T}$\;
            }
        }
        Sampling a set of dataset combinations $\mathcal{P}$ such that:\\
            \ {\small \textit{(1)}} $|\mathcal{P}| \leq \gamma- \sum_{j=0}^{k^*} \tbinom{n}{j}$ \;
            \ {\small \textit{(2)}}  $\forall S\in \mathcal{P}, |S|=k^*+1$\;
            \ {\small \textit{(3)}}  $\forall i,j\in N, C_i=C_j$ where $C_k = \sum_{S\in \mathcal{P}}\mathbb{I}[k \in S]$ \;
          \For{$S \in \mathcal{P}$}{
                   Train and evaluate the FL model $M_{S}$ on dataset combination $\mathcal{D}_{S}$ and test dataset $\mathcal{T}$\;
            }
            
       \For{$i \leftarrow  1 \ \mathbf{\mathrm{to}}\  n$}{
        $\hat{\phi}_{i} \leftarrow \frac{1}{n}\sum_{S\subseteq (N\backslash\{i\}), |S| < k^*}\frac{U(M_{S\cup\{i\}})-U(M_{S})}{\tbinom{n-1}{|S|}}$ 
        $+ \frac{1}{n}\sum_{S\subseteq (N\backslash\{i\}), (S\cup\{i\})\in \mathcal{P}}\frac{U(M_{S\cup\{i\}})-U(M_{S})}{\tbinom{n-1}{|S|}}$\;
            
        }
        \KwRet $\hat{\phi}_1, \cdots, \hat{\phi}_n$
    \end{algorithm}
    
    \fakeparagraph{Key Combinations Phenomenon} 
    The empirical results are shown in \figref{fig:kg_shap}.
    To quantify the approximation error, we adopt the relative error metric, defined as $\frac{\Vert \phi- \hat{\phi} \Vert_{2}}{\Vert \phi \Vert_{2}}$, where $\phi$ represents the data value calculated by \textsl{\small MC-SV}, and $\hat{\phi}$ denotes the approximated data value in FL.
    The empirical results show that for dataset combinations of size ${K \leq 2}$, the relative error is less than $1\%$, which suggests that using dataset combinations involving no more than 2 FL clients allows for a highly accurate approximation of the \textsl{\small MC-SV}.
    Besides, the relative error decreases rapidly as $K$ increases from 1 to 3 and the rate of decrease becomes more gradual as $K$ becomes larger, which implies that dataset combinations with a larger number of clients have less impact on the final \textsl{\small MC-SV} in FL, which aligns with our conjecture.
    To summarize, we characterize this observed phenomenon as the \textit{key combinations}, \textit{where a limited number of dataset combinations, typically involving a few clients, play a pivotal role when we take the \textsl{\small MC-SV}-based computation scheme for data valuation in FL}.


\subsection{Importance-Pruning for Acceleration}


    \fakeparagraph{IPSS Algorithm}
    Building upon the above empirical evidence, we further refine the proposed stratified sampling framework as described in \secref{sec:framework}.
    We introduce a novel approximation algorithm, importance-pruned stratified sampling (\textsl{IPSS}), tailored for \textsl{\small MC-SV}-based data valuation in FL.
    Given the total sampling rounds $\gamma$, the \textsl{IPSS} prunes dataset combinations involving a large number of FL clients, focusing only on those combinations that have a high-impact on final results.
    The \textsl{IPSS} is detailed in \algref{alg:lightSampling}.
    Given $n$ datasets of FL clients, the utility function $\medop{U(\cdot)}$ and the sampling rounds $\medop{\gamma}$, the algorithm is design to efficiently approximate contributions of these datasets in FL.
    The \textsl{IPSS} algorithm operates in two phases.
    \textit{(i) Initially, the algorithm evaluates the utility of FL model trained on various dataset combinations.}
    In lines 1-7, the \textsl{IPSS} calculates the maximum size of used dataset combinations $k^*$, and then we train and evaluate the FL model on dataset combinations whose sizes do not exceed $k^*$.
    In lines 8-11, for remaining sampling rounds, the \textsl{IPSS} samples dataset combinations of size $k^*+1$ and ensures equal sampling frequency for each dataset, thereby providing a fair approximation error across FL clients. 
    In lines 12-14, we train and evaluate the utility of FL models under these dataset combinations containing $k^*+1$ datasets.
    \textit{(ii) The \algref{alg:lightSampling} approximates data value based on \textsl{\small MC-SV}.}
    In lines 15-17, the algorithm takes the evaluated dataset combinations as proxies for all combinations and estimates the data value by the \textsl{\small MC-SV}-based computation scheme, reducing the computational overhead.
    Finally, the \textsl{IPSS} returns data value for each dataset.

    \begin{figure}[htb]
        \centering
        \vspace{-1em}
        \includegraphics[width=0.75\linewidth]{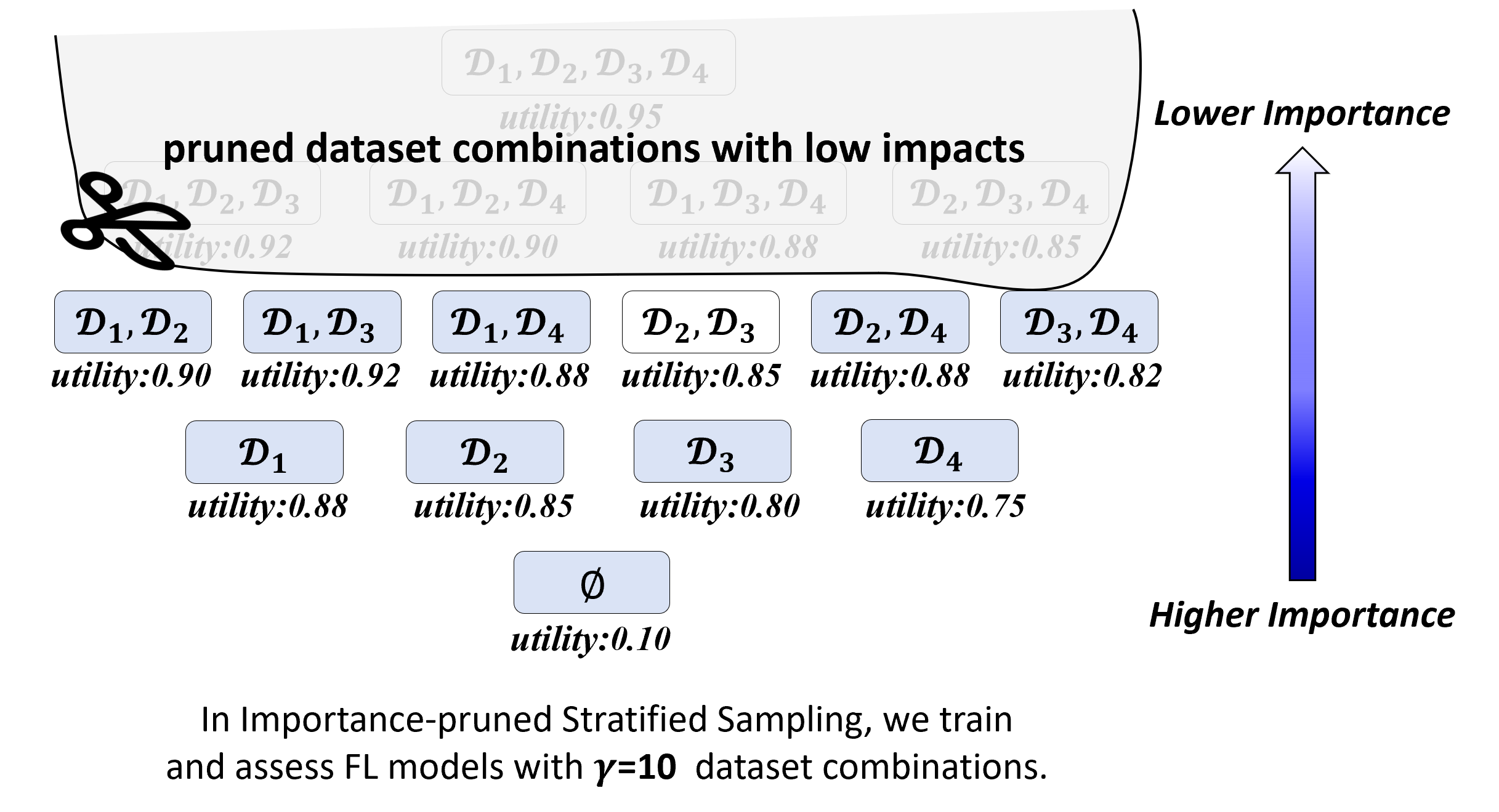}
        \caption{\small Example of \algref{alg:lightSampling} with the same setup as \figref{fig:example_comb_shapley}}
        \label{fig:example_ls}
        \vspace{-1em}
    \end{figure}
    
    \begin{example}
    As in \figref{fig:example_ls}, we further illustrate the \algref{alg:lightSampling} back to the settings in \expref{ex:example_ss_framework} with four FL clients and sampling rounds $\gamma=10$.
    We also take client ${1}$ as the representative.
    Initially, we computed the maximum size for combinations which are fully evaluated in \textsl{IPSS} algorithm, $\medop{k^*=\max\{k\in \mathbb{N}| \sum_{j=0}^{k} \tbinom{4}{j}\leq 10\} = 1}$.
    Then, we train and evaluate FL models with combinations involving no more than one client, \ie $\medop{M_{\emptyset}, M_{\{\mathcal{D}_1\}}, M_{\{\mathcal{D}_2\}}, M_{\{\mathcal{D}_3\}}, M_{\{\mathcal{D}_4\}}}$.
    We can still sample up to $\medop{\gamma-\sum_{j=0}^{k^*}\tbinom{4}{j}=5}$ dataset combinations.
    Satisfying constrains \textit{(1)}-\textit{(3)} in \algref{alg:lightSampling}, we further sample and evaluate FL models under $\medop{\{\mathcal{D}_1, \mathcal{D}_2\}}$, $\medop{\{\mathcal{D}_1, \mathcal{D}_3\}}$, $\medop{\{\mathcal{D}_1, \mathcal{D}_4\}}$, $\medop{\{\mathcal{D}_2, \mathcal{D}_4\}}$ and $\medop{\{\mathcal{D}_3, \mathcal{D}_4\}}$. 
    We take the \textsl{\small MC-SV} to compute average marginal utilities of FL client ${1}$ using all assessed combinations.
    Finally, we have $\medop{\hat{\phi}_{1}=0.22}$.
    Similarly, the estimated data value of FL client $\medop{2}$, $\medop{3}$ and $\medop{4}$ are $\medop{0.20}$, $\medop{0.1842}$ and $\medop{0.1667}$, respectively.
    \end{example}

\subsection{Theoretical Analysis of the IPSS Algorithm}

    \fakeparagraph{Theoretical Evidence}  
    For simplicity, we continue to use the FL linear regression model and theoretically analyze the approximation error and time complexity of the \textsl{IPSS} algorithm.
    \begin{lemma} \label{thm:lmm1}
        Given $n$ datasets each with $t$ training samples,
        if we take negative mean square error (MSE) as the utility function, the estimated data value of client $i$ is $\medop{\mathbb{E}[\hat{\phi}_i] =} \frac{1}{n}(m_{0} - \frac{\mu_{e}|x|}{nt-|x|-1})$, where $|x|$ is input feature dimensions, $\mu_{e}$ is expectation of random noise and $m_0$ is MSE of the initialized model.
    \end{lemma}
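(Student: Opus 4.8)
The plan is to exploit a symmetry that collapses the exponential \textsl{MC-SV} sum into a telescoping expression, and then to carry out a single linear-regression variance computation. First I would observe that, because all datasets are drawn from the same distribution and each has the same size $t$, while the utility is the negative MSE of the ordinary-least-squares fit, the expected utility $\mathbb{E}[U(M_S)]$ depends on the subset $S$ only through its cardinality. Accordingly I set $u_k := \mathbb{E}[U(M_S)]$ for any $S$ with $|S| = k$; note in particular that $u_0 = -m_0$ is the negative MSE of the untrained (initialized) model.

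Next I would substitute $u_k$ into the \textsl{MC-SV} definition and group the subsets $S \subseteq N \setminus \{i\}$ by their size $k$, of which there are exactly $\binom{n-1}{k}$. The coefficient $\tfrac{1}{n\binom{n-1}{|S|}}$ cancels this count precisely, so the sum telescopes:
\begin{equation}
\mathbb{E}[\hat{\phi}_i] = \sum_{S\subseteq N\setminus\{i\}} \frac{u_{|S|+1}-u_{|S|}}{n\binom{n-1}{|S|}} = \frac{1}{n}\sum_{k=0}^{n-1}\bigl(u_{k+1}-u_{k}\bigr) = \frac{1}{n}\bigl(u_n - u_0\bigr) = \frac{1}{n}\bigl(m_0 - \mathbb{E}[\mathrm{MSE}(M_N)]\bigr),
\end{equation}
where $M_N$ is the model trained on all $nt$ samples. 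This reduces the whole claim to evaluating $\mathbb{E}[\mathrm{MSE}(M_N)]$ and showing it equals $\mu_e |x| / (nt - |x| - 1)$.

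For this last step I would express the prediction error of the OLS estimator $\hat{\beta}$ at a test point $x_0$ as $\sigma^2\, x_0^{\top}(X^{\top}X)^{-1} x_0$ (the bias term vanishes under a correctly specified linear model), then take expectations over both the random design and the test point and invoke the inverse-Wishart trace identity $\mathbb{E}[(X^{\top}X)^{-1}] = \Sigma^{-1}/(nt - |x| - 1)$ for $nt$ i.i.d.\ rows with covariance $\Sigma$. This gives $\mathbb{E}[\mathrm{MSE}(M_N)] = \sigma^2 |x| / (nt - |x| - 1)$; identifying $\mu_e$ with the noise variance $\sigma^2$ yields the stated formula once combined with the telescoped expression above.

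The hard part will be this final computation of $\mathbb{E}[\mathrm{MSE}(M_N)]$. The $-|x|-1$ in the denominator is the delicate point: it arises from the expectation of the inverse Gaussian-design (Wishart) matrix, not from an intercept, so I must state the distributional assumption on the covariates explicitly and verify the regime $nt > |x| + 1$ in which the inverse moment exists. I must also be careful that the ``MSE'' tracked here is the estimation error against the Bayes-optimal predictor, so that it scales as $O(1/nt)$ rather than sitting at the $O(1)$ irreducible-noise level; otherwise the numerator would not reduce to $\mu_e |x|$. By contrast, the cardinality-dependence observation and the telescoping identity are routine and require no further assumptions beyond those already in \thmref{thm:CC_Variance}.
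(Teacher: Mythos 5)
Your proposal is correct and follows essentially the same route as the paper: the paper likewise exploits the cardinality-only dependence of the expected utility, cancels the $\binom{n-1}{k}$ count against the \textsl{MC-SV} coefficient to telescope the sum down to $\frac{1}{n}\bigl(m_0 - \mathbb{E}[mse(nt)]\bigr)$, and substitutes $m_0$ for the undefined $mse(0)$. The only difference is that the paper imports the formula $\mathbb{E}[mse(|\mathcal{D}|)] = \mu_e|x|/(|\mathcal{D}|-|x|-1)$ directly from the Donahue--Kleinberg analysis framework rather than rederiving it, whereas you propose to prove that ingredient yourself via the inverse-Wishart identity --- a sound, self-contained filling-in of the cited step (including the correct caveat that the tracked MSE is estimation error, not irreducible noise), but not a different argument.
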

        
    \begin{proof} 
        In the proof we follow the analysis framework for FL by \textit{Donahue} and \textit{Kleinberg} \cite{AAAI21_MSG} where all $|\mathcal{D}|$ data items are drawn from the standard \textit{Gaussian distribution} $\mathcal{N}(0,I)$ and the expected MSE of the linear regression model is as,
        \begin{equation}
            \medop{
                \mathbb{E}[mse(|\mathcal{D}|)] = {\mu_{e}|x|}/{(|\mathcal{D}|-|x|-1)},
            }
        \end{equation}
        where $\mu_{e}$ is the expectation of random noise over data, $|x|$ is the dimension of input features, and $|\mathcal{D}|$ is the size of used data.
        Similarly, for a FL linear regression model with $|\mathcal{D}_{S}|=t|S|$ data items, its expected MSE can be writen as,
        \begin{equation}
            \medop{
                \mathbb{E}[U(M_{S})]=\mathbb{E}[mse(|\mathcal{D}_{S}|)] = {\mu_{e}|x|}/{(t|S|-|x|-1)},
            }
        \end{equation}
        
        If we take the negative MSE as the utility function $\medop{U(\cdot)}$ in data valuation for FL, we can calculate the expectation of data value using \textsl{\small MC-SV} and above analysis model \cite{AAAI21_MSG} as below, 
        \begin{equation}
        \begin{aligned}
             &
            \medop{
            \mathbb{E}[\hat{\phi}_i]=\mathbb{E}[\frac{1}{n}\sum_{S\subseteq (N\backslash\{i\})}\frac{U(M_{S\cup\{i\}})-U(M_{S})}{\tbinom{n-1}{|S|}}]
            }
            \\
            &
            \medop{
            =\mathbb{E}[\frac{1}{n}\sum_{S\subseteq (N\backslash\{i\})}\frac{-mse((|S|+1)t)+mse(|S|t)}{\tbinom{n-1}{|S|}}]
            }\\
            &
            \medop{
            =\frac{1}{n}\sum_{k=0}^{n-1}(-\mathbb{E}[mse((k+1)t)]+\mathbb{E}[mse(kt)]) 
            }\\ 
        \end{aligned}
        \end{equation}
        As $mse(0)$ is not defined in \cite{AAAI21_MSG}, we let $m_0$ denotes the MSE of the initialized linear model.
        So the $\mathbb{E}[\hat{\phi}_i]$ is ,
        \begin{equation}
        \begin{aligned}
            \medop{
            \mathbb{E}[\hat{\phi}_i]
            }
            &
            \medop{
                =\frac{1}{n}(m_0 - \mathbb{E}[mse(nt)])=\frac{1}{n}(m_0 - \frac{\mu_{e}\cdot|x|}{nt-|x|-1})
            }
        \end{aligned}         
        \end{equation}
     Finally, we have the $\mathbb{E}[\hat{\phi}_i]$ and then completed our proof.
    \end{proof}

    \begin{theorem} \label{thm:thm4}
    Given the sampling rounds $\medop{\gamma}$ and taking same assumption as \lemref{thm:lmm1}, the approximation error bound of \algref{alg:lightSampling} is $\mathcal{O} (\frac{n - k^{*}}{k^{*}nt})$, where $k^{*} = {\underset{k}{{\arg\max}}\{{\sum_{i=0}^{k} \binom{n}{j} \leq \gamma \}}}$.
    

    \end{theorem}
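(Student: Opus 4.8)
The plan is to bound the error by identifying exactly which marginal contributions \algref{alg:lightSampling} discards relative to the exact \textsl{\small MC-SV}, and then to evaluate the discarded mass in expectation using the closed-form expected MSE established in \lemref{thm:lmm1}. Writing the true value in its stratified form $\phi_i = \frac{1}{n}\sum_{k=0}^{n-1}\frac{1}{\binom{n-1}{k}}\sum_{|S|=k,\,S\subseteq N\backslash\{i\}}\bigl(U(M_{S\cup\{i\}})-U(M_S)\bigr)$, I would first observe that \textsl{IPSS} reproduces strata $k=0,\dots,k^{*}-1$ exactly (all such combinations are trained in lines 1--7), covers stratum $k=k^{*}$ only partially through the sampled set $\mathcal{P}$ (lines 8--14), and prunes every stratum $k>k^{*}$ entirely. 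Hence the bias $\phi_i-\hat\phi_i$ equals precisely the pruned tail: the uncovered part of stratum $k^{*}$ plus all of strata $k^{*}+1,\dots,n-1$.

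The next step is to take expectations. Under the assumptions of \lemref{thm:lmm1}, the expected marginal contribution of adding one client to a size-$k$ combination depends only on $k$, namely $\delta_k := \mathbb{E}[\mathrm{mse}(kt)]-\mathbb{E}[\mathrm{mse}((k+1)t)]$, so the expected pruned mass telescopes. In particular, the fully pruned strata collapse to $\sum_{k=k^{*}+1}^{n-1}\delta_k = \mathbb{E}[\mathrm{mse}((k^{*}+1)t)]-\mathbb{E}[\mathrm{mse}(nt)]$. Substituting $\mathbb{E}[\mathrm{mse}(\cdot\, t)]=\mu_e|x|/(\cdot\, t-|x|-1)$ and combining the two reciprocals yields a factor proportional to $\frac{n-k^{*}-1}{\bigl((k^{*}+1)t-|x|-1\bigr)\bigl(nt-|x|-1\bigr)}\cdot t$, which is $\mathcal{O}\!\left(\frac{n-k^{*}}{k^{*}nt}\right)$ for large $t$. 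This telescoping collapse is the crux of the argument: it converts a sum over exponentially many pruned combinations into a single difference of two expected errors indexed by combination sizes $k^{*}+1$ and $n$.

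It then remains to control the partially covered stratum $k^{*}$ and show it does not inflate the rate. Here I would invoke the sampling-design constraints (1)--(3): because $\mathcal{P}$ contains only size-$(k^{*}+1)$ combinations with every client appearing equally often ($C_i=C_j$), the uncovered remainder of stratum $k^{*}$ is the same across clients and contributes at most $\delta_{k^{*}}$ in expectation. Since $\delta_{k^{*}}=\mathbb{E}[\mathrm{mse}(k^{*}t)]-\mathbb{E}[\mathrm{mse}((k^{*}+1)t)]=\mathcal{O}\!\left(1/((k^{*})^{2}t)\right)$, a direct comparison shows $\delta_{k^{*}}/\bigl(\tfrac{n-k^{*}}{k^{*}nt}\bigr)=\mathcal{O}\!\left(\tfrac{n}{k^{*}(n-k^{*})}\right)=\mathcal{O}(1)$ uniformly in $k^{*}$, so this remainder is of the same order as the telescoping tail and is absorbed into the bound. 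Assembling the exactly reproduced strata (zero error), the stratum-$k^{*}$ remainder, and the dominant telescoping tail then gives the claimed $\mathcal{O}\!\left(\frac{n-k^{*}}{k^{*}nt}\right)$, with $k^{*}$ as defined in the statement.

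The main obstacle I anticipate is the bookkeeping around stratum $k^{*}$ rather than the telescoping tail itself: one must argue that the fair sampling ($C_i=C_j$) makes the partial stratum-$k^{*}$ estimator's bias consistent across clients and bounded by $\delta_{k^{*}}$, and then carefully justify that the additive offsets $|x|+1$ in the denominators are asymptotically negligible, so that the difference of reciprocals simplifies cleanly to the stated rate. A secondary subtlety is fixing the normalization convention (whether the reported error is the per-client bias or the aggregate pruned mass), which determines the exact placement of the $\tfrac{1}{n}$ prefactor relative to the telescoping difference.
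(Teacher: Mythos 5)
Your proposal is correct in substance and runs on the same engine as the paper's proof---the closed-form expected MSE $\mathbb{E}[mse(|\mathcal{D}|)]=\mu_e|x|/(|\mathcal{D}|-|x|-1)$ from \lemref{thm:lmm1} plus telescoping of expected marginal contributions---but it organizes the argument dually, and in one respect more carefully. The paper telescopes the \emph{retained} strata: it shows $\mathbb{E}[\hat\phi_i^{k^*}]=\frac{1}{n}\bigl(m_0-\mathbb{E}[mse(k^*t)]\bigr)$, forms the ratio $\mathbb{E}[\hat\phi_i^{k^*}]/\mathbb{E}[\phi_i]$, and bounds it below using $m_0\ge \mathbb{E}[mse(|x|+2)]=\mu_e|x|$ (the assumption that the initialized model is worse than one trained on $|x|+2$ samples). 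Notably, the paper's proof silently drops the partially sampled stratum $\mathcal{P}$ and analyzes only strata $0,\dots,k^*-1$. You instead telescope the \emph{pruned tail} directly, $\sum_{k>k^*}\delta_k=\mathbb{E}[mse((k^*+1)t)]-\mathbb{E}[mse(nt)]$, and you explicitly control the partially covered stratum $k^*$, using $k^*(n-k^*)\ge n-1$ to show its $\delta_{k^*}$ contribution has the same order as the tail. That refinement is a genuine improvement in rigor over the paper's bookkeeping (the paper's omission is harmless only because the $\mathcal{P}$ terms are nonnegative in expectation, so including them can only shrink the bias---a point worth stating).

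The one gap to close is the normalization you flag at the end, and it is not merely a convention. The quantity bounded in the paper is the \emph{relative} error $|\mathbb{E}[\hat\phi_i^{k^*}]-\mathbb{E}[\phi_i]|/\mathbb{E}[\phi_i]$, so after computing the absolute pruned mass you must divide by $\mathbb{E}[\phi_i]=\frac{1}{n}\bigl(m_0-\mathbb{E}[mse(nt)]\bigr)$ and show this denominator is $\Theta(1/n)$, i.e., that $m_0-\mathbb{E}[mse(nt)]$ is bounded below by a positive constant. This is exactly where the paper's inequality $mse(|x|+2)\le m_0$ (hence $m_0\ge\mu_e|x|$) enters; without it, a vanishing $\mathbb{E}[\phi_i]$ could inflate the relative error and the stated rate would not follow. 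Once you insert that single inequality, your decomposition, telescoping, and stratum-$k^*$ bound assemble into the same $\mathcal{O}\bigl(\frac{n-k^*}{k^*nt}\bigr)$ conclusion as \thmref{thm:thm4}.
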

    \begin{proof}
        \algref{alg:lightSampling} takes all dataset combinations with no more than $\medop{k^{*}}$ clients, where $\medop{k^{*} =\underset{k}{{\arg\max}}\{{\sum_{i=0}^{k} \binom{n}{j} \leq \gamma\}}}$.
        Similar to \lemref{thm:lmm1}, we calculate FL client $i$'s expected contribution as,
        \begin{equation}
        \begin{aligned}
            \medop
            {
            \mathbb{E}[\hat{\phi}_{i}^{k^{*}}]
            }
            &
            \medop{=\frac{1}{n}\sum_{k=0}^{k^{*}-1}(-\mathbb{E}[mse((k+1)t)]+\mathbb{E}[mse(kt)])
            }
            \\
            &
            \medop
            {
            =\frac{1}{n}(m_{0} - \frac{\mu_{e}|x|}{k^{*}t-|x|-1})
            }
        \end{aligned}
        \end{equation}
    Together with \lemref{thm:lmm1}, the ratio of $\mathbb{E}[\hat{\phi}_{i}^{k^{*}}]$ and $\mathbb{E}[\phi_{i}]$ is,
        \begin{equation} \label{thm:thm4_step_22}
        \begin{aligned}
        \medop
        {
           \frac{\mathbb{E}[\hat{\phi}_i^{k^{*}}]}{\mathbb{E}[\phi_i]} = \frac{\frac{1}{n}(m_{0} - \frac{\mu_{e}\cdot|x|}{k^{*}t-|x|-1})}{\frac{1}{n}(m_{0} - \frac{\mu_{e}\cdot|x|}{nt-|x|-1})}
           = 1 - \frac{\frac{\mu_{e}\cdot|x|}{k^{*}t-|x|-1}-\frac{\mu_{e}\cdot|x|}{nt-|x|-1}}{m_{0}-\frac{\mu_{e}\cdot|x|}{nt-|x|-1}}
        }
        \end{aligned}
        \end{equation}
        As a model trained by $\medop{|x|+2}$ training samples can outperform the initialized model, so the $\medop{mse(|x|+2)}$ is less than the MSE of the initialized model $m_0$.
        We have following inequations,
        \begin{equation}
        \begin{aligned}
        \medop
        {
            \frac{\mathbb{E}[\hat{\phi}_i^{k^{*}}]}{\mathbb{E}[\phi_i]}
        }
        &
        \medop{
            \geq 1 - \frac{\frac{1}{k^{*}t-|x|-1}-\frac{1}{nt-|x|-1}}{\frac{1}{|x|+2-|x|-1}-\frac{1}{nt-|x|-1}}
        } 
        {\small 
            \medop{
            = 1-\mfrac{(n-k^{*})t}{(k^{*}t-|x|-1)(nt-|x|-2)}
            } 
        }
        \end{aligned}
        \end{equation}
        Note that the input feature dimension $|x|$ is an constant number and we can complete the proof of \thmref{thm:thm4} as follows,
        \begin{equation}
        \medop
        {
            \frac{|\mathbb{E}[\hat{\phi}_i^{k^{*}}]-\mathbb{E}[\phi_{i}]|}{\mathbb{E}[\phi_i]} 
        }
        \medop{
            \leq \mfrac{(n-k^{*})t}{(k^{*}t-|x|-1)(nt-|x|-2)} = \mathcal{O}\left(\frac{n-k^{*}}{k^{*}nt}\right) 
        }
        \end{equation}
    \end{proof}

    \fakeparagraph{Approximation Error Analysis}
    \thmref{thm:thm4} suggests that even with a small $k^{*}$, the relative error between $\mathbb{E}[\phi_{i}^{k^{*}}]$ and $\mathbb{E}[\phi_{i}]$ can remain minimal, which is particularly relevant in typical FL scenarios, where the number of training samples in a dataset substantially exceeds the dimension of input features. 
    Take MNIST \cite{mnist}, a most widely used benchmark dataset, as an example.
    It contains over 60,000 training images, each represented by 784 dimensional features, implying that $|x| \ll nt$. 
    If each client holds the same number of training samples, the relative error of \algref{alg:lightSampling} is $\mathcal{O}(\frac{n}{k^{*}\mathcal{D}})$. 
    In this case, accurately approximating the data value only need to evaluate a select group of dataset combinations, consistent with the \textit{key combinations phenomenon} observed earlier in \secref{sec:improve_KCP}.

    \fakeparagraph{Time Complexity Analysis}
    The time cost of \algref{alg:lightSampling} mainly relies on FL training and assessing processes on various dataset combinations.
    Assuming time cost to train and evaluate the FL model is denoted as $\tau$. The time complexity of \algref{alg:lightSampling} can be analyzed as follows.
    For lines 1-14, the complexity is $\mathcal{O}(\tau\gamma)$, as \algref{alg:lightSampling} utilizes no more than $\gamma$ dataset combinations.
    For lines 15-17, \algref{alg:lightSampling} computes marginal contributions upto $\sum_{j=0}^{k^{*}-1}\tbinom{n}{j+1}\tbinom{n}{j}+\tbinom{n}{k^{*}}(\gamma-\sum_{j=0}^{k^*}\tbinom{n}{j})$ times, leading to a time complexity of $\mathcal{O}(\gamma\tbinom{n}{k^*})$.
    We usually only take a small group of combinations in data valuation so $k^*$ is a small integer.
    As time to train and evaluate a FL model $\tau$ is usually much larger than $\tbinom{n}{k^*}$, the time complexity of the \textsl{IPSS} is $\mathcal{O}(\tau\gamma)$.
    
\section{Experimental Evaluations} \label{sec:exp}
This section presents evaluations of our proposed methods.
\subsection{Experimental Setup}
\fakeparagraph{Datasets}
    We evaluate baseline algorithms for SV-based data valuation in FL over both synthetic and real-world datasets.
    
    \noindent\textit{(i) \underline{Synthetic Dataset}}.  
    We take the MNIST~\cite{mnist}, a widely-used datasets with 60,000+ training samples and 10,000+ testing samples to create the synthetic datasets in FL.        
    Following the experimental setup in \cite{Bigdata19_SV, TIST22_GTG}, we split the MNIST~\cite{mnist} into partitions and create customized training dataset tailored for FL, where datasets of each FL client varies in size, distribution and quality (\ie noise). 
    We highlight experimental features in each FL setting. 
    \textit{(a) same-size-same-distribution}: we split training dataset into partitions with same size and label distribution.
    \textit{(b) same-size-different-distribution}: we partition the training samples and set some label are majorly belongs to certain client.
    \textit{(c) different-size-same-distribution}: we randomly split
    training samples into partitions with their ratios of data size {\small${1:2:\cdots:n}$}, where $n$ is the client number.  
    \textit{(d) same-size-noisy-label}: we change {\small${0\%\sim20\%}$} of labels in the partitioned dataset into one of other labels with equal probability.
    \textit{(e) same-size-noisy-feature}: we generate \textit{Gaussian} noise $\mathcal{N}(0,1)$ and scale them by multiplying {\small ${0.00 \sim 0.20}$} as the noise added on training samples.
    
    \noindent\textit{(ii) \underline{Real-world Dataset}}. We also conduct experiments on three real-world datasets, FEMNIST \cite{arXiv18_LEAF}, Adult \cite{adult_2} and Sent-140 \cite{arXiv18_LEAF}.
    FEMNIST \cite{arXiv18_LEAF} is a benchmark dataset for FL and is included within TensorFlow Federated \cite{tensorflow_f}.
    It contains 805,000+ samples from 3,500+ users, allowing it to be partitioned into datasets for clients in FL by the user-ids.
    The Adult~\cite{adult_2} is a tabular dataset commonly used in vertical FL \cite{SIGMOD21_FL_Tree, fu2022blindfl, ICDE22_SV_Wang} and it contains 48,800+ training samples and 14 features (\eg income, occupation, and native-country).
    Without loss of generality, we can partition the training samples in Adult to several datasets for FL clients according to user's occupation.
    
\fakeparagraph{Compared Algorithms} 
We compare our \textsl{IPSS} algorithm (in \secref{sec:ps_alg}) with a series of existing baselines (in three categories). 
The first category (\textsl{Perm-Shapley}, \textsl{MC-Shapley} and \textsl{DIG-FL}) calculates data value directly by definition, while the second category (\textsl{Exteneded-TMC}, \textsl{Extended GTB} and \textsl{CC-Shapley}) uses sampling-based methods.
The last category (\textsl{OR}, $\lambda$\textsl{-MR}, \textsl{GTG-Shapley}) approximates the data value through gradients collected in FL training process.

\begin{itemize}

    \item \textsl{Perm-Shapley.}
    It directly calculates data value of  clients in FL according to the definition of the \textit{Permutation-based Shapley value} (\textsl{\small Perm-SV}), which trains and evaluates FL models based on permutations of all datasets.
    
    \item \textsl{MC-Shapley}. 
    Similarly, it directly calculates the data value through the \textsl{MC-SV} based computation scheme.
    \item  \textsl{DIG-FL}.
    It efficiently approximates the data value in FL~\cite{ICDE22_SV_Wang}, which only needs to evaluate $\mathcal{O}(n)$ numbers of dataset combinations under certain assumptions \cite{ICDE22_SV_Wang}.
    \item \textsl{Extended-TMC}. 
    It is an extension of widely-adopted data valuation scheme for general machine learning \cite{icml19shapley}.
    We extend and compare the Truncated Monte Carlo algorithm of \cite{icml19shapley} to FL scenario. 
    It randomly generates a permutation $\pi$ of all $n!$ permutations and trains and evaluates the FL models based on the permutation.
    Then, the algorithm approximates the \textsl{Perm-SV} according to, 
    \begin{equation}
        {
        \setlength{\abovedisplayskip}{5pt}
        \setlength{\belowdisplayskip}{5pt}
            \medop{
    	   \phi_i=\mathbb{E}_{\pi\sim\Pi}[U(M_{\pi[\mathrm{p}(i)]\cup\{i\}})-U(M_{\pi[\mathrm{p}(i)]})].
            }
        }
    \end{equation}
    
    \item \textsl{Extended-GTB.}
    It is also an extension of a representative data valuation scheme \cite{aistats19shapley} and we extend the Group-Testing-Based SV estimation to FL scenario as follows.
    The \textsl{GTB} can estimate the contributions of each client in FL by solving a feasibility problem \rev{through the randomly selected subsets $S\subseteq N$}.
    Finally, we incrementally relax the constraints until there is a feasible solution.
 
     \item \textsl{OR}. It directly takes gradients within the FL process with all clients the same as gradients under other combinations \cite{Bigdata19_SV}.
     OR can approximate the FL model by these gradients without extra training, however, there is no theoretical guarantee for OR in approximation errors.

    \item \textsl{$\lambda$-MR}. It takes the \textsl{MC-SV}-based scheme and estimates data value in each training round of FL and aggregate them as the final results \cite{FLIP20_Wei}. 
    The $\lambda$-MR avoids the additional training of the FL models as well. 
    
    \item  \textsl{CC-Shapley.}
    It is one of the state-of-the-art sampling methods to approximate the Shapley value \cite{SIGMOD23_SV_Zhang}, which estimates data value using the \textsl{\small CC-SV}-based schemes.

    \item \textsl{GTG-Shapley.}
    Similar to $\lambda$-MR, it also approximates the data value using gradients \cite{TIST22_GTG}.
    It adopts the \textsl{Perm-SV} and uses Monte Carlo sampling approach to reduce the number of model reconstructions over rounds.
\end{itemize}

\begin{figure*}[htb]
    \centering
    \setlength{\tabcolsep}{0.5pt}
    \begin{tabular}{ccccc}
        \multicolumn{5}{c}{
            \vspace{-0.5em}
            \includegraphics[width=0.985\linewidth]{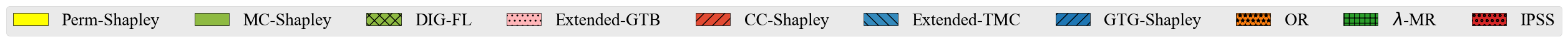}
        } \\
        \vspace{-0.5em}
        \includegraphics[width=0.185\linewidth] {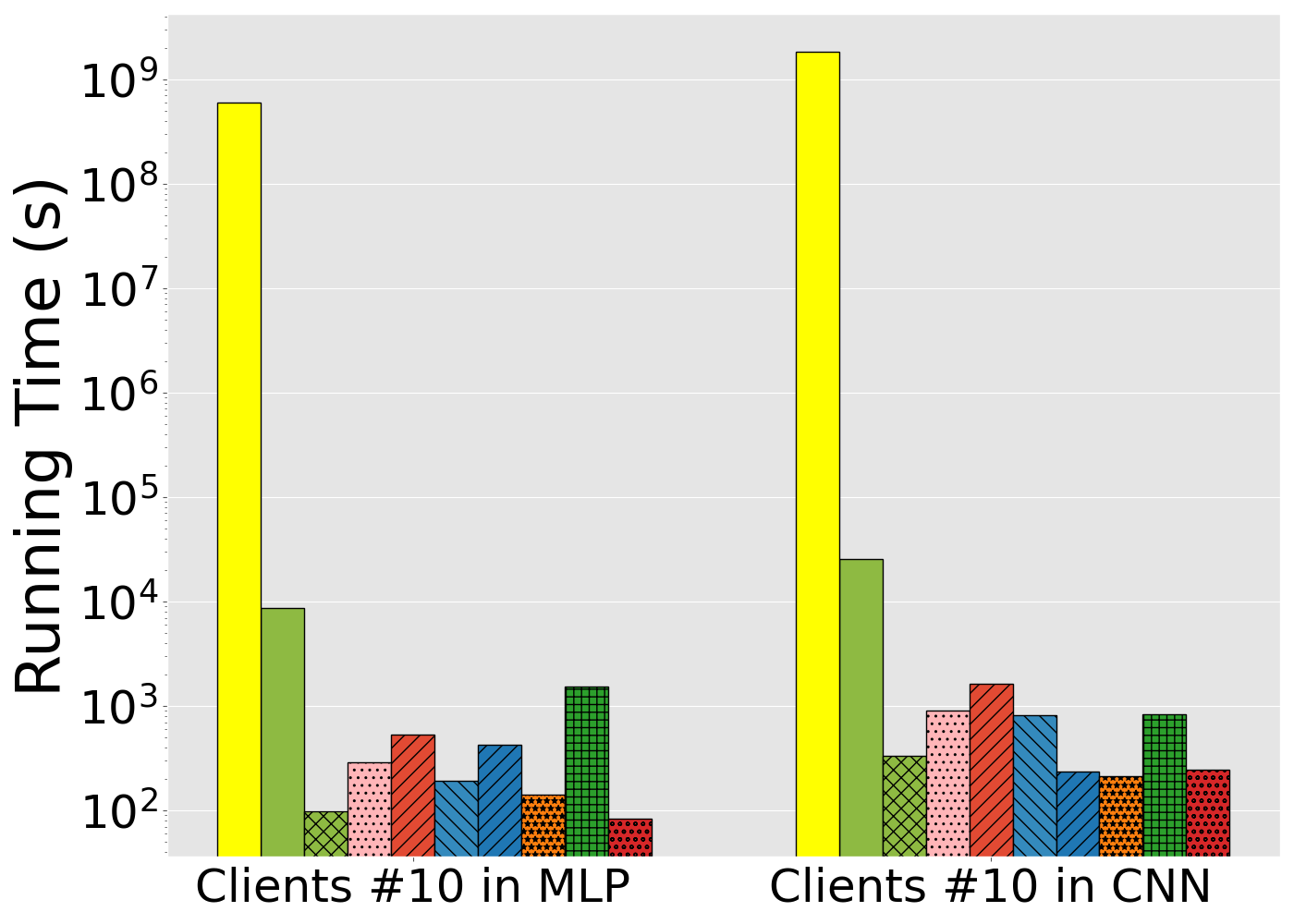} & 
        \includegraphics[width=0.185\linewidth]{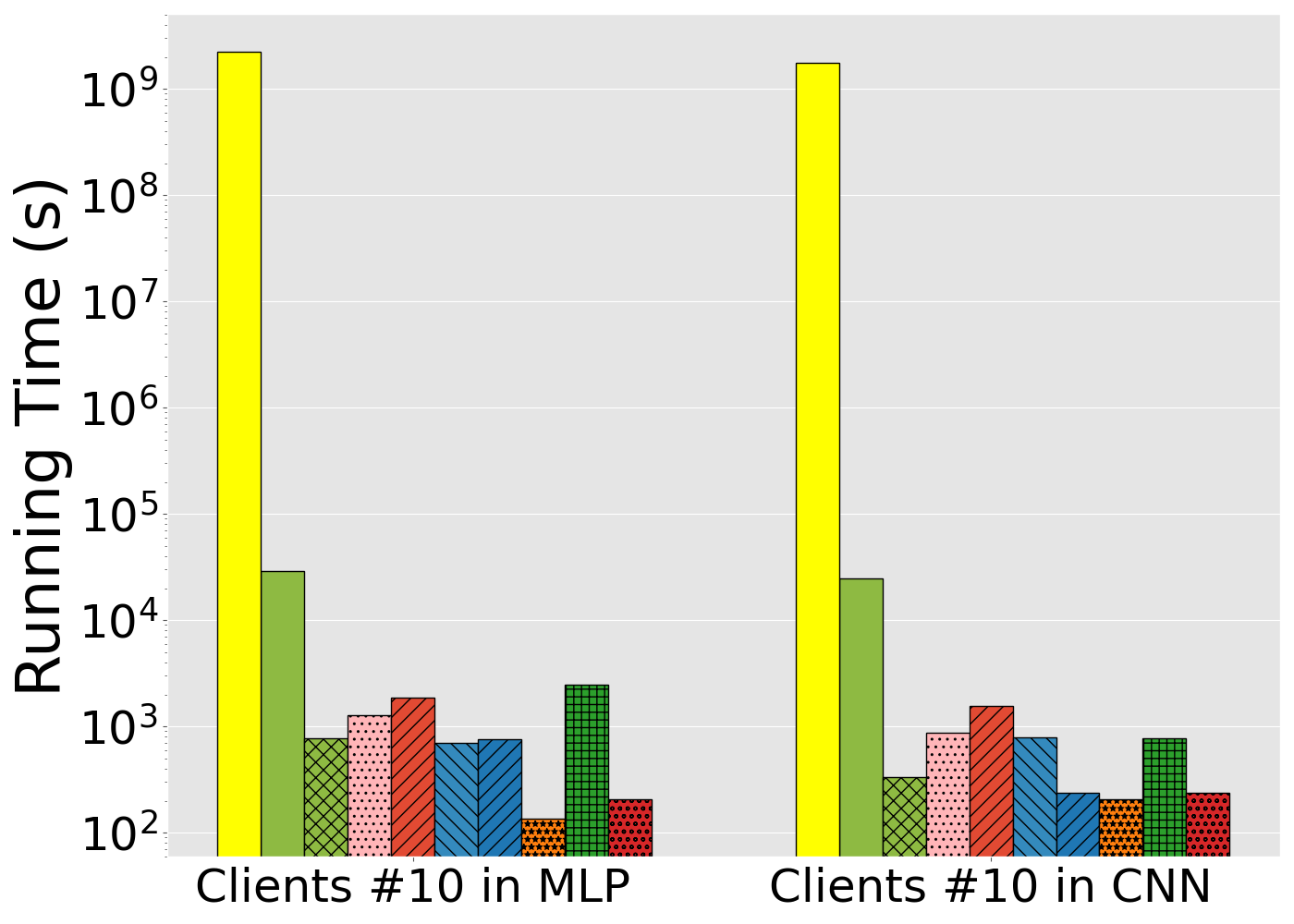} & 
        \includegraphics[width=0.185\linewidth]{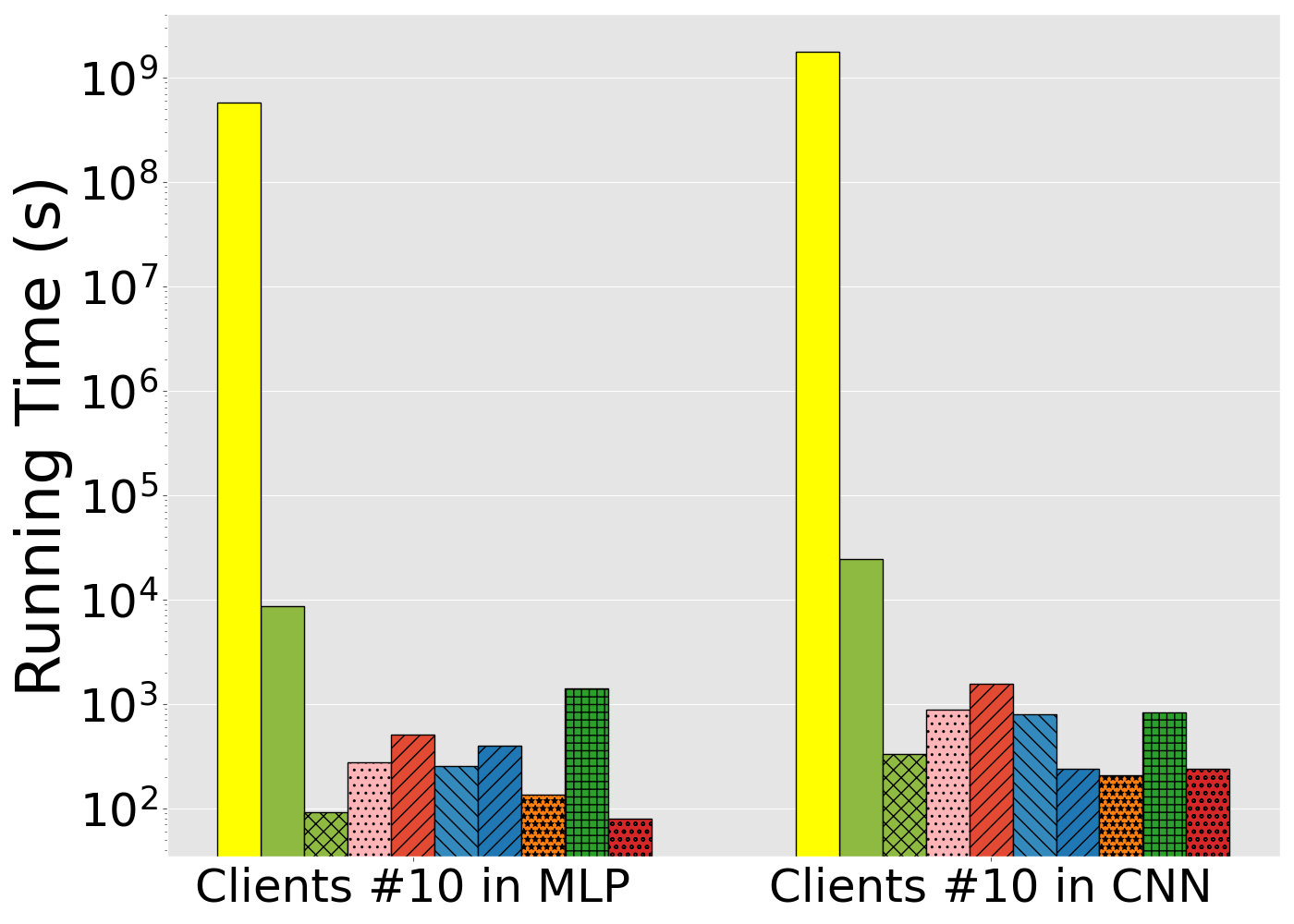} & 
        \includegraphics[width=0.185\linewidth]{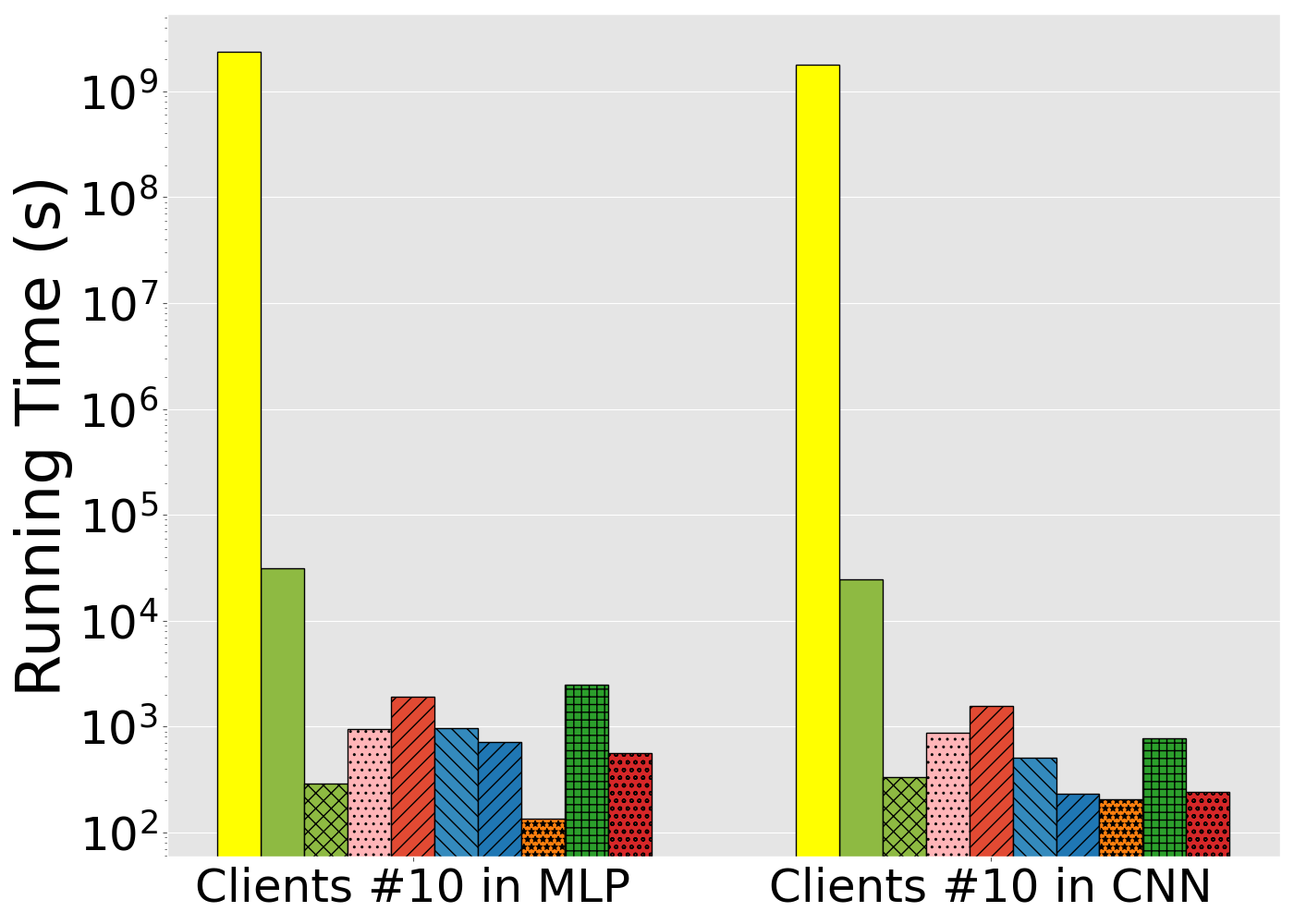} & 
        \includegraphics[width=0.185\linewidth]{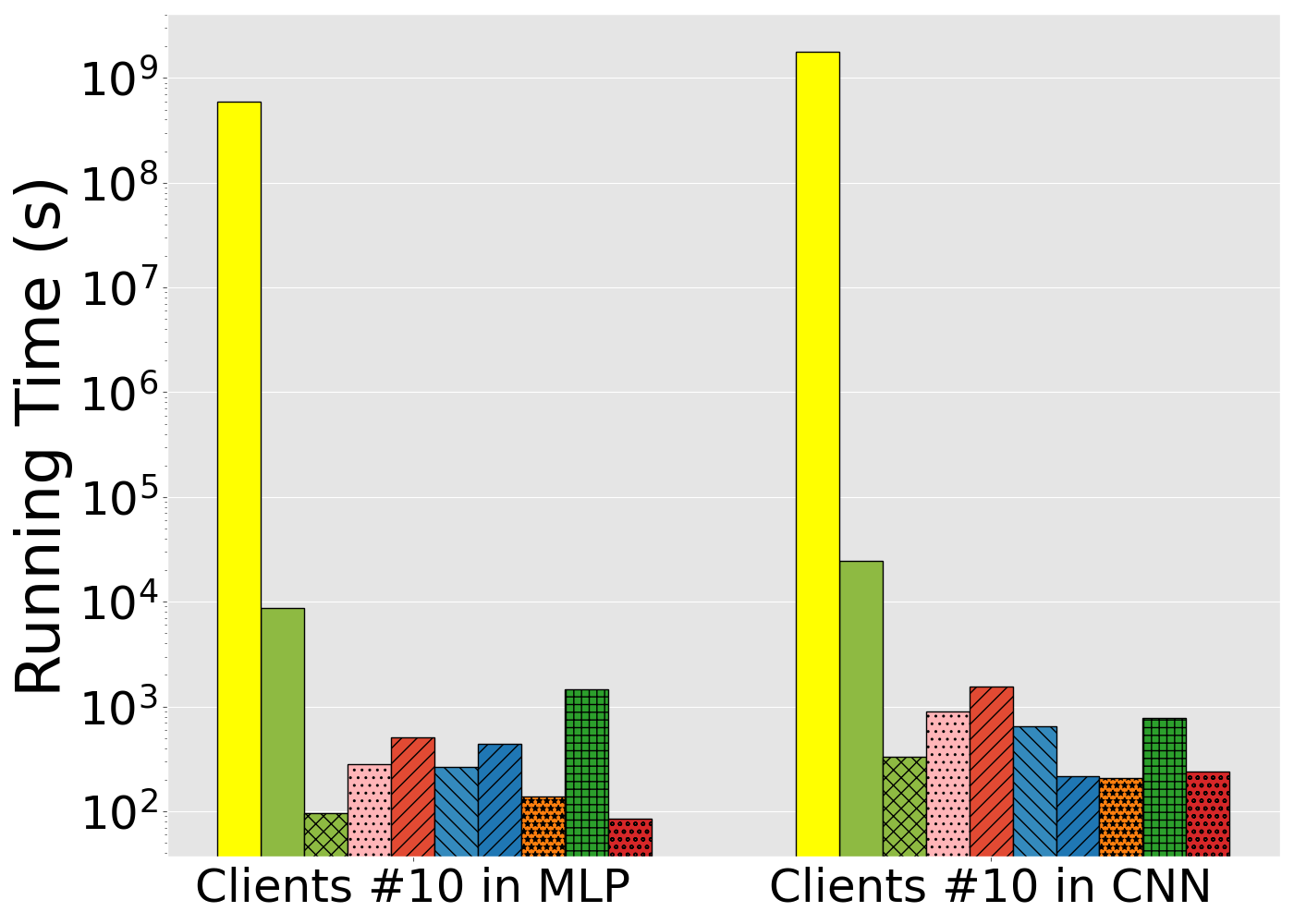} \\ 
        \vspace{-0.5em}
        \includegraphics[width=0.185\linewidth]{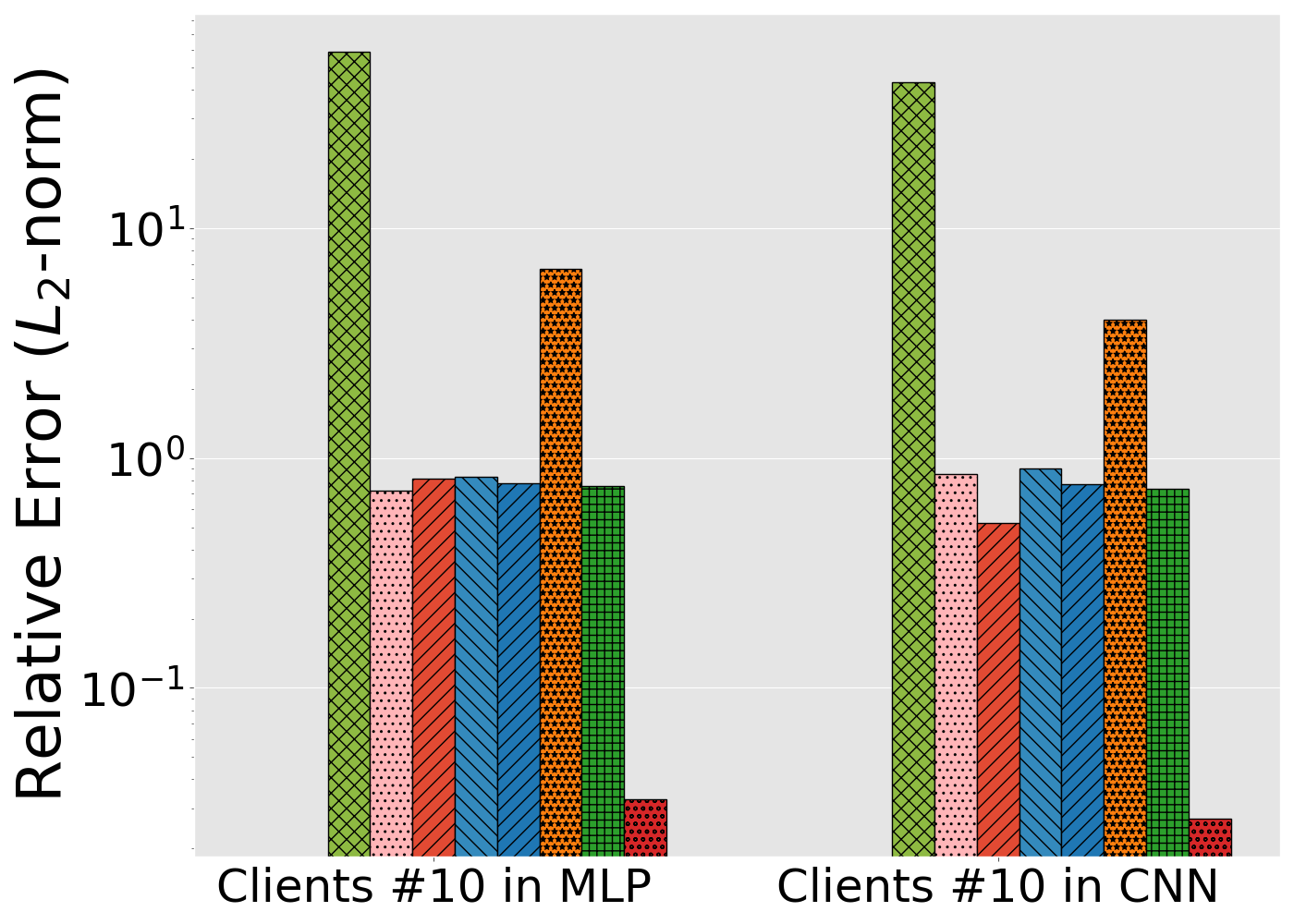} & 
        \includegraphics[width=0.185\linewidth]{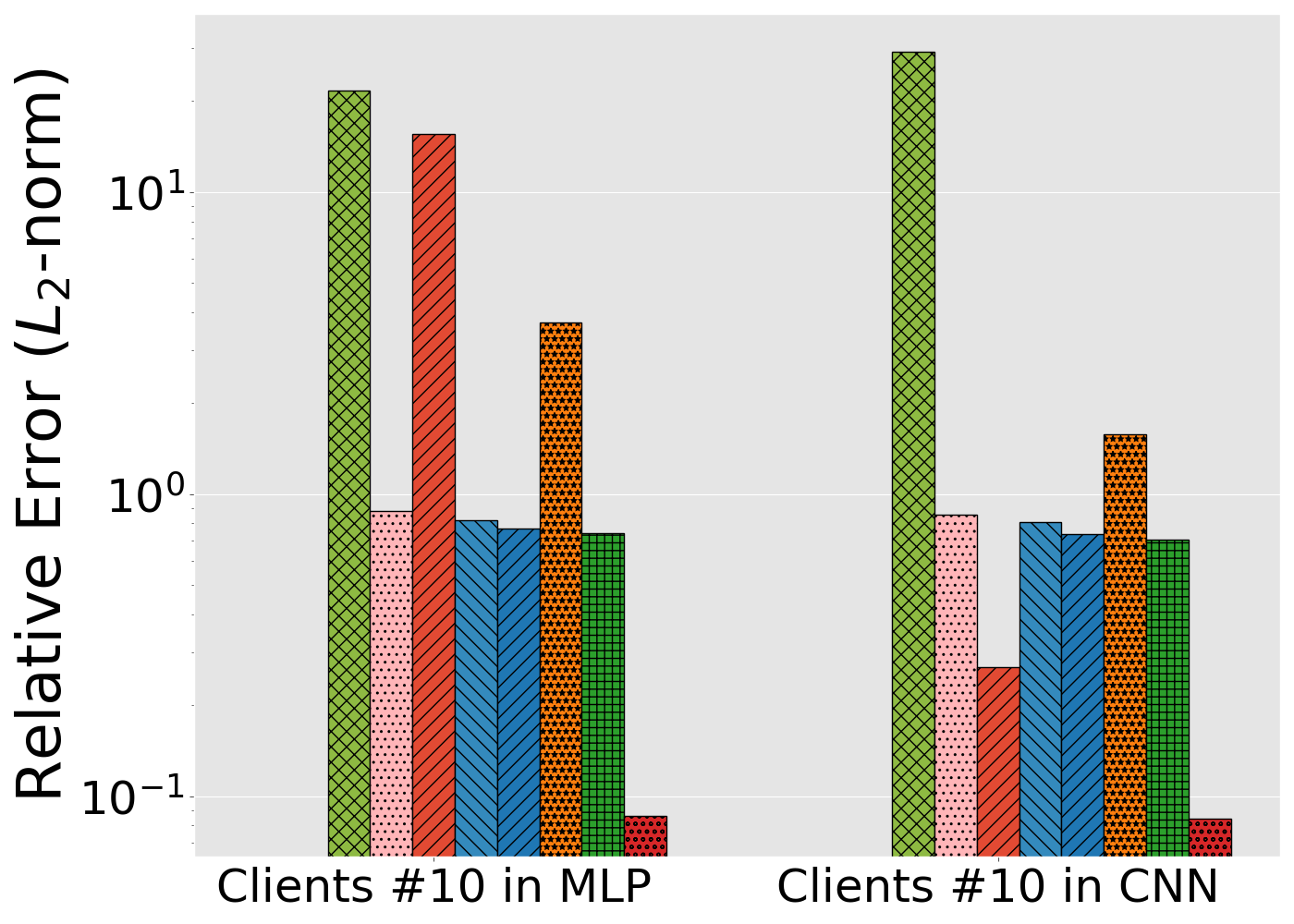} & 
        \includegraphics[width=0.185\linewidth]{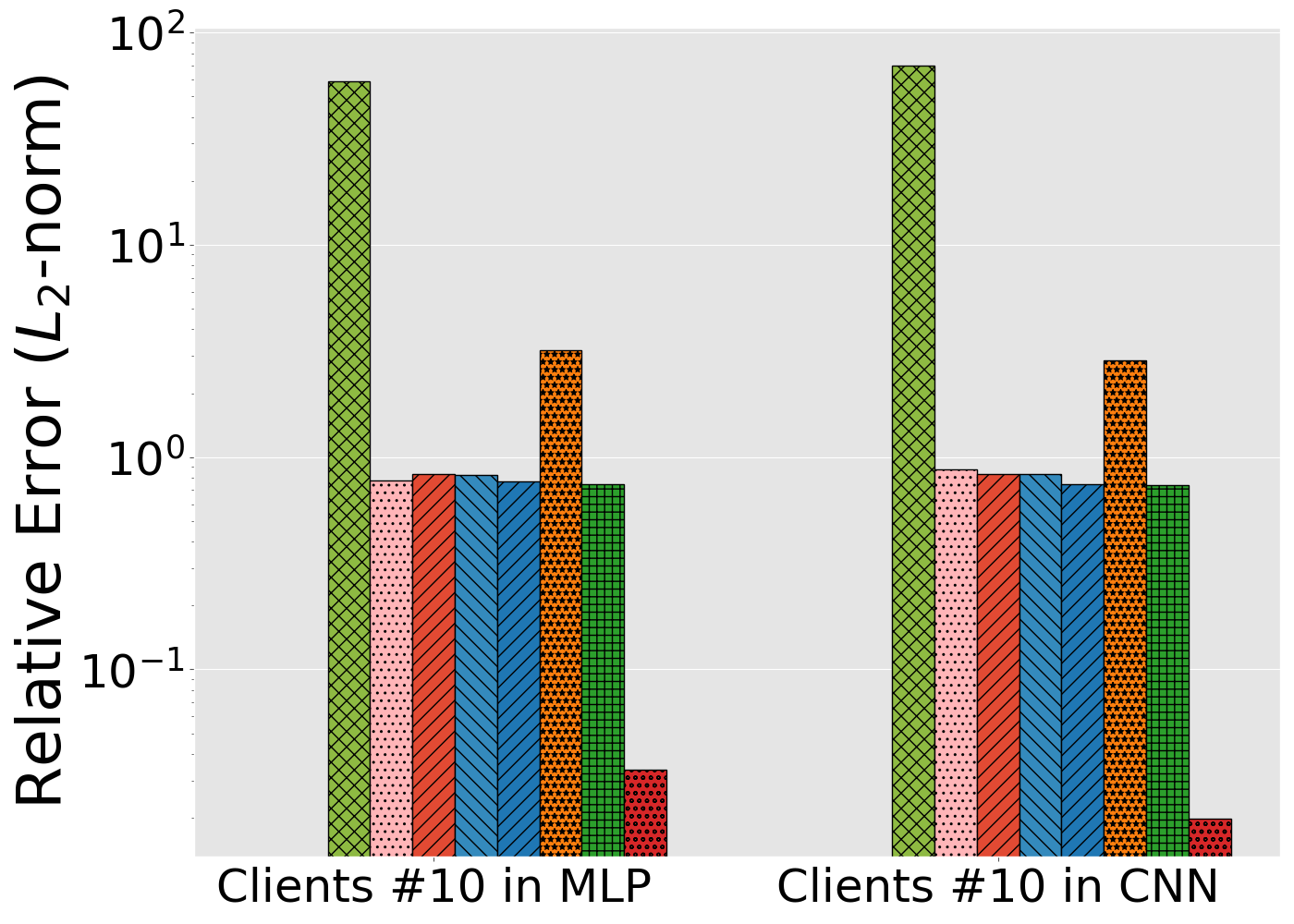} & 
        \includegraphics[width=0.185\linewidth]{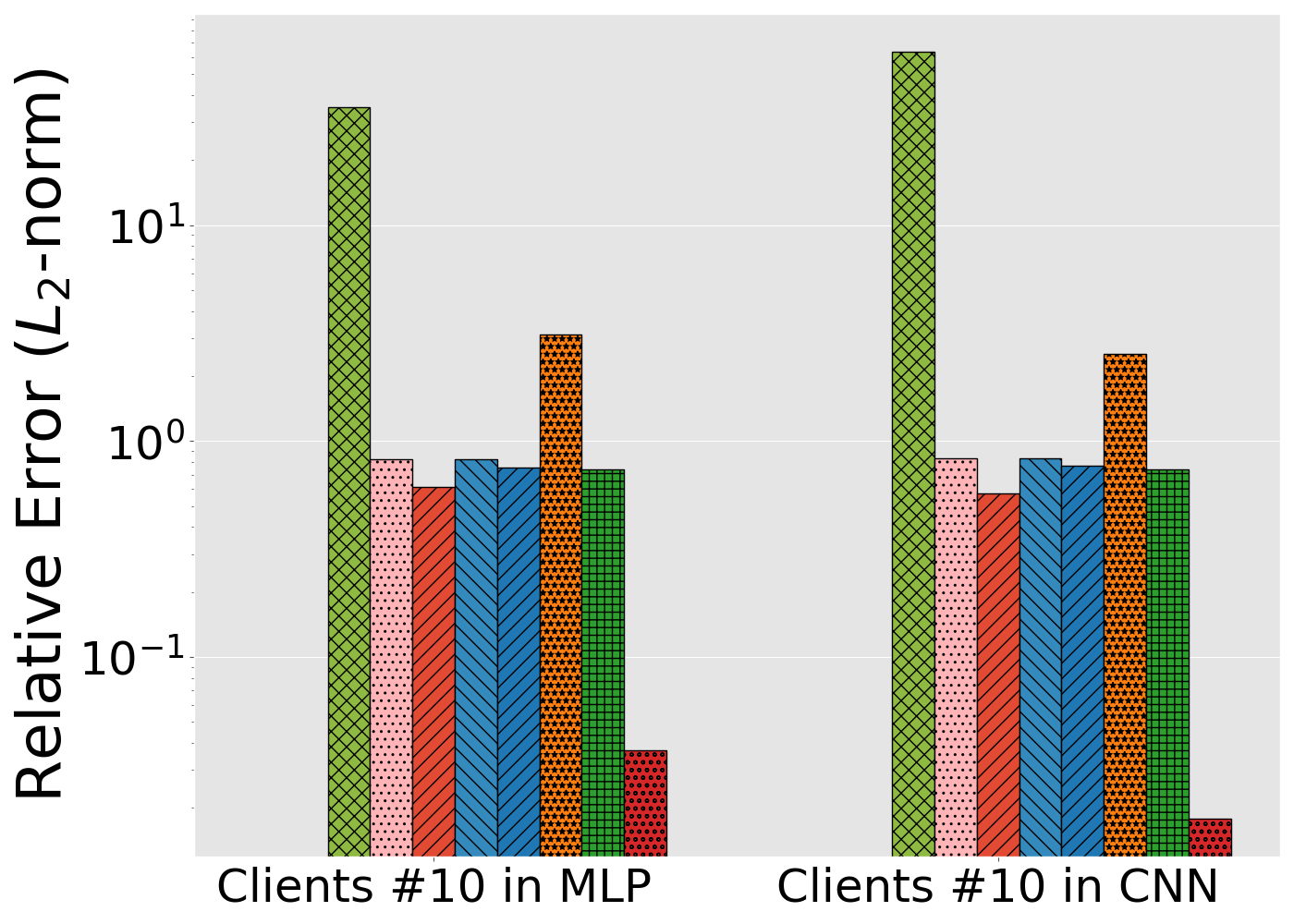} & 
        \includegraphics[width=0.185\linewidth]{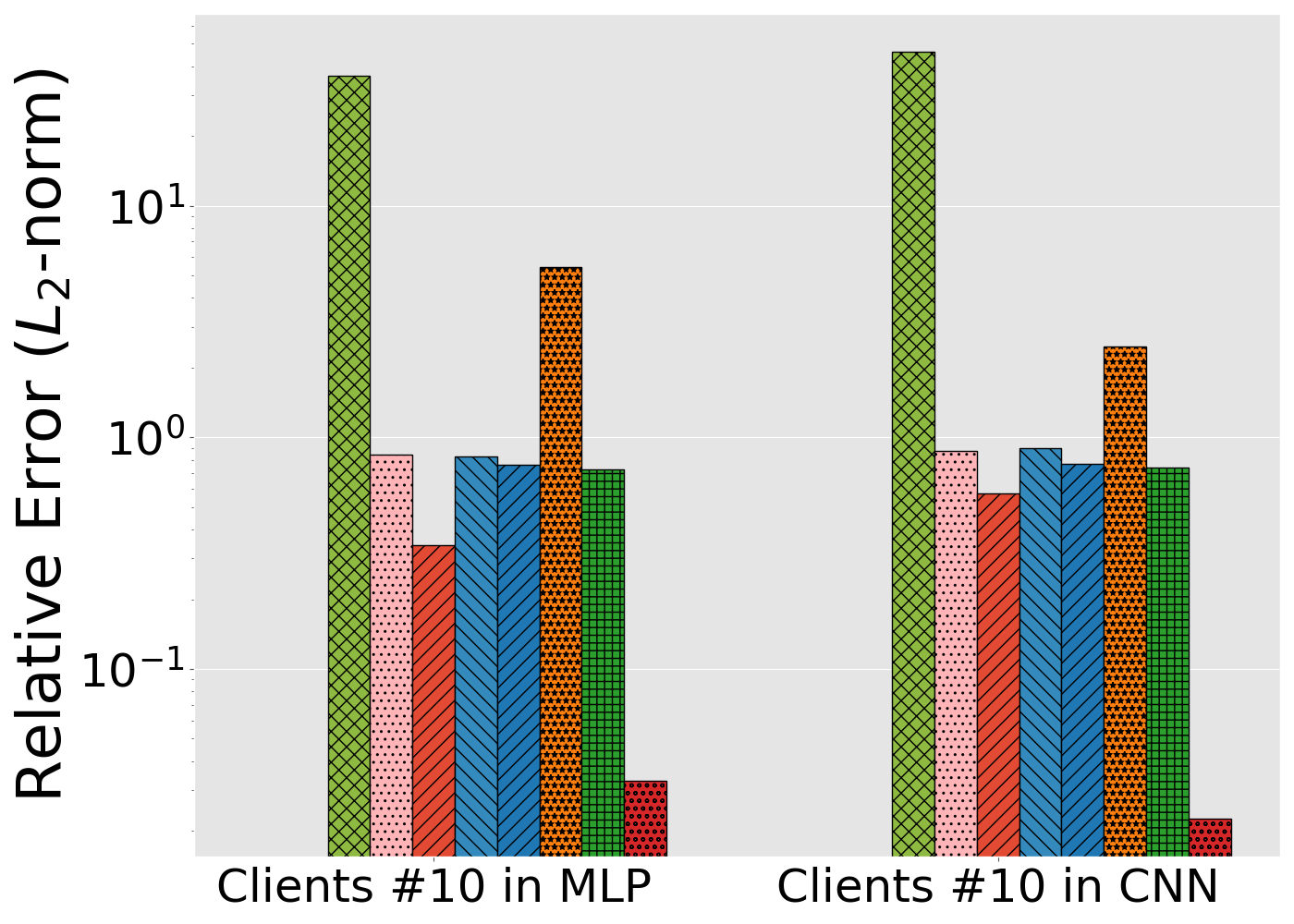} \\ 
         \textit{{\small (a) same-size-same-distr.}} & \textit{{\small (b) same-size-diff.-distr.}} & \textit{{\small (c) diff.-size-same-distr.}} & \textit{{\small (d) same-size-noisy-label}} & \textit{{\small (e) same-size-noisy-feature}} \\
    \end{tabular}
    \caption{Experimental results on the synthetic datasets with five different setups varying in size, distribution and quality.}
    \label{fig:e2e}
\end{figure*}

\fakeparagraph{Evaluation Metrics} 
We employ following two metrics to assess the performance of compared algorithms.
\textit{(i) Calculation Time}: it measures the running time required to calculate the data value, including the time to train and evaluate FL models. 
\textit{(ii) Approximation Error}: it represents the effectiveness of approximation algorithms by the relative error in $\textit{l}_{2}$-norm:
\rev{
\begin{equation}
        \medop{
         \textit{l}_{2}(\hat{\phi}, \phi) = 
         {\Vert\hat{\phi}-\phi\Vert_{2}}/{\Vert\phi \Vert_{2}} = 
         {\sqrt{\sum_{i=1}^{n} (\phi_{i}-{\hat{\phi}_{i}})^{2}}}/{\sqrt{\sum_{i=1}^{n} {\phi_{i}}^{2}}}
        }
\end{equation}
}
    where $\phi=(\phi_1, \phi_2, \dots, \phi_n)$ denotes the data value of $n$ FL clients and $\hat{\phi}=(\hat{\phi}_1, \hat{\phi}_2, \dots, \hat{\phi}_n)$ is the approximation results.

\fakeparagraph{Implementations}
All algorithms were implemented in Python with TensorFlow 2.4 \cite{tensorflow} and TensorFlow Federated 0.18 \cite{tensorflow_f}. 
To simulate multiple data providers in FL, we adopt the multi-processing techniques and the gRPC protocol. 
The experimental setup was executed on a machine equipped with an NVIDIA GeForce RTX 3090 GPU, an AMD Ryzen 7950X CPU @ 3.0GHz, and 128GB of main memory. 
Our experiments incorporated multi-layer perceptron (MLP), convolutional neural network (CNN) and XGBoost (XGB) models, which are all extensively used in data science community.
\rev{
    When the number of FL clients is three, six, and ten, all sampling-based approximation approaches are configured with the same number of sampling rounds, \ie 5, 8, and 32, respectively (as in \tabref{tlb:gamma}).
}
\rev{
    The open-sourced code is available at ``{https://github.com/t0ush1/Shapley-Data-Valuation}".
}
\begin{table}[thb]
    \centering
    \belowrulesep=0.2pt
    \aboverulesep=0.2pt
    \resizebox{0.95\linewidth}{!}{
        \begin{tabular}{|c|c|c|}
        \hline
        
        \rev{$n=3 \rightarrow \gamma=5$} & \rev{$n=6 \rightarrow\gamma=8$} & \rev{$n=10 \rightarrow \gamma=32$}
        \\
        \hline 
        \end{tabular}
    }
    \caption{\small \rev{The adopted sampling rounds ($\gamma$) for client number ($n$).}}
    \label{tlb:gamma}
    \vspace{-1.8em}
\end{table}

\subsection{Performance on Synthetic Datasets}

We showcase the experimental results under varying dataset sizes, distributions, and noise levels. 
This series of experiments presents the time cost and approximation error for the compared algorithms using both MLP and CNN models.
In each of the following five training setups, we use ten clients in FL. 


   

\noindent\rev{\textit{(a) same-size-same-distr.}}.
\figref{fig:e2e}(a) plots time cost and error of compared algorithms.
For the time cost, \textsl{OR} and \textsl{IPSS} have the lowest time cost in both MLP model and CNN model.
The time cost of \textsl{MC-Shapley} is 62.2$\times$ and 104.1$\times$ as \textsl{OR} and \textsl{IPSS}, respectively.
For approxi error, \textsl{IPSS} is the lowest and close to zero, \ie the estimated data value via \textsl{IPSS} is almost the same as the exact one.
\noindent\rev{\textit{(b) same-size-diff.-distr.}}.
From \rev{\figref{fig:e2e}(b)}, \textsl{OR} is still the fastest in both MLP and CNN model and \textsl{IPSS} is the second fastest.
\textsl{IPSS} is 3.4$\sim$9.0$\times$ faster than \textsl{GTG-Shapley} and \textsl{CC-Shapley}, respectively.
For approximation error, \textsl{IPSS} outperforms other algorithms.
\textsl{OR} performs poor in accuracy.
Overall, \textsl{IPSS} outperforms the others in this setting.
\noindent\rev{\textit{(c) diff.-size-same-distr.}}.
As shown in \rev{\figref{fig:e2e}(c)}, \textsl{OR} and \textsl{IPSS} exhibit a lower time cost compared to other baseline algorithms.
For estimation error, \textsl{IPSS} also approximates the exact SV well and outperforms the other approximation algorithms significantly. 
The $\lambda$-\textsl{MR} ranks the second in accuracy for both MLP and CNN model.
\noindent\rev{\textit{(d) same-size-noisy-label}}.
From \rev{\figref{fig:e2e}(d)},
the relative error of $\lambda$-\textsl{MR} and \textsl{IPSS} is stable and \textsl{IPSS} still has the lowest error.
The relative error of \textsl{Extended-TMC} and \textsl{Extended-GTB} is 22.3$\times$ and 22.5$\times$ of \textsl{IPSS}, respectively.
\noindent\rev{\textit{(e) same-size-noisy-feature}}.
$\lambda$-\textsl{MR} and \textsl{CC-Shapley} have the highest time cost for MLP and CNN models among all compared algorithms.
As shown in \rev{\figref{fig:e2e}(e)}, the error of \textsl{CC-Shapley} and $\lambda$-\textsl{MR} is 10.3$\sim$26.0$\times$ and 22.1$\sim$33.6$\times$ greater than that of \textsl{IPSS}, respectively.

\subsection{Results on Real-world Dataset} \label{sec:exp_real}
    We also validate our approximation algorithm on two real dataset, which can be naturally partitioned several datasets for clients in FL and we detail evaluations on each below.

\begin{table}[bht]
\belowrulesep=0.10pt
\aboverulesep=0.10pt
\setlength{\tabcolsep}{0.75pt}
\resizebox{\linewidth}{!}{
    \begin{tabular}{@{}c|c|c|cccccccccc@{}}
    \toprule [1.2pt]
     & {$n$} & {Metrics} & {\textsl{Perm-Shap.}} & {\textsl{MC-Shap.}} & {\textsl{DIG-FL}} & {\textsl{Ext-TMC}} & {\textsl{Ext-GTB}} & {\textsl{CC-Shap.}} & {\textsl{GTG-Shap.}} & {\textsl{OR}} & {\textsl{$\lambda$-MR}} & {\textsl{IPSS}} \\ \midrule[1pt]
    \multirow{6}{*}{\textbf{\begin{tabular}[c]{@{}c@{}}MLP\\ \end{tabular}}} & \multirow{2}{*}{3} & Time{\small (s)} & 3729 & 842 & 584 & 568 & 807 & 1021 & 47 & \textbf{\colorbox{dc1}{12}}  & 29 & 258 \\
     &  & Error{\small ($l_2$)} & - & - & 5.01 & 0.79 & 0.59 & 0.35 & 0.90 & 2.46 & 0.88 & \textbf{\colorbox{dc1}{0.06}} \\ \cmidrule(){2-13} 
     & \multirow{2}{*}{6} & Time{\small (s)} & 9.1$\times 10^6$ & 6496 & 1077 & 843 & 1120 & 2020 & 161 & \textbf{\colorbox{dc1}{89}} & 228 & 329 \\
     &  & Error{\small ($l_2$)} & - & - & 0.70 & 0.96 & 0.90 & 1.93 & 0.89 & 3.13 & 0.87 & \textbf{\colorbox{dc1}{0.49}} \\ \cmidrule(){2-13} 
     & \multirow{2}{*}{10} & Time{\small (s)} & 6.8$\times 10^9$ & 95985 & 1695 & 3061 & 4129 & 5988 & 1086 & 1414 & 3764 & \textbf{\colorbox{dc1}{568}} \\
     &  & Error{\small ($l_2$)} & - & - & 0.77 & 0.82 & 0.85 & 1.16 & 0.85 & 3.09 & 0.83 & \textbf{\colorbox{dc1}{0.02}}\\ \midrule 
    \multirow{6}{*}{\textbf{\begin{tabular}[c]{@{}c@{}}CNN\\ \end{tabular}}} & \multirow{2}{*}{3} & Time{\small (s)} & 1629 & 372 & 230 & 231 & 352 & 413 & 26 & \textbf{\colorbox{dc1}{7}} & 22 & 142 \\
     &  & Error{\small ($l_2$)} & - & - & 95.14 & 0.81 & 0.60 & 0.02 & 0.87 & 0.46 & 0.73 & \textbf{\colorbox{dc1}{0.01}} \\ \cmidrule(){2-13} 
     & \multirow{2}{*}{6} & Time{\small (s)} & 3.6$\times 10^5$ & 2783 & 407 & 352 & 484 & 667 & 108 & \textbf{\colorbox{dc1}{47}}  & 154 & 211 \\
     &  & Error{\small ($l_2$)} & - & - & 78.25 & 0.91 & 0.70 & 0.40 & 0.76 & 0.35 & 0.73 & \textbf{\colorbox{dc1}{0.02}} \\ \cmidrule(){2-13} 
     & \multirow{2}{*}{10} & Time{\small (s)} & 2.8$\times 10^9$ & 40134 & 655 & 1220 & 1612 & 2553 & 680 & 641 & 2504 &\textbf{\colorbox{dc1}{257}}  \\
     &  & Error{\small ($l_2$)} & - & - & 98.42 & 0.83 & 0.87 & 2.60 & 0.75 & 0.76 & 0.71 & \textbf{\colorbox{dc1}{0.02}} \\ \bottomrule 
    \end{tabular}
    }
    \caption{\small 
    We mark the ``best performance'' as \textbf{\colorbox{dc1}{\textbf{green}}}. 
    ``-'' denotes the solution can exactly computes the \textsl{SV}-based data values. 
    }
    \label{tab:real_exp_res_1}
    \vspace{-1.5em}
\end{table}
     
    \fakeparagraph{Results on FEMNIST} 
    \tabref{tab:real_exp_res_1} shows the experimental results on FEMNIST \cite{arXiv18_LEAF} datasets across various numbers of FL clients and we take both MLP and CNN as the FL model.
    
    \noindent\textit{In MLP model}.
        Taking MLP as the FL model, we have the following observations:
        \textit{(i)} In scenarios with ten FL clients, our \textsl{IPSS} algorithm achieves the lowest time cost, reducing computing overhead by 99\% compared to \textsl{MC-Shapley} and performing 2.98$\times$ and 1.91$\times$ faster than \textsl{DIG-FL} and \textsl{GTG-Shapley}, respectively.
        \textit{(ii)} In terms of the relative error, \textsl{IPSS} significantly outperforms other algorithms across all numbers of clients.
        Notably, the error of \textsl{IPSS} is 38.5$\times$ and 42.5$\times$ lower than \textsl{Extended-TMC} and \textsl{GTG-Shapley} with 10 clients.
        
        \noindent\textit{In CNN model}.
        The results in CNN model exhibit similarities to that observed in MLP model.
        \textit{(i)} For the efficiency, \textsl{OR} is superior when the number of clients is 3 and 6, while \textsl{IPSS} is the fastest when there are larger number of clients.  
        \textit{(ii)} Regarding approximation error, \textsl{IPSS} consistently shows the lowest error over various clients, which is one order of magnitude smaller than other approximation algorithms. 
        \textit{(iii)} However, the relative error of \textsl{DIG-FL} is notably higher in the CNN model. 
        \rev{\textit{In summary, \textsl{IPSS} excels in efficiency with more FL clients and consistently exhibits lower error compared to baselines across various numbers of FL clients.}}

    \begin{table}[htb] 
    \belowrulesep=0.10pt
    \aboverulesep=0.10pt
    \setlength{\tabcolsep}{0.7pt}
    \resizebox{\linewidth}{!}{
        \begin{tabular}{@{}c|c|c|cccccccccc@{}}
        \toprule[1.2pt]
         & {$n$} & {Metrics} & {\textsl{Perm-Shap.}} & {\textsl{MC-Shap.}} & {\textsl{DIG-FL}} & {\textsl{Ext-TMC}} & {\textsl{Ext-GTB}} & {\textsl{CC-Shap.}} & {\textsl{GTG-Shap.}} & {\textsl{OR}} & {\textsl{$\lambda$-MR}} & {\textsl{IPSS}} \\ \midrule[1pt]
        \multirow{6}{*}{\textbf{\begin{tabular}[c]{@{}c@{}}MLP\\ \end{tabular}}} & \multirow{2}{*}{3} & Time{\small (s)} & 720 & 164 & 94 & 95 & 138 & 199 & 59 & \textbf{\colorbox{dc1}{13}} & 48 & 69 \\
         &  & Error{\small ($l_2$)} & - & - & 1.02 & 1.46 & 1.89 & 0.09 & 5.30 & 1.00 & 2.93 & \textbf{\colorbox{dc1}{0.05}} \\ \cmidrule(){2-13} 
         & \multirow{2}{*}{6} & Time {\small(s)} & 3.3$\times 10^5$ & 2820 & 252 & 220 & 306 & 530 & 271 & \textbf{\colorbox{dc1}{74}} & 347 & 146 \\
         &  & Error{\small ($l_2$)} & - & - & 1.12 & 2.30 & 2.02 & 0.18 & 3.65 & 1.00 & 3.21 & \textbf{\colorbox{dc1}{0.13}} \\ \cmidrule(){2-13} 
         & \multirow{2}{*}{10} & Time{\small (s)} & 2.1$\times 10^9$ & 28983 & 454 & 732 & 1152 & 1850 & 1428 & 1127 & 5575 & \textbf{\colorbox{dc1}{206}} \\
         &  & Error{\small ($l_2$)} & - & - & 1.23 & 2.19 & 1.97 & 0.09 & 3.95 & 0.99 & 3.83 & \textbf{\colorbox{dc1}{0.08}} \\ \midrule 
         
    
         \multirow{6}{*}{\textbf{\begin{tabular}[c]{@{}c@{}}XGB\\ \end{tabular}}} & \multirow{2}{*}{3} & Time{\small (s)} & 29.2 & 6.5 & 4.7 & 4.6 & 8.5 & 8.2 & \multirow{2}{*}{\textbackslash} & \multirow{2}{*}{\textbackslash} & \multirow{2}{*}{\textbackslash} & \textbf{\colorbox{dc1}{1.8}} \\
         &  & Error{\small ($l_2$)} & - & - & 0.95 & 1.38 & 0.45 & 0.27 & ~ & ~ & ~ & \textbf{\colorbox{dc1}{0.04}} \\ \cmidrule(){2-13} 
         & \multirow{2}{*}{6} & Time{\small (s)} & 13308 & 96 & 19 & 14 & 22 & 25 & \multirow{2}{*}{\textbackslash} & \multirow{2}{*}{\textbackslash} & \multirow{2}{*}{\textbackslash} & \textbf{\colorbox{dc1}{3}} \\
         &  & Error{\small ($l_2$)} & - & - & 0.98 & 2.16 & 1.77 & 0.13 & ~ & ~ & ~ & \textbf{\colorbox{dc1}{0.07}} \\ \cmidrule(){2-13} 
         & \multirow{2}{*}{10} & Time{\small (s)} & 1.7$\times 10^8$ & 2256 & 50 & 81 & 111 & 151 & \multirow{2}{*}{\textbackslash} & \multirow{2}{*}{\textbackslash} & \multirow{2}{*}{\textbackslash} & \textbf{\colorbox{dc1}{5}} \\
         &  & Error{\small ($l_2$)} & - & - & 0.98 & 1.41 & 1.59 & 0.13 & ~ & ~ & ~ & \textbf{\colorbox{dc1}{0.12}} \\ \bottomrule 
        \end{tabular}
        }
        \caption{
        \small
        We mark the ``best performance'' as \textbf{\colorbox{dc1}{green}}. 
        ``\textbackslash'' denotes gradient-based approximation is not applicable to the XGB model.
        }
        \label{tab:real_exp_res_2}
    \end{table}

    \fakeparagraph{Results on Adult} We also take a tabular dataset and adopt the MLP and XGB model as the FL model to compare the effectiveness and efficiency of the baselines.     \tabref{tab:real_exp_res_2} presents the experimental results over different client numbers.
     
    \noindent \textit{In MLP model}.
    The experimental results on Adult are similar to those on FEMNIST, when using MLP as the FL model.
    \textit{(i)} For time cost, \textsl{IPSS} is still the most efficient when there are 10 clients and it is 2.2$\times$ faster than \textsl{DIG}, the second most efficient algorithm.
    \textit{(ii)} For approximation error, \textsl{IPSS} exhibits the lowest error over all client numbers, achieving an average improvement of 43$\times$ over \textsl{GTG-Shap} and 34$\times$ over $\lambda$-\textsl{MR}.

    \noindent\textit{In XGB model.}
    As gradient-based algorithms (\textsl{GTG-Shapley}, \textsl{OR} and $\lambda$-\textsl{MR}) are not applicable to XGBoost, we evaluate the definition-based and sampling-based approaches for calculating the SV in this setup.
    The experimental observations are as follows.
    \textit{(i)} When varying numbers of clients, \textsl{IPSS} consistently shows its superior in efficiency.
    When there are 10 clients in FL, it is 10$\sim$30$\times$ faster than other compared algorithms.
    \textit{(ii)} Similarly, \textsl{IPSS} achieves the lowest approximation error, reducing the error by 25.7$\times$ and 16.6$\times$ compared to \textsl{Extended-TMC} and \textsl{Extended-GTB}, respectively.
    \textit{In this setup, the proposed \textsl{IPSS} performs the best in efficiency as the client number increases and achieves the highest accuracy as well.}

\rev{
    \subsection{In-depth Analysis of Compared Algorithms}
    Next, we conduct more interpretation experiments on the FL benchmark dataset FEMNIST~\cite{arXiv18_LEAF} to validate the efficiency and effectiveness of the compared approximation algorithms.    
}

\begin{figure*}[htb]
    \centering
    \setlength{\tabcolsep}{0.5pt}
    \begin{tabular}{cc}
        \belowrulesep=0.10pt
        \aboverulesep=0.10pt
        \includegraphics[width=0.48\linewidth]{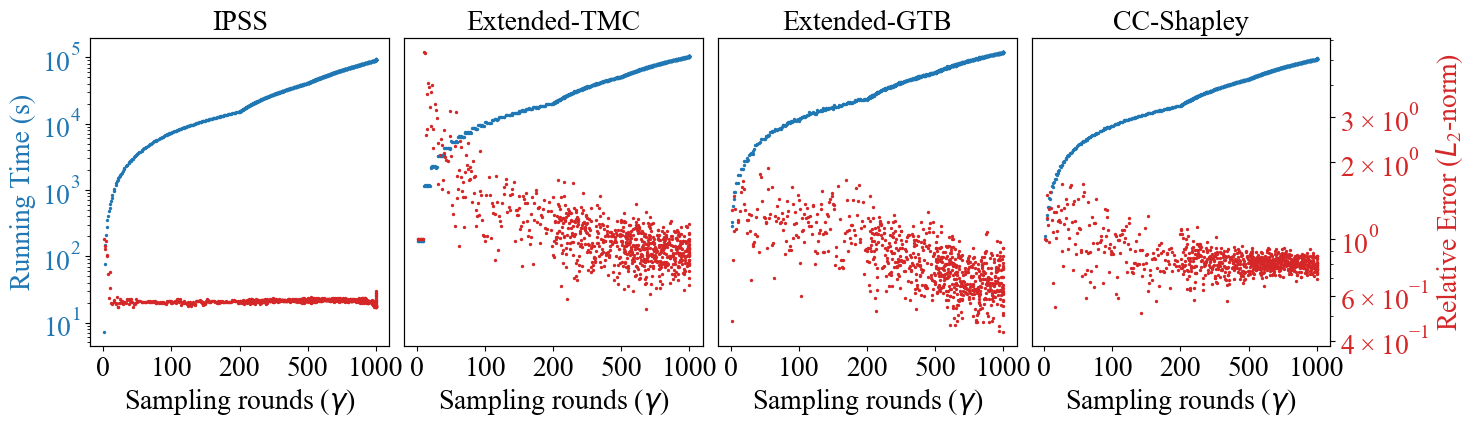} &
        \includegraphics[width=0.48\linewidth]{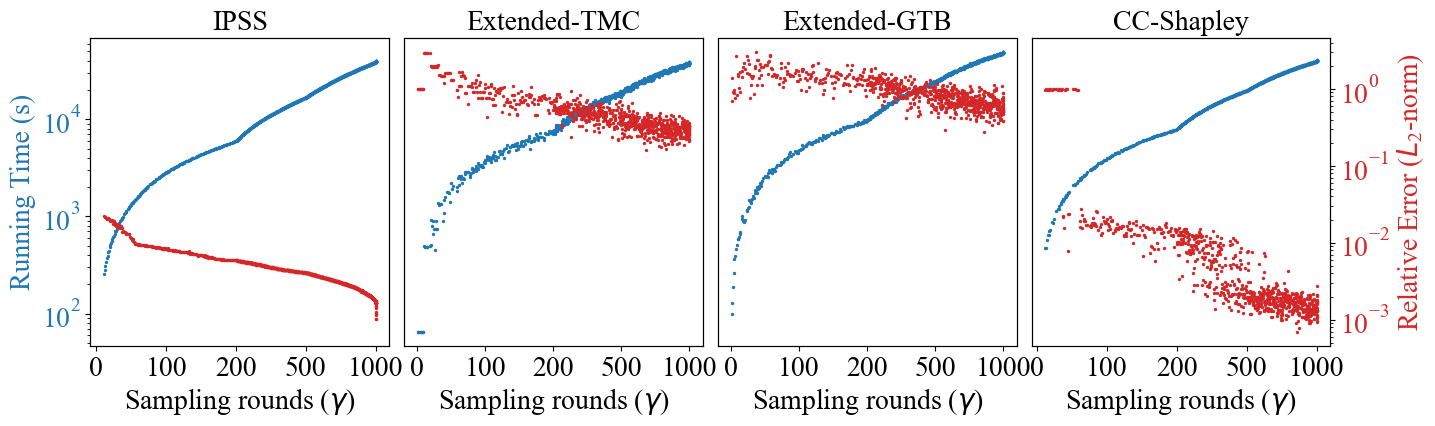}  \\
        \textit{\rev{\small (a) Results of using the MLP as FL model.}} &
        \textit{\rev{\small (b) Results of using the CNN as FL model.}} \\ 
    \end{tabular}
    \vspace{-0.5em}
    \caption{\small \rev{Results on FEMNIST when varying sampling rounds $\gamma$}.}
    \label{fig:VaryRoundFEMNIST}
    \vspace{-1.5em}
\end{figure*}

\begin{figure*}[htb]
    \centering
    \setlength{\tabcolsep}{0.05pt}
    \begin{tabular}{cccccc}
        \belowrulesep=0.10pt
        \aboverulesep=0.10pt
        \includegraphics[width=0.155\linewidth]{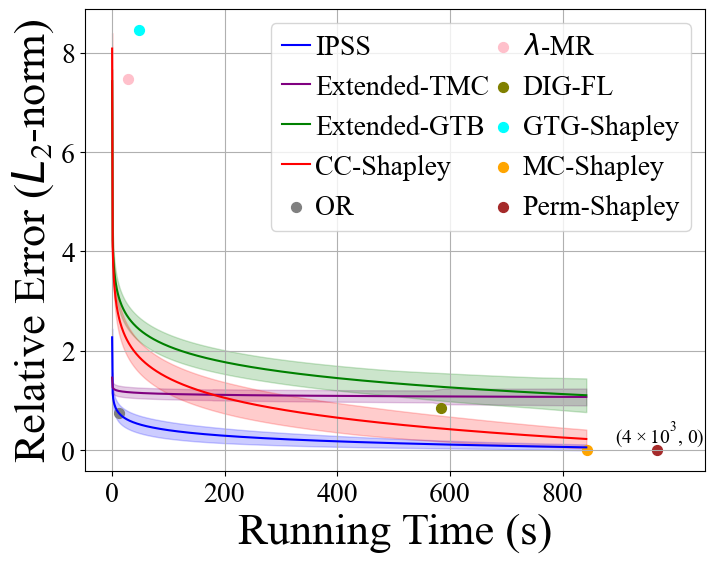} &  \includegraphics[width=0.16\linewidth]{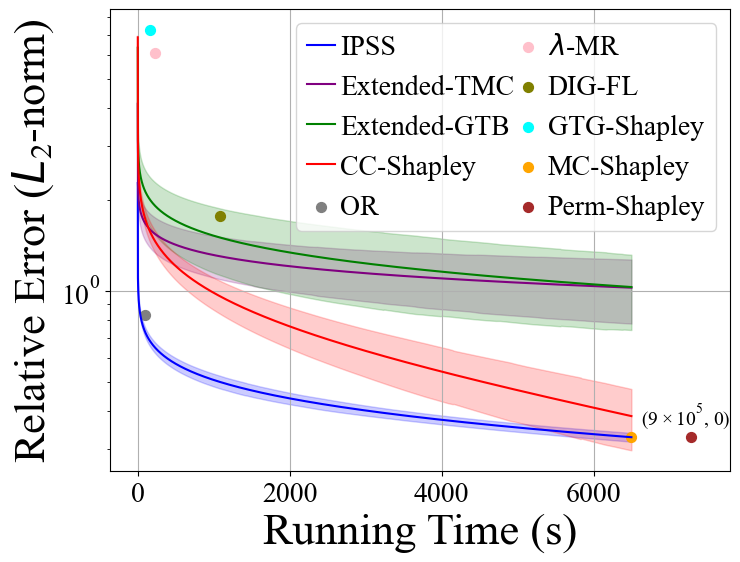} &
        \includegraphics[width=0.16\linewidth]{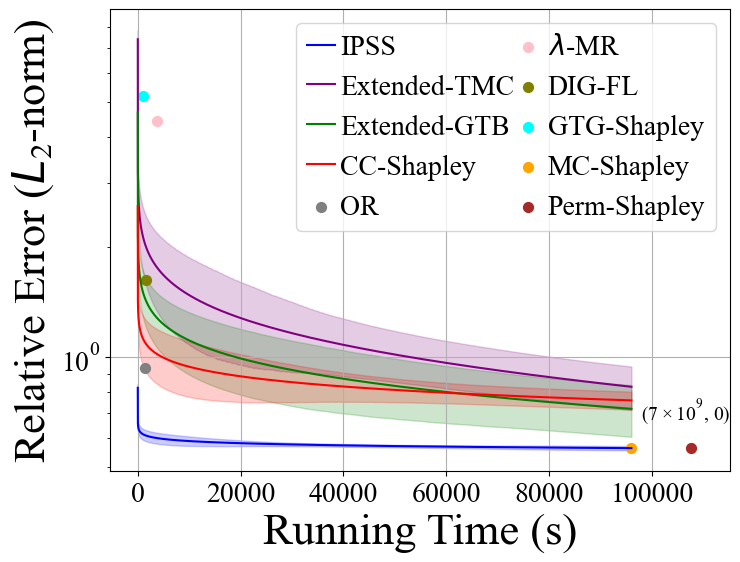} &
        \includegraphics[width=0.155\linewidth]{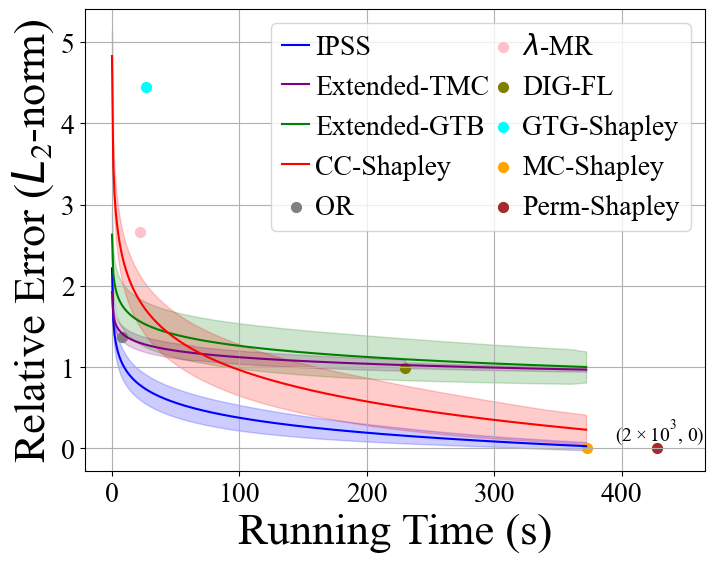}  & 
        \includegraphics[width=0.162\linewidth]{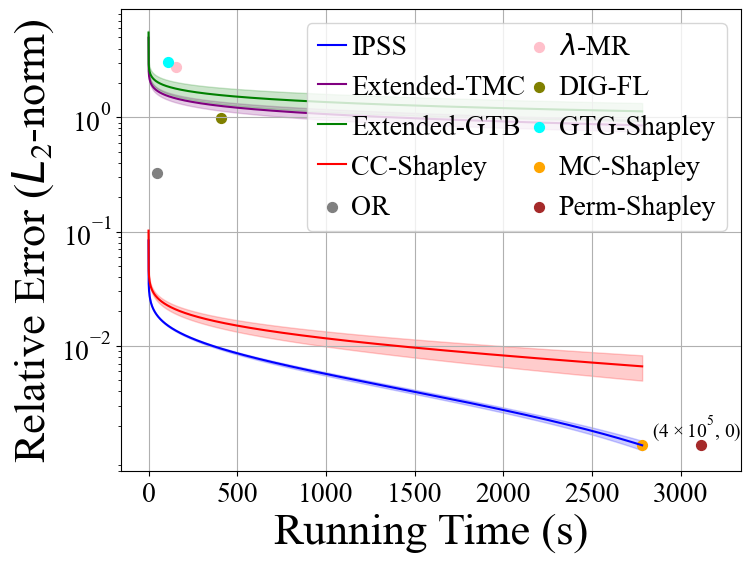}  &
         \includegraphics[width=0.162\linewidth]{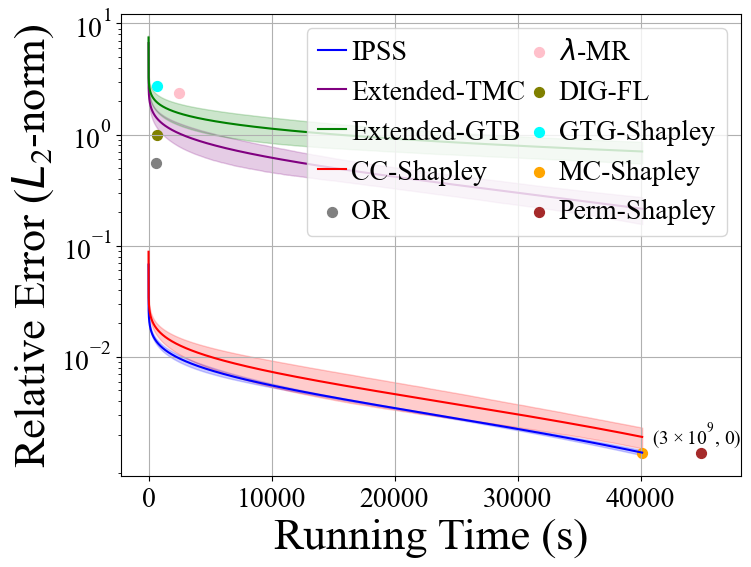} \\
        \textit{\small \rev{(a) \textit{Client\#3$\medop{+}$MLP}}} &  
        \textit{\small \rev{(b) \textit{Client\#6$\medop{+}$MLP}}} & 
        \textit{\small \rev{(c) \textit{Client\#10$\medop{+}$MLP}}} &        
        \textit{\small \rev{(d) \textit{Client\#3$\medop{+}$CNN}}} &
        \textit{\small \rev{(e) \textit{Client\#6$\medop{+}$CNN}}} &
        \textit{\small \rev{(f) \textit{Client\#10$\medop{+}$CNN}}} \\
    \end{tabular}
    \vspace{-0.5em}
    \caption{\small \rev{Pareto curves for trade-off in efficiency and effectiveness.}}
    \label{fig:real_1_pareto}
\end{figure*}

\subsubsection{\rev{Impacts of varying the sampling rounds}} 
\rev{
    As the total sampling round is crucial for sampling-based solutions (\ie \textsl{IPSS}, \textsl{Extended-TMC}, \textsl{Extended-GTB} and \textsl{CC-Shapley}), we study the impacts of varying total sampling rounds $\gamma$ with ten FL clients on FEMNIST~\cite{arXiv18_LEAF}.
    From \figref{fig:VaryRoundFEMNIST}, we have following observations.
    \textit{(i)}  
    As $\gamma$ grows, \textsl{IPSS} has more stable and lower error compared with other baselines.
    Specifically, the variance in error of \textsl{CC-Shapley} is $7.7\times$ and $50.9\times$ higher than that of \textsl{IPSS} on MLP and
    CNN, respectively.
    \textit{(ii)} \textsl{IPSS} fast achieves low approximation errors (\ie below $10^{-2}$) with $\gamma<100$, whereas \textsl{CC-Shapley} reaches the same error level stably only when $\gamma>200$.
    In summary, \textsl{IPSS} achieves lower error more quickly and stably than compared algorithms.
}

\subsubsection{\rev{Pareto curves for time-error trade-off}}
\rev{
    We run the sampling-based algorithms 100 times with each $\gamma$ and plot the \textit{Pareto curves} to show the trade-off between efficiency and effectiveness.
    The experimental results using FEMNIST~\cite{arXiv18_LEAF} across three, six, and ten FL clients are shown in \figref{fig:real_1_pareto} (a)$\sim$(f). 
    We have the following observations:
    \textit{(i)} \textsl{IPSS} achieves \textit{Pareto optimality} on FEMNIST across various numbers of FL clients.
    \textit{(ii)} Though \textsl{OR} runs fast in above end-to-end experiments with three to six clients (in \tabref{tab:real_exp_res_1}), \textsl{IPSS} can achieve comparable performance to \textsl{OR} with a suitable sampling round $\gamma$.
}

    \begin{figure}[h]
        \centering
        \setlength{\tabcolsep}{0.05pt}
        \begin{tabular}{cc}
            \includegraphics[width=0.45\linewidth]{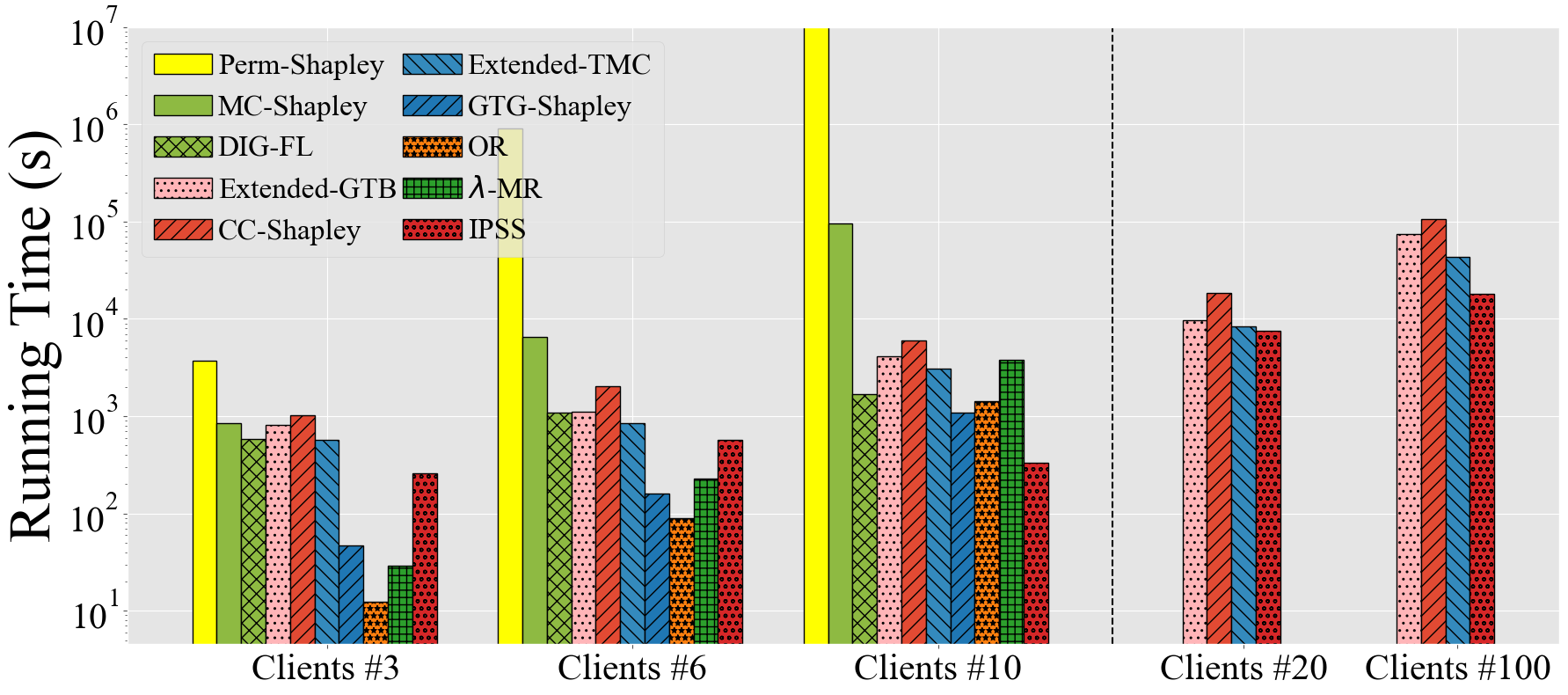} & 
            \includegraphics[width=0.45\linewidth]{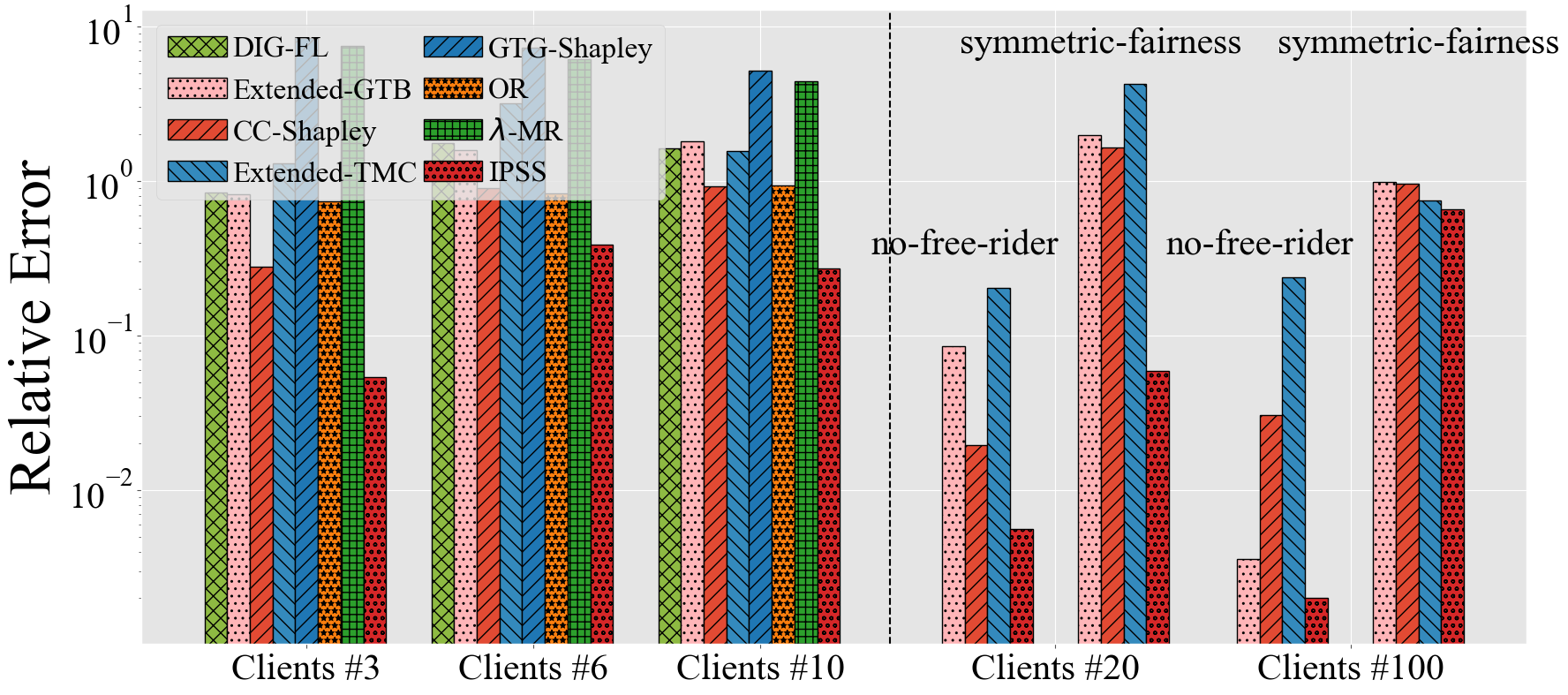}  \\
            \textit{{\small (a) Time cost.}} & \textit{{\small (b) Approximation error.}} 
        \end{tabular}
        \vspace{-0.5em}
        \caption{\small \rev{Varying client number on FEMNIST using MLP model.}}
        \label{fig:VaryNumber}
    \end{figure}

\subsubsection{\rev{Scalability test for larger FL clients}}  
\rev{
        We conduct experiments with up to 100 FL clients, a large-scale scenario for cross-silo FL~\cite{FL_survey_long, CURS24_FL_Survey, yang2019federated}, where more than $10^{30}$ dataset combinations must be assessed by SV definition, making it infeasible to compute the ground-truth within limited time.
        Thus, we set 5\% of FL clients with empty datasets and 5\% of FL clients having same datasets as others and take the extent to which algorithms satisfy required properties (\ie \textit{no-free-rider} and \textit{symmetric-fairness}) as proxies for approximation error.
        We set the sampling round $\gamma$ for sampling-based algorithms to $n \log n$.
        As in \figref{fig:VaryNumber}: \textit{(i)} For running time, \textsl{IPSS} outperforms \textsl{Extended-TMC}, \textsl{Extended-GTB} and \textsl{CC-Shapley} with both $20$ and $100$ FL clients.
        \textit{(ii)} As client number increases from 20 to 100, the running time of our \textsl{IPSS} increases only by $2.4\times$.
        \textit{(iii)} For error based on \textit{no-free-riders} and \textit{symmetric fairness}, \textsl{IPSS} achieves the lowest error among compared algorithms.
}

\subsubsection{\rev{Comparing variance of \textsl{\small MC-SV} and \textsl{\small CC-SV}}} 
\rev{
        We run \algref{alg:mc4sv} 100 times using \textsl{MC-SV} and \textsl{CC-SV}, respectively, to calculate their variance.
        The experimental results are shown in \figref{fig:variance}.
        \textit{(i)} As $\gamma$ increases, the variance of \textsl{MC-SV} and \textsl{CC-SV} initially rises and then decreases as almost all possible combinations are sampled, introducing nearly exact data values with close to zero variance.
        \textit{(ii)} Using both MLP and CNN models, \textsl{MC-SV} exhibits lower variance than \textsl{CC-SV}, with FL clients number from three to ten, consistent with the theoretical analysis of \thmref{thm:CC_Variance} in \secref{sec:framework_comp_scheme} and justifying the selection of \textsl{MC-SV} for our stratified sampling based approximation.
}

\begin{figure}[htb]
    \centering
     \vspace{-0.5em}
    \setlength{\tabcolsep}{0.05pt}
    \begin{tabular}{cccccc}
        \belowrulesep=0.10pt
        \aboverulesep=0.10pt
        \includegraphics[width=0.9\linewidth]{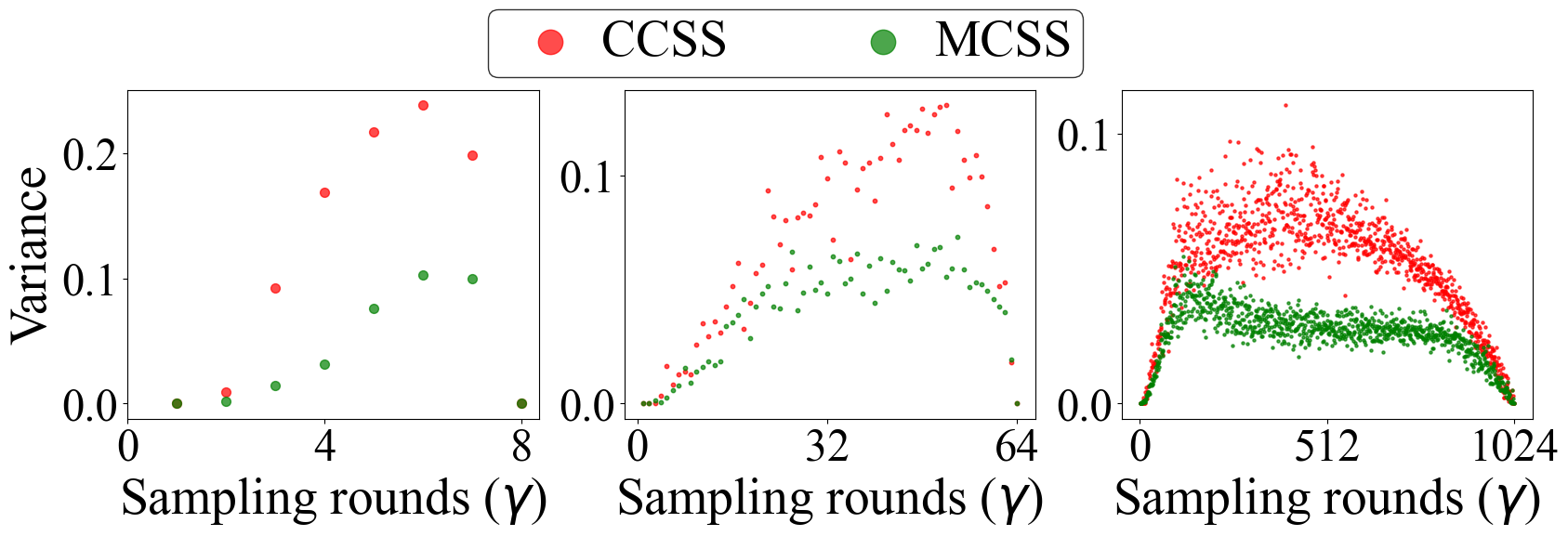} \\
        \vspace{-0.2em}
        \small \rev{(a) Client \#3$\sim$\#10 using MLP model} \\
        
        \includegraphics[width=0.9\linewidth]{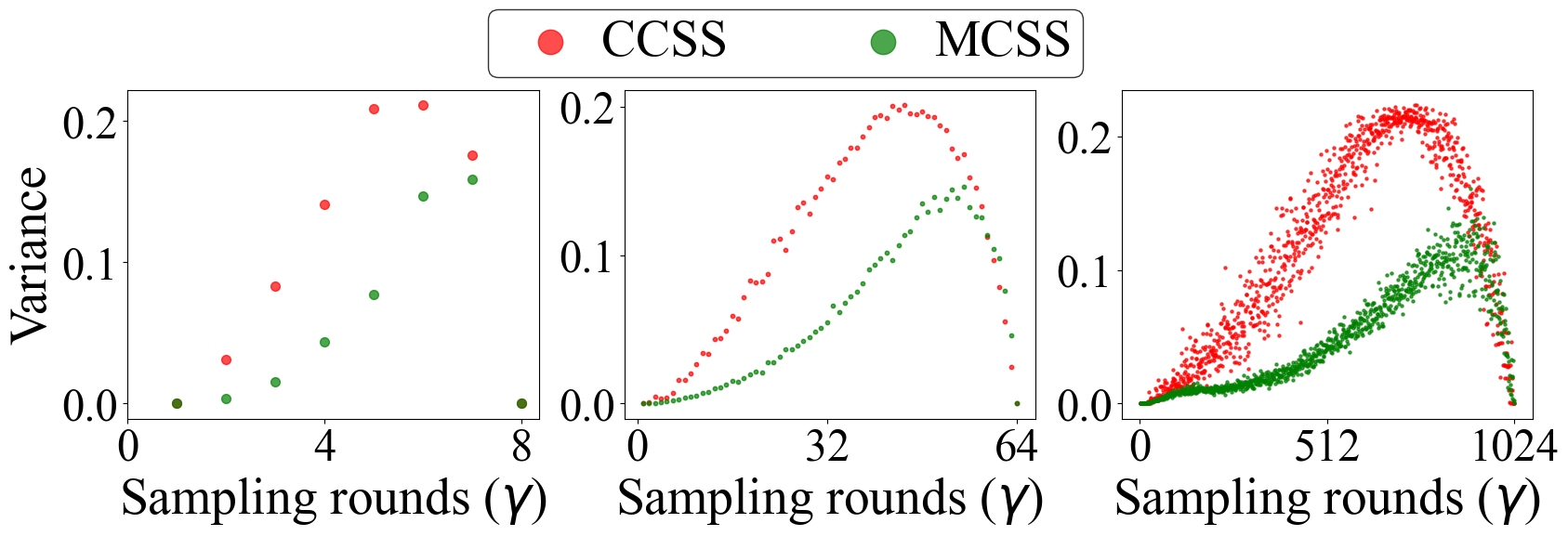} \\
        \small \rev{(b) Client \#3$\sim$\#10 using CNN model}




        %
        
    \end{tabular}
    \vspace{-0.5em}
    \caption{\small \rev{Analysis of variance for \textsl{MC-SV} and \textit{CC-SV}.}}
    \label{fig:variance}
     \vspace{-1.5em}
\end{figure}

\subsection{Summary of Experimental Results}

    \noindent Our major experimental findings are summarized as follows:
    
    \fakeparagraph{Efficiency} 
    Among approximation algorithms evaluated, \textsl{Extended-GTB} and \textsl{CC-Shapley} incur the highest time cost in the most experimental setup.
    Our \textsl{IPSS} algorithm emerges as the most efficient one in most setups, especially within larger number of clients. 
    The time cost of $\lambda$-\textsl{MR} increases exponentially with number of FL clients, limiting its scalability.
    
    \fakeparagraph{Effectiveness} 
    Leveraging insights in \secref{sec:improve_KCP}, the proposed\textsl{IPSS} consistently achieves the lowest estimation errors across nearly all setups.
    Though \textsl{DIG-FL} and \textsl{OR} are more efficient than most compared baselines, they also exhibit a higher approximation error in most experimental setups.


\section{Related Work}\label{sec:related}
    Our work is mainly related to two lines of research topics: the \textit{federated learning} and the \textit{Shapley value based data valuation}.
    We review the representative work in the following.
    
    \subsection{Federated Learning} 
    In recent years, data regulations (\eg GDPR~\cite{gdpr} and CCPA~\cite{ccpa}) have imposed strict requirements on data privacy, posing challenges for privacy-preserving data analysis in both academia and industry.
    \textit{Federated learning (FL)}, enabling multiple data providers to collaboratively train models without sharing their raw data, has emerged as a new paradigm to tackle the data privacy issues.
    Based on the type of clients (\aka data providers), FL can be divided into two settings: \textit{cross-device} and \textit{cross-silo}.
    We introduce each setting below.

    \subsubsection{Cross-Device FL} In this setting, the typical FL clients are a large number mobile or IoT devices\cite{FL_survey_long}, where both the computation and the communication is often the bottleneck.
    Therefore, how to reduce the communication cost is a crucial issue in cross-silo FL.
    In the seminal work~\cite{McMahanMRHA17}, McMahan \etal propose the most widely adopted FL algorithm, \text{FedAVG}, to solve the well-known non-IID problem and reduce communication costs by aggregating model parameters rather than gradients.
    Then, a series of subsequent works have proposed FL algorithms, such as FedProx~\cite{MLSys20}, Scaffold~\cite{ICML20}, \etc
    In addition to the non-IID issue, how to tackle device heterogeneity has recently received increasing attention in cross-silo FL as well~\cite{VLDB23_CDFL, KDD23_CDFL, ICDE22_CDFL, ICDE23_CDFL}.
    For example, authors in~\cite{VLDB23_CDFL, KDD23_CDFL} propose an open-source platforms for real-world cross-device FL, called FS-REAL, which supports advanced FL features such as communication optimization and asynchronous concurrency.
    Liu \etal \cite{KDD22_CDFL} propose the InclusiveFL, an FL framework that adjusts the size of models before assigning them to clients with different computing capabilities.
    \subsubsection{Cross-Silo FL} Yang \etal~\cite{yang2019federated} enrich the concept of FL and introduce the cross-silo FL, a scenario usually involving a small number of clients, such as institutions or companies with abundant computational and communication resources.
    The non-IID issue is also the central challenge in cross-silo FL. 
    Representatively, Huang \etal~\cite{AAAI19_CSFL} adopt the neural networks as the FL model and propose the FedAMP algorithm to solve the non-IID problem and Li \etal~\cite{ICDE22_CSFL} conduct a comprehensive experimental study to compare the performance of various FL algorithms in cross-silo FL.
    Besides, tree-based models have been widely studied in cross-silo FL by prior work \cite{SIGMOD21_FL_Tree, VLDB22_FL_Tree, fu2022blindfl, ICDE22_SV_Wang, AAAI20_FL_Tree}, especially when clients hold the partitioned tabular datasets.
    In this paper, we focus on the cross-silo FL setting and adopt both the neural networks and tree-based models as the FL model in our evaluations.
   
    \subsection{Shapley Value Based Data Valuation}
    The Shapley value~\cite{Shapley1953} has been widely adopted in data valuation \cite{icml19shapley, TIST22_GTG, vldb19shapley, Survey_SV_DB, Survey_SV_ML, ICDE22_SV_Wang, SIGMOD23_SV_Zhang, VLDB23_SV} and some variants are proposed for various scenarios or requirements\cite{FLIP20_FSV, NIPS20_CosShapley, VLDB23_SVF, VLDB24_Shapley, NIPS23_DV, AISTATS23_DV}. 
    Data valuation can be divided into two categories: \textit{within a dataset} and \textit{across datasets}.

    
    \subsubsection{Data Valuation within a Dataset}
    It aims to fairly measure the importance or contribution for each sample (\ie data point).
    In 2019, Ghorbani \etal~\cite{icml19shapley} first introduce the Shapley value in data valuation and define the \textit{Data Shapley} value to qualify the influence of a sample in a dataset.
    To reduce the computational overhead, they further propose two approximation algorithms, \textit{Truncated Monte Carlo} and \textit{Gradient Shapley}, where the former can be extended to the FL framework\cite{Bigdata19_SV}.
    Since computing SV is usually time-consuming, prior work primarily focus on how to design effective and efficient approches to SV based data valuation.
    Jia \etal~\cite{vldb19shapley} propose an exact and efficient algorithm to calculate valuation of samples for \textit{k}NN classification in $\mathcal{O}(n\log{n})$ time complexity, where $n$ is the dataset size.
    They also leverage the sparsity of SV for a singel sample in a dataset to enable efficient approximation \cite{aistats19shapley}. 
    However, the sparsity of SV is inexistent in cross-silo FL, so we only extend their another sampling-based algorithm, Group Testing Based SV, as a baseline in our paper.   
    Zhang \etal \cite{SIGMOD23_SV_Zhang} propose a novel equavilent expression of SV based on complementary contribution, upon which they design a sampling-based approximation algorithm for SV that is also applicable for data valuation.
    We compare their proposed equavilent SV expression (referred as \textit{CC-SV} in this paper) with another two commonly used SV expression and adopt the approaches in \cite{SIGMOD23_SV_Zhang} as one of our baselines as well.
    
    \subsubsection{Data Valuation Across Datasets}
    It aims to identify the contribution of each dataset (\ie the contribution estimation), which is consistent with the data valuation in FL \cite{ICDE22_SV_Wang, VLDB24_Shapley, ICDE24_Shapley}, where how to design an efficient and effective approximation algorithms is the primary issue.
    The typical algorithms in this setup is the gradient construction based approximation, which utilizes gradients in FL to build federated models under various dataset combinations and avoids the need to train extra FL models for data valuation.
    Song \etal~\cite{Bigdata19_SV} first propose the gradient construction based approaches to measures the contribution of datasets in FL and they also propose another algorithm, $\lambda$-MR, to further reduce time cost by reconstructing FL model based on gradients in each training round~\cite{FLIP20_Wei}.
    In \cite{TIST22_GTG}, the authors propose an efficient algorithm called \textit{GTG-Shapley} to approximate the SV by combining the on the gradient construction with Monte Carlo sampling.
    We compare our proposed algorithm against OR, $\lambda$-MR and GTG-Shapley in experimental evaluations.
    Besides, Wang \etal~\cite{ICDE22_SV_Wang} propose an efficient data valuation approaches to measure the contributions of clients in FL, which only needs linear number of evaluations under certain assumption and we take their approach as the baseline as well.
    Zheng \etal \cite{VLDB23_SV} study the secure data valuation for cross-silo FL and exploring ways to enhance the efficiency using an efficient two-server protocol.
    As security is outside the scope of this work, we do not take their approach as one of the baselines in our experiments.

\section{Conclusion} \label{sec:conclusion}
In this paper, we investigate the Shapley value based data valuation in FL and introduce an efficient and effective sampling-based approximation algorithm, \textsl{IPSS}.
Specifically, we first propose a unified stratified sampling-based approximation framework that seamlessly integrates both \textsl{\small MC-SV}-based and \textsl{\small CC-SV}-based computation schemes.
We also identify a crucial phenomenon called {key combinations}, where only limited dataset combinations highly impact final data value results in FL.
Building upon our new findings, we propose a practical approximation algorithm, \textsl{IPSS}, which strategically selects high-impact dataset combinations rather than taking all possible dataset combinations in FL, thus significantly improving the efficiency with high approximation accuracy.
Finally, we conduct extensive evaluations on real and synthetic datasets to validate that the proposed \textsl{IPSS} outperforms the representative baselines in both efficiency and effectiveness. 



\bibliographystyle{IEEEtran}
\bibliography{arxiv}

\begin{thebibliography}{10}
\providecommand{\url}[1]{#1}
\csname url@samestyle\endcsname
\providecommand{\newblock}{\relax}
\providecommand{\bibinfo}[2]{#2}
\providecommand{\BIBentrySTDinterwordspacing}{\spaceskip=0pt\relax}
\providecommand{\BIBentryALTinterwordstretchfactor}{4}
\providecommand{\BIBentryALTinterwordspacing}{\spaceskip=\fontdimen2\font plus
\BIBentryALTinterwordstretchfactor\fontdimen3\font minus \fontdimen4\font\relax}
\providecommand{\BIBforeignlanguage}[2]{{%
\expandafter\ifx\csname l@#1\endcsname\relax
\typeout{** WARNING: IEEEtran.bst: No hyphenation pattern has been}%
\typeout{** loaded for the language `#1'. Using the pattern for}%
\typeout{** the default language instead.}%
\else
\language=\csname l@#1\endcsname
\fi
#2}}
\providecommand{\BIBdecl}{\relax}
\BIBdecl

\bibitem{FL_survey_long}
P.~Kairouz, H.~B. McMahan, B.~Avent \emph{et~al.}, ``Advances and open problems in federated learning,'' \emph{Found. Trends Mach. Learn.}, vol.~14, no. 1--2, pp. 1--210, 2021.

\bibitem{yang2019federated}
Q.~Yang, Y.~Liu, T.~Chen, and Y.~Tong, ``Federated machine learning: Concept and applications,'' \emph{{ACM} Trans. Intell. Syst. Technol.}, vol.~10, no.~2, p.~12, 2019.

\bibitem{CURS24_FL_Survey}
M.~Ye, X.~Fang, B.~Du \emph{et~al.}, ``Heterogeneous federated learning: State-of-the-art and research challenges,'' \emph{{ACM} Comput. Surv.}, vol.~56, no.~3, pp. 79:1--79:44, 2024.

\bibitem{TKDE23_FL_Survey}
Q.~Li, Z.~Wen, Z.~Wu, S.~Hu \emph{et~al.}, ``A survey on federated learning systems: Vision, hype and reality for data privacy and protection,'' \emph{{IEEE} Trans. Knowl. Data Eng.}, vol.~35, no.~4, pp. 3347--3366, 2023.

\bibitem{McMahanMRHA17}
B.~McMahan, E.~Moore, D.~Ramage, S.~Hampson \emph{et~al.}, ``Communication-efficient learning of deep networks from decentralized data,'' in \emph{{AISTATS}}, 2017, pp. 1273--1282.

\bibitem{Bigdata19_SV}
T.~Song, Y.~Tong, and S.~Wei, ``Profit allocation for federated learning,'' in \emph{BigData}.\hskip 1em plus 0.5em minus 0.4em\relax {IEEE}, 2019, pp. 2577--2586.

\bibitem{ICDE22_SV_Wang}
J.~Wang, L.~Zhang, A.~Li, X.~You \emph{et~al.}, ``Efficient participant contribution evaluation for horizontal and vertical federated learning,'' in \emph{ICDE}.\hskip 1em plus 0.5em minus 0.4em\relax {IEEE}, 2022, pp. 911--923.

\bibitem{VLDB23_SVF}
H.~Xia, J.~Liu, J.~Lou, Z.~Qin \emph{et~al.}, ``Equitable data valuation meets the right to be forgotten in model markets,'' \emph{VLDB}, vol.~16, no.~11, pp. 3349--3362, 2023.

\bibitem{FLIP20_Wei}
S.~Wei, Y.~Tong, Z.~Zhou, and T.~Song, ``Efficient and fair data valuation for horizontal federated learning,'' \emph{Federated Learning: Privacy and Incentive}, pp. 139--152, 2020.

\bibitem{FLIP20_FSV}
T.~Wang, J.~Rausch, C.~Zhang, R.~Jia \emph{et~al.}, ``A principled approach to data valuation for federated learning,'' in \emph{Federated Learning Privacy and Incentive}.\hskip 1em plus 0.5em minus 0.4em\relax Springer, 2020, vol. 12500, pp. 153--167.

\bibitem{vldb19shapley}
R.~Jia, D.~Dao, B.~Wang, F.~Hubis \emph{et~al.}, ``Efficient task-specific data valuation for nearest neighbor algorithms,'' \emph{VLDB}, vol.~12, no.~11, pp. 1610--1623, 2019.

\bibitem{TIST22_GTG}
Z.~Liu, Y.~Chen, H.~Yu, Y.~Liu, and L.~Cui, ``Gtg-shapley: Efficient and accurate participant contribution evaluation in federated learning,'' \emph{{ACM} Trans. Intell. Syst. Technol.}, vol.~13, no.~4, pp. 60:1--60:21, 2022.

\bibitem{VLDB24_Shapley}
Y.~Chen, K.~Li, G.~Li, and Y.~Wang., ``Contributions estimation in federated learning: A comprehensive experimental evaluation,'' \emph{VLDB}, vol.~17, no.~8, pp. 2077--2090, 2024.

\bibitem{myerson2013game}
R.~Myerson, \emph{Game theory}.\hskip 1em plus 0.5em minus 0.4em\relax Harvard University Press, 2013.

\bibitem{Survey_SV_DB}
L.~E. Bertossi, B.~Kimelfeld, E.~Livshits \emph{et~al.}, ``The shapley value in database management,'' \emph{{SIGMOD} Rec.}, vol.~52, no.~2, pp. 6--17, 2023.

\bibitem{Survey_SV_ML}
B.~Rozemberczki, L.~Watson, P.~Bayer \emph{et~al.}, ``The shapley value in machine learning,'' in \emph{IJCAI}.\hskip 1em plus 0.5em minus 0.4em\relax ijcai.org, 2022, pp. 5572--5579.

\bibitem{icml19shapley}
A.~Ghorbani and J.~Zou, ``Data shapley: Equitable valuation of data for machine learning,'' in \emph{ICML}, vol.~97, 2019, pp. 2242--2251.

\bibitem{aistats19shapley}
R.~Jia, D.~Dao, B.~Wang, F.~Hubis \emph{et~al.}, ``Towards efficient data valuation based on the shapley value,'' in \emph{AISTATS}, vol.~89, 2019, pp. 1167--1176.

\bibitem{SIGMOD23_SV_Zhang}
J.~Zhang, Q.~Sun, J.~Liu, L.~Xiong \emph{et~al.}, ``Efficient sampling approaches to shapley value approximation,'' \emph{SIGMOD}, vol.~1, no.~1, pp. 48:1--48:24, 2023.

\bibitem{MOR94_SV}
X.~Deng and C.~Papadimitriou, ``On the complexity of cooperative solution concepts,'' \emph{Math. Oper. Res.}, vol.~19, no.~2, pp. 257--266, 1994.

\bibitem{AISTATS23_DV}
J.~Wang and R.~Jia, ``Data banzhaf: {A} robust data valuation framework for machine learning,'' in \emph{AISTATS}, vol. 206, pp. 6388--6421.

\bibitem{Book15_LR_Model}
C.~Rohilla, ``The truth about linear regression,'' \emph{Online Manuscript}, 2015.

\bibitem{arXiv18_LEAF}
S.~Caldas, S.~M.~D. Karthik, P.~Wu \emph{et~al.}, ``Leaf: A benchmark for federated settings,'' \emph{arXiv}, vol. abs/1812.01097, 2018.

\bibitem{AAAI21_MSG}
K.~Donahue and J.~M. Kleinberg, ``Model-sharing games: Analyzing federated learning under voluntary participation,'' in \emph{AAAI}.\hskip 1em plus 0.5em minus 0.4em\relax {AAAI} Press, 2021, pp. 5303--5311.

\bibitem{mnist}
Y.~LeCun, C.~Cortes, and C.~J. Burges, ``The {MNIST} {D}atabase,'' in \emph{http://yann.lecun.com/exdb/mnist/}.

\bibitem{adult_2}
B.~Barry and R.~Kohavi, ``Adult,'' UCI Machine Learning Repository.

\bibitem{tensorflow_f}
``Tensorflow federated,'' in \emph{www.tensorflow.org/federated/federated\_learning}.

\bibitem{SIGMOD21_FL_Tree}
F.~Fu, Y.~Shao, L.~Yu \emph{et~al.}, ``Vf\({}^{\mbox{2}}\)boost: Very fast vertical federated gradient boosting for cross-enterprise learning,'' in \emph{SIGMOD}.\hskip 1em plus 0.5em minus 0.4em\relax {ACM}, 2021, pp. 563--576.

\bibitem{fu2022blindfl}
F.~Fu, H.~Xue \emph{et~al.}, ``Blindfl: Vertical federated machine learning without peeking into your data,'' in \emph{SIGMOD}.\hskip 1em plus 0.5em minus 0.4em\relax {ACM}, 2022, pp. 1316--1330.

\bibitem{tensorflow}
M.~Abadi, P.~Barham, J.~Chen, Z.~Chen \emph{et~al.}, ``Tensorflow: {A} system for large-scale machine learning,'' in \emph{OSDI}.\hskip 1em plus 0.5em minus 0.4em\relax {USENIX}, 2016, pp. 265--283.

\bibitem{gdpr}
{European Parliament} and {The Council of the European Union}, ``The general data protection regulation ({GDPR}),'' in \emph{https://eugdpr.org}, 2016.

\bibitem{ccpa}
L.~de~la Torre, ``A guide to the california consumer privacy act of 2018,'' in \emph{Available at SSRN 3275571}, 2018.

\bibitem{MLSys20}
T.~Li, A.~K. Sahu, M.~Zaheer \emph{et~al.}, ``Federated optimization in heterogeneous networks,'' in \emph{MLSys}.\hskip 1em plus 0.5em minus 0.4em\relax mlsys.org, 2020.

\bibitem{ICML20}
S.~P. Karimireddy, S.~Kale, M.~Mohri \emph{et~al.}, ``{SCAFFOLD:} stochastic controlled averaging for federated learning,'' in \emph{ICML}, vol. 119.\hskip 1em plus 0.5em minus 0.4em\relax {PMLR}, 2020, pp. 5132--5143.

\bibitem{VLDB23_CDFL}
D.~Gao, D.~Chen, Z.~Li, Y.~Xie, X.~Pan \emph{et~al.}, ``Fs-real: {A} real-world cross-device federated learning platform,'' \emph{VLDB}, vol.~16, no.~12, pp. 4046--4049, 2023.

\bibitem{KDD23_CDFL}
D.~Chen, D.~Gao, Y.~Xie \emph{et~al.}, ``Fs-real: Towards real-world cross-device federated learning,'' in \emph{KDD}.\hskip 1em plus 0.5em minus 0.4em\relax {ACM}, 2023, pp. 3829--3841.

\bibitem{ICDE22_CDFL}
Z.~Jiang, Y.~Xu, H.~Xu \emph{et~al.}, ``Fedmp: Federated learning through adaptive model pruning in heterogeneous edge computing,'' in \emph{ICDE}.\hskip 1em plus 0.5em minus 0.4em\relax {IEEE}, 2022, pp. 767--779.

\bibitem{ICDE23_CDFL}
M.~Chen, Y.~Xu, H.~Xu, and L.~Huang, ``Enhancing decentralized federated learning for non-iid data on heterogeneous devices,'' in \emph{ICDE}.\hskip 1em plus 0.5em minus 0.4em\relax {IEEE}, 2023, pp. 2289--2302.

\bibitem{KDD22_CDFL}
R.~Liu, F.~Wu, C.~Wu, Y.~Wang \emph{et~al.}, ``No one left behind: Inclusive federated learning over heterogeneous devices,'' in \emph{KDD}.\hskip 1em plus 0.5em minus 0.4em\relax {ACM}, 2022, pp. 3398--3406.

\bibitem{AAAI19_CSFL}
Y.~Huang, L.~Chu, Z.~Zhou \emph{et~al.}, ``Personalized cross-silo federated learning on non-iid data,'' in \emph{AAAI}.\hskip 1em plus 0.5em minus 0.4em\relax {AAAI} Press, 2021, pp. 7865--7873.

\bibitem{ICDE22_CSFL}
Q.~Li, Y.~Diao, Q.~Chen, and B.~He, ``Federated learning on non-iid data silos: An experimental study,'' in \emph{ICDE}.\hskip 1em plus 0.5em minus 0.4em\relax {IEEE}, 2022, pp. 965--978.

\bibitem{VLDB22_FL_Tree}
Y.~Wu, S.~Cai, X.~Xiao, G.~Chen \emph{et~al.}, ``Privacy preserving vertical federated learning for tree-based models,'' \emph{VLDB}, vol.~13, no.~11, pp. 2090--2103, 2020.

\bibitem{AAAI20_FL_Tree}
Q.~Li, Z.~Wen, and B.~He, ``Practical federated gradient boosting decision trees,'' in \emph{AAAI}.\hskip 1em plus 0.5em minus 0.4em\relax {AAAI} Press, 2020, pp. 4642--4649.

\bibitem{Shapley1953}
L.~Shapley \emph{et~al.}, ``A value for $n$-person games,'' \emph{Annals of Mathematical Studies}, vol.~28, pp. 307--317, 1953.

\bibitem{VLDB23_SV}
S.~Zheng, Y.~Cao, and M.~Yoshikawa, ``Secure shapley value for cross-silo federated learning,'' \emph{VLDB}, vol.~16, no.~7, pp. 1657--1670, 2023.

\bibitem{NIPS20_CosShapley}
X.~Xu, L.~Lyu, X.~Ma \emph{et~al.}, ``Gradient driven rewards to guarantee fairness in collaborative machine learning,'' in \emph{NeurIPS}, 2021, pp. 16\,104--16\,117.

\bibitem{NIPS23_DV}
X.~Xu, Z.~Wu, C.~S. Foo, and B.~K.~H. Low, ``Validation free and replication robust volume-based data valuation,'' in \emph{NeurIPS}, 2021, pp. 10\,837--10\,848.

\bibitem{ICDE24_Shapley}
Y.~Wang, K.~Li, Y.~Luo, G.~Li \emph{et~al.}, ``Fast, robust and interpretable participant contribution estimation for federated learning,'' in \emph{ICDE}.\hskip 1em plus 0.5em minus 0.4em\relax {IEEE}, 2024.

\end{thebibliography}



\end{document}